\documentclass{article}

% if you need to pass options to natbib, use, e.g.:
%     \PassOptionsToPackage{numbers, compress}{natbib}
% before loading neurips_2021

% ready for submission
\usepackage[final]{neurips_2021}
\usepackage{amsmath,amsfonts,amsthm}
\usepackage{subfig}
\makeatletter\@input{xx.tex}\makeatother
\makeatother

% to compile a preprint version, e.g., for submission to arXiv, add add the
% [preprint] option:
%     \usepackage[preprint]{neurips_2021}

% to compile a camera-ready version, add the [final] option, e.g.:
%     \usepackage[final]{neurips_2021}

% to avoid loading the natbib package, add option nonatbib:
%    \usepackage[nonatbib]{neurips_2021}

\usepackage[utf8]{inputenc} % allow utf-8 input
\usepackage[T1]{fontenc}    % use 8-bit T1 fonts
\usepackage{hyperref}       % hyperlinks
\usepackage{url}            % simple URL typesetting
\usepackage{booktabs}       % professional-quality tables
\usepackage{amsfonts}       % blackboard math symbols
\usepackage{nicefrac}       % compact symbols for 1/2, etc.
\usepackage{microtype}      % microtypography
\usepackage{xcolor}   % colors
\usepackage{graphicx}

\title{Consistent Non-Parametric Methods for Maximizing Robustness}

% The \author macro works with any number of authors. There are two commands
% used to separate the names and addresses of multiple authors: \And and \AND.
%
% Using \And between authors leaves it to LaTeX to determine where to break the
% lines. Using \AND forces a line break at that point. So, if LaTeX puts 3 of 4
% authors names on the first line, and the last on the second line, try using
% \AND instead of \And before the third author name.

\author{%
  Robi Bhattacharjee \\
  University of California San Diego\\
  \texttt{rcbhatta@eng.ucsd.edu} \\
  % examples of more authors
   \And
   Kamalika Chaudhuri \\
   University of California San Diego \\
   \texttt{kamalika@eng.ucsd.edu} \\
  % \AND
  % Coauthor \\
  % Affiliation \\
  % Address \\
  % \texttt{email} \\
  % \And
  % Coauthor \\
  % Affiliation \\
  % Address \\
  % \texttt{email} \\
  % \And
  % Coauthor \\
  % Affiliation \\
  % Address \\
  % \texttt{email} \\
}
\newtheorem{thm}{Theorem}
\newtheorem{lem}[thm]{Lemma}
\def\supp{supp}
\newtheorem{cor}[thm]{Corollary}
\newtheorem{defn}[thm]{Definition}
\def\D{{\mathcal D}}
\def\U{{\mathcal U}}
\def\V{{\mathcal V}}
\def\X{\mathcal X}
\def\R{\mathbb R}
\def\Y{\{\pm 1\}}
\def\d{\rho}
\def\E{\mathbb{E}}
\def\N{\mathbb{N}}
\def\g{g}

\def\nat{g_{neighbor}}
\def\bad{\D_{1/2}^{-}}
\def\natural{neighborhood preserving}
\def\Natural{Neighborhood preserving}
\def\ncons{neighborhood}
\def\Ncons{Neighborhood}

\begin{document}

\maketitle

\begin{abstract}
Learning classifiers that are robust to adversarial examples has received a
great deal of recent attention. A major drawback of the standard robust
learning framework is there is an artificial robustness radius $r$
that applies to all inputs. This ignores the fact that data may be highly
heterogeneous, in which case it is plausible that robustness regions should be larger in some regions of data, and smaller in others. In this paper, we address this limitation by proposing a new limit classifier, called the neighborhood optimal classifier, that extends the Bayes optimal classifier outside its support by using the label of the closest in-support point. We then argue that this classifier maximizes the size of its robustness regions subject to the constraint of having accuracy equal to the Bayes optimal. We then present sufficient conditions under which general non-parametric methods
that can be represented as weight functions converge towards this limit,
and show that both nearest neighbors and kernel classifiers satisfy them under certain conditions. 
\end{abstract}

\section{Introduction}
Adversarially robust classification, that has been of much recent interest, is typically formulated as follows. We are given data drawn from an underlying distribution $D$, a metric $d$, as well as a pre-specified robustness radius $r$. We say that a classifier $c$ is $r$-robust at an input $x$ if it predicts the same label on a ball of radius $r$ around $x$. Our goal in robust classification is to find a classifier $c$ that maximizes astuteness, which is defined as accuracy on those examples where $c$ is also $r$-robust. 

While this formulation has inspired a great deal of recent work, both theoretical and empirical \cite{Carlini17, Liu17, Papernot17, Papernot16,Szegedy14, Hein17,Schmidt18,Wu16,Steinhardt18, Sinha18, YRSK20}, a major limitation is that enforcing a pre-specified robustness radius $r$ may lead to sub-optimal accuracy {\em{and}} robustness. To see this, consider what would be an ideally robust classifier the example in Figure~\ref{fig:intro}. For simplicity, suppose that we know the data distribution. In this case, a classifier that has an uniformly large robustness radius $r$ will misclassify some points from the blue cluster on the left, leading to lower accuracy. This is illustrated in panel (a), in which large robustness radius leads to intersecting robustness regions. On the other hand, in panel (b), the blue cluster on the right is highly separated from the red cluster, and could be accurately classified with a high margin. But this will not happen if the robustness radius is set small enough to avoid the problems posed in panel (a). Thus, enforcing a fixed robustness radius that applies to the entire dataset may lead to lower accuracy and lower robustness.

In this work, we propose an alternative formulation of robust classification that ensures that in the large sample limit, there is no robustness-accuracy trade off, and that regions of space with higher separation are classified more robustly. An extra advantage is that our formulation is achievable by existing methods. In particular, we show that two very common non-parametric algorithms -- nearest neighbors and kernel classifiers -- achieve these properties in the large sample limit. 

\begin{figure}[ht]
	\centering
	\subfloat[Large robustness radii]{\includegraphics[width=.45\textwidth]{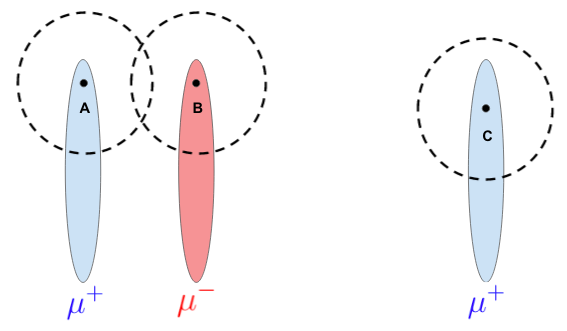}}\hfill
	\subfloat[Small robustness radii]{\includegraphics[width=.45\textwidth]{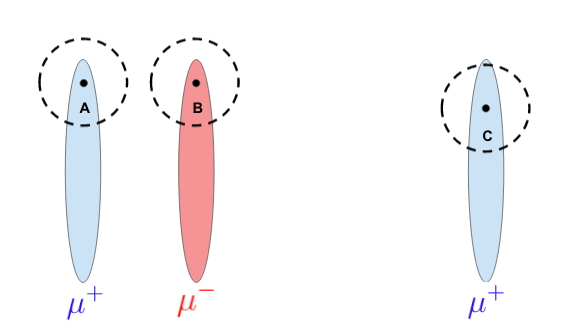}}
	\caption{A data distribution demonstrating the difficulties with fixed radius balls for robustness regions. The red represents negatively labeled points, and the blue positive. If the robustness radius is set too large (panel (a)), then the regions of A and B intersect leading to a loss of accuracy. If the radius is set too small (panel (b)), this leads to a loss of robustness at point C where in principle it should be possible to defend against a larger amount of adversarial attacks.}
	
	\label{fig:intro}
\end{figure}

Our formulation is built on the notion of a new large-sample limit. In the standard statistical learning framework, the large-sample ideal is the Bayes optimal classifier that maximizes accuracy on the data distribution, and is undefined outside. Since this is not always robust with radius $r$, prior work introduces the notion of an $r$-optimal classifier~\cite{YRWC19} that maximizes accuracy on points where it is also $r$-robust. However, this classifier also suffers from the same challenges as the example in Figure~\ref{fig:intro}. 

We depart from both by introducing a new limit that we call the \natural\emph{ }Bayes optimal classifier, described as follows. Given an input $x$ that lies in the support of the data distribution $D$, it predicts the same label as the Bayes optimal. On an $x$ outside the support, it outputs the prediction of the Bayes Optimal on the nearest neighbor of $x$ {\em{within}} the support of $D$. The first property ensures that there is no loss of accuracy -- since it always agrees with the Bayes Optimal within the data distribution. The second ensures higher robustness in regions that are better separated. Our goal is now to design classifiers that converge to the \natural\emph{ }Bayes optimal in the large sample limit; this ensures that with enough data, the classifier will have accuracy approaching that of the Bayes optimal, as well as higher robustness where possible without sacrificing accuracy. 

We next investigate how to design classifiers with this convergence property. Our starting point is classical statistical theory~\cite{Stone77} that shows that a class of methods known as weight functions will converge to a Bayes optimal in the large sample limit provided certain conditions hold; these include $k$-nearest neighbors under certain conditions on $k$ and $n$, certain kinds of decision trees as well as kernel classifiers. Through an analysis of weight functions, we next establish precise conditions under which they converge to the \natural\emph{ }Bayes optimal in the large sample limit. As expected, these are stronger than standard convergence to the Bayes optimal. In the large sample limit, we show that $k_n$-nearest neighbors converge to the \natural\emph{ }Bayes optimal provided $k_n = \omega(\log n)$, and kernel classifiers converge to the \natural\emph{ }Bayes optimal provided certain technical conditions (such as the bandwidth shrinking sufficiently slowly). By contrast, certain types of histograms do not converge to the \natural\emph{ }Bayes optimal, even if they do converge to the Bayes optimal.  We round these off with a lower bound that shows that for nearest neighbor, the condition that $k_n = \omega(\log n)$ is tight. In particular, for $k_n = O(\log n)$, there exist distributions for which $k_n$-nearest neighbors provably fails to converge towards the \natural\emph{ }Bayes optimal (despite converging towards the standard Bayes optimal).

%For $r$-robustness, the story is a little different, and in the general case -- that is, when the classes are close together or overlapping in space -- none of these classifiers may converge to the $r$-optimal. Instead\cite{Bhattacharjee20} shows that the Adversarial Pruning procedure of~\cite{YRWC19} that removes a subset of training examples to make data well-separated, followed by $k$-nearest neighbors or kernel classifiers, does converge to the $r$-optimal in the large sample limit.

In summary, the contributions of the paper are as follows. First, we propose a new large sample limit the \natural\emph{ }Bayes optimal and a new formulation for robust classification. We then establish conditions under which weight functions, a class of non-parametric methods, converge to the \natural\emph{ }Bayes optimal in the large sample limit. Using these conditions, we show that $k_n$-nearest neighbors satisfy these conditions when $k_n = \omega(\log n)$, and kernel classifiers satisfy these conditions provided the kernel function $K$ has faster than polynomial decay, and the bandwidth parameter $h_n$ decreases sufficiently slowly. 

To complement these results, we also include negative examples of non-parametric classifiers that do not converge. We provide an example where histograms do not converge to the \natural\emph{ }Bayes optimal with increasing $n$. We also show a lower bound for nearest neighbors, indicating that $k_n = \omega(\log n)$ is both necessary and sufficient for convergence towards the \natural\emph{ }Bayes optimal. 

Our results indicate that the \natural\emph{ }Bayes optimal formulation shows promise and has some interesting theoretical properties. We leave open the question of coming up with other alternative formulations that can better balance both robustness and accuracy for all kinds of data distributions, as well as are achievable algorithmically. We believe that addressing this would greatly help address the challenges in adversarial robustness.

\section{Preliminaries}

We consider binary classification over $\R^d \times \Y$, and let $\d$ denote any distance metric on $\R^d$. We let $\mu$ denote the measure over $\R^d$ corresponding to the probability distribution over which instances $x \in \R^d$ are drawn. Each instance $x$ is then labeled as $+1$ with probability $\eta(x)$  and $-1$ with probability $1 - \eta(x)$. Together, $\mu$ and $\eta$ comprise our data distribution $\D = (\mu, \eta)$ over $\R^d \times \Y$.

For comparison to the robust case, for a classifier $f: \R^d \to \{\pm 1\}$ and a distribution $\D$ over $\R^d \times \{\pm 1\}$, it will be instructive to consider its  \textbf{accuracy},  denoted $A(f, \D)$, which is defined as the fraction of examples from $\D$ that $f$ labels correctly. Accuracy is maximized by the \textbf{Bayes Optimal classifier}: which we denote by $\g$. It can be shown that for any $x \in \supp(\mu)$, $\g(x) = 1$ if $\eta(x) \geq \frac{1}{2}$, and $\g(x) = -1$ otherwise. 

Our goal is to build classifiers $\R^d \to \Y$ that are both accurate and robust to small perturbations. For any example $x$, perturbations to it are constrained to taking place in the \textbf{robustness region} of $x$, denoted $U_x$. We will let $\U = \{U_x: x \in \R^d\}$ denote the collections of all robustness regions. 

We say that a classifier $f: \R^d \to \{\pm 1\}$ is \textbf{robust} at $x$ if for all $x' \in U_x$, $f(x') = f(x)$. Combining robustness and accuracy, we say that classifier is \textbf{astute} at a point $x$ if it is both accurate and robust. Formally, we have the following definition. 

\begin{defn}
A classifier $f: \R^d \to \Y$ is said to be \textbf{astute} at $(x,y)$ with respect to robustness collection $\U$ if $f(x) = y$ and $f$ is robust at $x$ with respect to $\U$. If $\D$ is a data distribution over $\R^d \times \Y$, the \textbf{astuteness} of $f$ over $\D$ with respect to $\U$, denoted $A_\U(f, \D)$, is the fraction of examples $(x,y) \sim \D$ for which $f$ is astute at $(x,y)$ with respect to $\U$. Thus $$A_\U(f, \D) = P_{(x, y) \sim \D}[f(x') = y, \forall x' \in \U_x].$$
\end{defn}

\paragraph{Non-parametric Classifiers} We now briefly review several kinds of non-parametric classifiers that we will consider throughout this paper. We begin with \textit{weight functions}, which are a general class of non-parametric algorithms that encompass many classic algorithms, including nearest neighbors and kernel classifiers.

\textbf{Weight functions} are built from training sets, $S = \{(x_1, y_1), (x_2, y_2,), \dots, (x_n, y_n)\}$ by assigning a function $w_i^S: \R^d \to [0, 1]$ that essentially scores how relevant the training point $(x_i, y_i)$ is to the example being classified. The functions $w_i^S$ are allowed to depend on $x_1, \dots, x_n$ but must be independent of the labels $y_1, \dots, y_n$. Given these functions, a point $x$ is classified by just checking whether $\sum y_iw_i^S(x) \geq 0$ or not. If it is nonnegative, we output $+1$ and otherwise $-1$. A complete description of weight functions is included in the appendix. 

Next, we enumerate several common Non-parametric classifiers that can be construed as weight functions. Details can be found in the appendix.

\textbf{Histogram classifiers} partition the domain $\R^d$ into cells recursively by splitting cells that contain a sufficiently large number of points $x_i$. This corresponds to a weight function in which $w_i^S(x) = \frac{1}{k_x}$ if $x_i$ is in the same cell as $x$, where $k_x$ denotes the number of points in the cell containing $x$.

$k_n$-\textbf{nearest neighbors} corresponds to a weight function in which $w_i^S(x) = \frac{1}{k_n}$ if $x_i$ is one of the $k_n$ nearest neighbors of $x$, and $w_i^S(x) = 0$ otherwise.

\textbf{Kernel-Similarity classifiers} are weight functions built from a kernel function $K:\R_{\geq 0} \to \R_{\geq 0}$ and a window size $(h_n)_1^\infty$ such that $w_i^S(x) \propto  K(\d(x, x_i)/h_n)$ (we normalize by dividing by $\sum_1^n K((\d(x, x_i)/h_n))$).
\section{The \Natural\emph{ }Bayes optimal classifier}

Robust classification is typically studied by setting the robustness regions,  $\mathcal{U} = \{U_x\}_{x \in \R^d}$, to be balls of radius $r$ centered at $x$, $U_x = \{x': \d(x, x') \leq r\}$. The quantity $r$ is the robustness radius, and is typically set by the practitioner (before any training has occurred). 

This method has a limitation with regards to trade-offs between accuracy and robustness. To increase the margin or robustness, we must have a large robustness radius (thus allowing us to defend from larger adversarial attacks). However, with large robustness radii, this can come at a cost of accuracy, as it is not possible to robustly give different labels to points with intersecting robustness regions. 

For an illustration, consider Figure \ref{fig:intro}. Here we consider a data distribution $D = (\mu, \eta)$ in which the blue regions denote all points with $\eta(x) > 0.5$ (and thus should be labeled $+$), and the red regions denote all points with $\eta(x) < 0.5$ (and thus should be labeled $-$). Observe that it is not possible to be simultaneously accurate and robust at points $A, B$ while enforcing a large robustness radius, as demonstrated by the intersecting balls. While this can be resolved by using a smaller radius, this results in losing out on potential robustness at point $C$. In principal, we should be able to afford a large margin of robustness about $C$ due to its relatively far distance from the red regions. 

Motivated by this issue, we seek to find a formalism for robustness that allows us to simultaneously avoid paying for any accuracy-robustness trade-offs and \textit{adaptively} size robustness regions (thus allowing us to defend against a larger range of adversarial attacks at points that are located in more homogenous zones of the distribution support). To approach this, we will first provide an ideal limit object: a classifier that has the same accuracy as the Bayes optimal (thus meeting our first criteria) that has good robustness properties. We call this the the \natural\emph{ }Bayes optimal classifier, defined as follows.

\begin{defn}
Let $\D = (\mu, \eta)$ be a distribution over $\R^d \times \{\pm 1\}$. Then the \textbf{\natural\emph{ }Bayes optimal classifier of $\D$}, denoted $\nat$, is the classifier defined as follows. Let $\mu^+ = \{x: \eta(x) \geq \frac{1}{2}\}$ and $\mu^- = \{x: \eta(x) < \frac{1}{2}\}$. Then for any $x \in \R^d$, $\nat(x) = +1$ if $\d(x, \mu^+) \leq \d(x, \mu^-)$, and $\nat(x) = -1$ otherwise. 
\end{defn}

This classifier can be thought of as the most robust classifier that matches the accuracy of the Bayes optimal. We call it \textit{\natural} because it extends the Bayes optimal classifier into a local neighborhood about every point in the support. For an illustration, refer to Figure \ref{fig:decision_boundary2}, which plots the decision boundary of the \natural\emph{ }Bayes optimal for an example distribution. 

\begin{figure}
    \centering
        \includegraphics[scale=0.30] {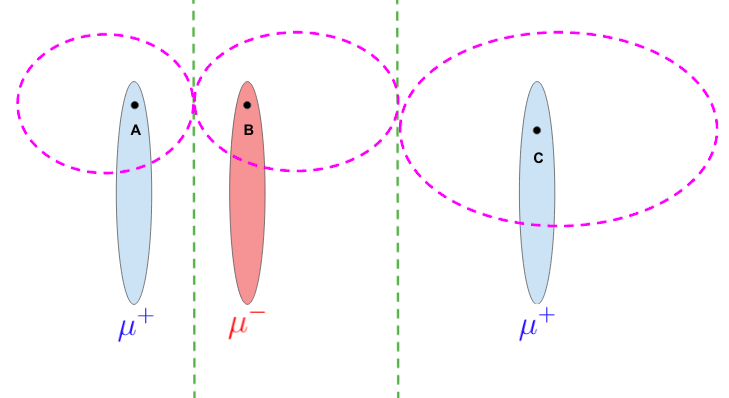}
    \caption{The decision boundary of the \natural\emph{ }Bayes optimal classifier is shown in green,  and the \natural\emph{ }robust region of $x$ is shown in pink. The former consists of points equidistant from $\mu^+, \mu^-$, and the latter consists of points equidistant from $x$, $\mu^+$.}
    \label{fig:decision_boundary2}
\end{figure}

Next, we turn our attention towards measuring its robustness, which must be done with respect to some set of robustness regions $\mathcal{U} = \{U_x\}$. While these regions $U_x$ can be nearly arbitrary, we seek regions $U_x$ such that $A_\U(g_{max}, \D) = A(g_{bayes}, \D)$ (our astuteness equals the maximum possible accuracy) and $U_x$ are ``as large as possible" (representing large robustness). To this end, we propose the following regions.

\begin{defn}\label{def:nat_region}
Let $\D = (\mu, \eta)$ be a data distribution over $\R^d \times \{\pm 1\}$. Let $\mu^+ = \{x: \eta(x) > \frac{1}{2}\}$, $\mu^- = \{x: \eta(x) < \frac{1}{2}\}$, and $\mu^{1/2} = \{x: \eta(x) = \frac{1}{2}\}$. For $x \in \mu^+$, we define the \textbf{\natural\emph{ }robustness region}, denoted $V_x$, as $$V_x = \{x': \rho(x, x') < \rho(\mu^- \cup \mu^{\frac{1}{2}}, x')\}.$$ It consists of all points that are closer to $x$ than they are to $\mu^- \cup \mu^{1/2}$ (points oppositely labeled from $x$). We can use a similar definition for $x \in \mu^{-}$. Finally, if $x \in \mu^{1/2}$, we simply set $V_x = \{x\}$. 
\end{defn}

These robustness regions take advantage of the structure of the \natural\emph{ }Bayes optimal. They can essentially be thought of as regions that maximally extend from any point $x$ in the support of $\D$ to the decision boundary of the \natural\emph{ }Bayes optimal. We include an illustration of the regions $V_x$ for an example distribution in Figure \ref{fig:decision_boundary2}. 

As a technical note, for $x \in supp(\D)$ with $\eta(x) = 0.5$, we give them a trivial robustness region. The rational for doing this is that $\eta(x) = 0.5$ is an edge case that is arbitrary to classify, and consequently enforcing a robustness region at that point is arbitrary and difficult to enforce. 

We now formalize the robustness and accuracy guarantees of the max-margin Bayes optimal classifier with the following two results.

\begin{thm}\label{thm:accuracy_margin}
(Accuracy) Let $\D$ be a data distribution. Let $\V$ denote the collection of \natural\emph{ }robustness regions, and let $g$ denote the Bayes optimal classifier. Then the \natural\emph{ }Bayes optimal classifier, $\nat$, satisfies $A_\V(\nat, \D) = A(g, \D)$, where $A(g, \D)$ denotes the accuracy of the Bayes optimal. Thus, $\nat$ maximizes accuracy.
\end{thm}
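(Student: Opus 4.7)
The plan is to prove the two inequalities $A_\V(\nat, \D) \le A(g, \D)$ and $A_\V(\nat, \D) \ge A(g, \D)$ separately; both rely on the observation that $\nat$ agrees with $g$ on $\supp(\mu)$, so the only real work is tracking what happens on each $V_x$. For the upper bound, I note that $x \in V_x$ (the zero-distance case of the defining strict inequality), so astuteness at $(x,y)$ with respect to $\V$ implies accuracy at $(x,y)$. Hence $A_\V(\nat, \D) \le A(\nat, \D) \le A(g, \D)$, where the second inequality is the classical pointwise optimality of the Bayes classifier.

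For the lower bound, the strategy is to show that for $\mu$-almost every $x$ and every label $y$, $\nat$ is astute at $(x,y)$ whenever $g$ is accurate at $(x,y)$, after which integration against $(x,y) \sim \D$ yields the bound. I would split on $\eta(x) > 1/2$, $\eta(x) < 1/2$, and $\eta(x) = 1/2$. In the first case, $g(x) = +1$, and since $x \in \{\eta \ge 1/2\}$, the definition of $\nat$ gives $\nat(x) = +1$. For robustness, any $x' \in V_x$ satisfies $\d(x, x') < \d(\mu^- \cup \mu^{1/2}, x')$; since $x \in \{\eta \ge 1/2\}$, this chains to $\d(\{\eta \ge 1/2\}, x') \le \d(x, x') < \d(\{\eta < 1/2\}, x')$, forcing $\nat(x') = +1$. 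The case $\eta(x) < 1/2$ is symmetric. For $\eta(x) = 1/2$, Definition~\ref{def:nat_region} gives $V_x = \{x\}$, so the robustness condition is vacuous; both $g(x)$ and $\nat(x)$ equal $+1$ under the convention $\eta \ge 1/2 \Rightarrow +1$, so astuteness of $\nat$ at $x$ coincides with accuracy of $g$ at $x$.

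The main subtlety, rather than a genuine obstacle, is reconciling the two slightly different conventions for $\mu^+$: the definition of $\nat$ uses $\{\eta \ge 1/2\}$, whereas Definition~\ref{def:nat_region} uses the strict version and places $\{\eta = 1/2\}$ with the opposite set when forming $V_x$. The case split above is arranged so that this mismatch never bites: in the $\eta(x) > 1/2$ case $x$ already lies in the strict $\mu^+$, so the chain of distance inequalities above is valid; in the $\eta(x) = 1/2$ case the triviality of $V_x$ makes the robustness condition vacuous. Everything else is routine integration, and the upper and lower bounds combine to give equality.
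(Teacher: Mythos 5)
Your proof is correct and takes essentially the same route as the paper's (much terser) argument: show that wherever the Bayes optimal classifier is accurate, the neighborhood Bayes optimal outputs that same label throughout $V_x$, so its astuteness with respect to $\V$ equals the Bayes accuracy. Your explicit distance chain $\d(\{\eta \ge 1/2\}, x') \le \d(x,x') < \d(\{\eta < 1/2\}, x')$ and the separate treatment of $\eta(x) = 1/2$ simply make rigorous what the paper asserts ``by definition,'' including the strict-versus-nonstrict convention mismatch between the definitions of $\nat$ and of $V_x$.
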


\begin{thm}\label{thm:robust_margin}
(Robustness) Let $\D$ be a data distribution, let $f$ be a classifier, and let $\U$ be a set of robustness regions. Suppose that $A_\U(f, \D) = A(g, \D)$, where $g$ denotes the Bayes optimal classifier. Then there exists $x \in \supp(\D)$ such that $V_x \not \subset U_x$, where $V_x$ denotes the \natural\emph{ }robustness region about $x$. In particular, we cannot have $V_x$ be a strict subset of $U_x$ for all $x$. 
\end{thm}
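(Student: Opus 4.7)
My plan is to argue by contradiction: I assume $V_x$ is a proper subset of $U_x$ for every $x \in \supp(\D)$, and derive a contradiction from the hypothesis $A_\U(f, \D) = A(g, \D)$.

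First I would extract a pointwise consequence of the astuteness equality. Since $A(g, \D)$ is the supremum accuracy over all classifiers and astuteness is bounded above by accuracy, the equality $A_\U(f, \D) = A(g, \D)$ forces both that $f = g$ holds $\mu$-almost surely on $\{x : \eta(x) \neq 1/2\}$ and, in addition, that for $\mu$-almost every such $x$ the classifier $f$ is constant on $U_x$ with value $g(x)$. In particular there are full-measure subsets $S^+ \subseteq \mu^+ \cap \supp(\D)$ and $S^- \subseteq \mu^- \cap \supp(\D)$ on which $f \equiv +1$ on $U_x$, respectively $f \equiv -1$ on $U_x$.

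Then I would pick some $x_1 \in S^+$ together with a witness $p \in U_{x_1} \setminus V_{x_1}$, which exists by the standing assumption. The definition of $V_{x_1}$ gives $\rho(p, x_1) \geq \rho(p, \mu^- \cup \mu^{1/2})$, i.e., $p$ is at least as close to the opposite-label region as to $x_1$ itself. The objective is to locate some $x_2 \in S^-$ with $p \in U_{x_2}$: this would force $f(p) = +1$ (from $p \in U_{x_1}$) and $f(p) = -1$ (from $p \in U_{x_2}$), yielding the desired contradiction. In the favorable subcase $\rho(p, \mu^-) < \rho(p, \mu^+ \cup \mu^{1/2})$, I can pick $x_2 \in \mu^-$ with $\rho(p, x_2)$ so near $\rho(p, \mu^-)$ that $\rho(p, x_2) < \rho(p, \mu^+ \cup \mu^{1/2})$, which places $p$ inside $V_{x_2} \subseteq U_{x_2}$; density of $S^-$ in $\mu^- \cap \supp(\D)$ then allows me to insist on $x_2 \in S^-$.

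The main obstacle is the opposite subcase, where $p$ is at least as close to $\mu^+$ as to $\mu^- \cup \mu^{1/2}$, so no $x_2 \in \mu^-$ places $p$ inside its region $V_{x_2}$. For this case the plan is to restart the argument with a better choice of $x_1$: replace $x_1$ by a closest point $x_1^\star \in \mu^+$ to $p$ (so that $p$ lies in the closure of $V_{x_1^\star}$), invoke the hypothesis that $V_{x_1^\star}$ is a proper subset of $U_{x_1^\star}$ to produce a fresh witness $p' \in U_{x_1^\star} \setminus V_{x_1^\star}$, and repeat. An extremal selection — for instance, maximizing $\rho(p, \mu^- \cup \mu^{1/2}) - \rho(p, \mu^+)$ over admissible pairs $(x_1, p)$, or taking a limit along an extremizing sequence — should force the favorable subcase at the extremum and close the contradiction. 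This extremal/compactness step is the most delicate piece of the proof.
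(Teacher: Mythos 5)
Your overall route is the same as the paper's: argue by contradiction, use the equality $A_\U(f,\D)=A(g,\D)$ to force $f=g$ $\mu$-almost everywhere together with robustness of $f$ throughout $U_x$ for almost every $x$, and then try to exhibit a point that must simultaneously receive both labels because it lies in robustness regions of oppositely labeled support points. Your favorable subcase (the witness $p\in U_{x_1}\setminus V_{x_1}$ is strictly closer to $\mu^-$ than to $\mu^+\cup\mu^{1/2}$) is handled correctly, and in fact more carefully than in the paper, which at the corresponding moment simply asserts that a strictly enlarged $U_x$ ``necessarily must intersect $U_{x'}$ for some $x'$ with $g(x')\neq g(x)$.''

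The genuine gap is the unfavorable subcase, which you acknowledge but do not close, and the proposed extremal selection cannot close it. A witness $p\in U_{x_1}\setminus V_{x_1}$ only satisfies $\rho(p,x_1)\geq\rho(p,\mu^-\cup\mu^{1/2})$; it may still be strictly closer to $\mu^+$ (through a different positive support point) than to $\mu^-\cup\mu^{1/2}$, in which case it lies in no $V_{x_2}$ with $x_2\in\mu^-$, and the singleton regions of $\mu^{1/2}$ points give nothing either. Worse, configurations in which \emph{every} admissible pair is unfavorable are consistent with everything you are allowed to use: take $f$ to be the neighborhood preserving Bayes optimal, and for each $x\in\mu^+$ take $U_x$ to be the entire set $\{p:\rho(p,\mu^+)<\rho(p,\mu^-\cup\mu^{1/2})\}$ (symmetrically for $x\in\mu^-$). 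If the positive support contains two points whose mutual distance exceeds their distance to the opposite class (and likewise on the negative side), then every $V_x$ is a strict subset of $U_x$, $f$ is robust on each $U_x$, and the astuteness equals the Bayes accuracy, yet every witness $p$ falls in the unfavorable subcase; maximizing $\rho(p,\mu^-\cup\mu^{1/2})-\rho(p,\mu^+)$ merely returns a point deep inside the same-label region, where no oppositely labeled $U_{x_2}$ is forced to contain $p$. So the extremal/compactness step fails as described, and no re-choice of $x_1$ rescues it; any complete argument must invoke some additional structure on the regions $U_x$ (ruling out enlargements that stay within the same-label neighborhood region of \emph{other} support points) rather than only the pointwise inclusion $V_x\subsetneq U_x$ plus astuteness. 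Note that this is exactly the step the paper's own one-line justification also glosses over, so you have correctly isolated the crux, but your proposal does not supply the missing argument.
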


Theorem \ref{thm:accuracy_margin} shows that the \natural\emph{ }Bayes classifier achieves maximal accuracy, while Theorem \ref{thm:robust_margin} shows that achieving a strictly higher robustness (while maintaining accuracy) is not possible; while it is possible to make accurate classifiers which have higher robustness than $\nat$ in some regions of space, it is not possible for this to hold across all regions. Thus, the \natural\emph{ }Bayes optimal classifier can be thought of as a local maximum to the constrained optimization problem of maximizing robustness subject to having maximum (equal to the Bayes optimal) accuracy. 

\subsection{\Ncons\emph{ }Consistency}

Having defined the \natural\emph{ }Bayes optimal classifier, we now turn our attention towards building classifiers that converge towards it. Before doing this, we must precisely define what it means to converge. Intuitively, this consists of building classifiers whose robustness regions ``approach" the robustness regions of the \natural\emph{ }Bayes optimal classifier. This motivates the definition of \textit{partial \natural\emph{ }robustness regions}.
\begin{defn}\label{def:partial_nat_region}
Let $0 < \kappa < 1$ be a real number, and let $\D = (\mu, \eta)$ be a data distribution over $\R^d \times \{\pm 1\}$. Let $\mu^+ = \{x: \eta(x) > \frac{1}{2}\}$, $\mu^- = \{x: \eta(x) < \frac{1}{2}\}$, and $\mu^{1/2} = \{x: \eta(x) = \frac{1}{2}\}$. For $x \in \mu^+$, we define the \textbf{\natural\emph{ }robustness region}, denoted $V_x$, as $$V_x = \{x': \rho(x, x') < \kappa\rho(\mu^- \cup \mu^{\frac{1}{2}}, x')\}.$$ It consists of all points that are closer to $x$ than they are to $\mu^- \cup \mu^{1/2}$ (points oppositely labeled from $x$) by a factor of $\kappa$. We can use a similar definition for $x \in \mu^{-}$. Finally, if $\eta(x) = \frac{1}{2}$, we simply set $V_x^\kappa = \{x\}$.
\end{defn}

Observe that $V_x^{\kappa} \subset V_x$ for all $0 < \kappa < 1$, and thus being robust with respect to $V_x^{\kappa}$ is a milder condition than $V_x$. Using this notion, we can now define margin consistency.

\begin{defn}
A learning algorithm $A$ is said to be \textbf{\ncons\emph{ }consistent} if the following holds for any data distribution $\D$. For any $0 < \epsilon, \delta, \kappa < 1$, there exists $N$ such that for all $n \geq N$, with probability at least $1- \delta$ over $S \sim \D^n$, $$A_{\V^\kappa}(A_S, D) \geq A(g, \D) - \epsilon,$$  where $g$ denotes the Bayes optimal classifier and $A_S$ denotes the classifier learned by algorithm $A$ from dataset $S$. 
\end{defn}

This condition essentially says that the astuteness of the classifier learned by the algorithm converges towards the accuracy of the Bayes optimal classifier. Furthermore, we stipulate that this holds as long as the astuteness is measured with respect to some $\V^\kappa$. Observe that as $\kappa \to 1$, these regions converge towards the \natural\emph{ }robustness regions, thus giving us a classifier with robustness effectively equal to that of the \natural\emph{ }Bayes optimal classifier. 

\section{\Ncons\emph{ }Consistent Non-Parametric Classifiers}

Having defined \ncons\emph{ }consistency, we turn to the following question: which non-parametric algorithms are \ncons\emph{ }consistent? Our starting point will be the standard literature for the convergence of non-parametric classifiers with regard to accuracy. We begin by considering the standard conditions for $k_n$-nearest neighbors to converge (in accuracy) towards the Bayes optimal.

$k_n$-nearest neighbors is \textit{consistent} if and only if the following two conditions are met:  $\lim_{n \to \infty} k_n = \infty$, and $\lim_{n \to \infty} \frac{k_n}{n} = 0$. The first condition guarantees that each point is classified by using an increasing number of nearest neighbors (thus making the probability of a misclassification small), and the second condition guarantees that each point is classified using only points very close to it. We will refer to the first condition as \textit{precision}, and the second condition as \textit{locality.}  A natural question is whether the same principles suffice for \ncons\emph{ }consistency as well. We began by showing that without any additional constraints, the answer is no.

\begin{thm}\label{thm:lower_bound}
Let $\D = (\mu, \eta)$ be the data distribution where $\mu$ denotes the uniform distribution over $[0,1]$ and $\eta$ is defined as: $\eta(x) = x$. Over this space, let $\d$ be the euclidean distance metric. Suppose $k_n = O(\log n)$ for $1 \leq n < \infty$. Then $k_n$-nearest neighbors is not \ncons\emph{ }consistent with respect to $\D$. 
\end{thm}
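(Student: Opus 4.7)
My plan is to exhibit a fixed $\kappa \in (0,1)$ and constants $\epsilon, \delta > 0$ such that along some infinite subsequence of $n$, with $S$-probability at least $\delta$ the $k_n$-NN classifier $f_S$ has $A_{\V^\kappa}(f_S, \D) \leq 3/4 - \epsilon$. Since $A(\g, \D) = 3/4$ on this $\D$, this violates $\ncons$ consistency.

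The strategy is a \emph{bad run} argument leveraging that $\eta(x) = x$ is near $1/2$ close to $x = 1/2$, so labels on nearby training points are approximately unbiased. Fix any $\kappa \in (0,1)$ and a small constant $s_0 \in (0, 1/2)$. Using $k_n = O(\log n)$, I would show that with probability at least a positive constant $\delta$, the sample $S$ contains a window of $k_n$ consecutive (in sorted order) training points whose $x$-coordinates lie in $[1/2 + s_0, 1/2 + 2 s_0]$ and whose labels are \emph{all} $-1$. The calculation partitions this interval into $\Theta(n/k_n)$ non-overlapping candidate windows of $k_n$ consecutive training points; each succeeds with probability at least $(1/2 - 2 s_0)^{k_n}$, which for $k_n \leq c \log n$ with $c$ small enough equals $n^{-\alpha}$ for some $\alpha < 1$. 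A second-moment / independence estimate over the non-overlapping windows then delivers the constant lower bound on $\delta$.

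On this bad-run event, the interval $J$ of test points whose $k_n$-nearest training neighbors are exactly the bad window satisfies $f_S \equiv -1$ on $J$, has length $\Theta(k_n/n)$, and lies inside $(1/2 + s_0, 1/2 + 2 s_0)$. Using the explicit description of partial robustness regions, for $x = 1/2 + t$ with $t > 0$ one computes $V_x^\kappa = (1/2 + t/(1+\kappa),\ 1/2 + t/(1-\kappa))$. Hence every $x \in \mu^+$ lying in an interval of width $\Theta(\kappa s_0)$ around $J$ has $V_x^\kappa \cap J \neq \emptyset$, forcing astuteness at $(x, +1)$ to fail. Since $\eta(x) \geq 1/2 + s_0$ throughout, the per-$x$ loss (Bayes rate minus astuteness rate) is at least $2 s_0$, and integrating over the interval yields an astuteness loss of at least $\epsilon := c' \kappa s_0^2$ for a positive constant $c'$.

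The main obstacle is that the bad-run probability estimate is non-trivial only when $k_n \leq c \log n$ for $c$ sufficiently small in terms of $s_0$: once $k_n$ exceeds $\log n / \log(1/(1/2 - 2 s_0))$, the expected number of bad windows in the target region vanishes. To cover the full class $k_n = O(\log n)$, I would either allow $s_0 = s_0(n)$ to shrink slowly with $n$ while tracking its effect on both $\epsilon$ and the bad-run probability, or aggregate contributions from bad-run events targeted at several different $s_0$ scales; making the combined probability and loss bound uniform in $n$ is the technical heart of the argument.
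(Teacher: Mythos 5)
Your overall architecture mirrors the paper's proof (which works just to the left of $1/2$ instead of the right), and the geometric half of your argument is sound: the formula $V_x^\kappa = (\frac12 + t/(1+\kappa),\, \frac12 + t/(1-\kappa))$ for $x = \frac12 + t$, the constant-width $\Theta(\kappa s_0)$ set of points $x$ whose region meets $J$, and the per-point astuteness deficit $2\eta(x)-1 \ge 2s_0$ all check out. The genuine gap is in the probabilistic half. By demanding that \emph{all} $k_n$ labels in a window be $-1$, you cap the per-window success probability at $2^{-k_n}$ independently of $s_0$, since $\Pr[y=-1 \mid x] < \frac12$ for every $x > \frac12$. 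With at most $n$ candidate windows, the expected number of all-$(-1)$ windows is at most $n\,2^{-k_n}$, which tends to $0$ as soon as $k_n \ge C\log n$ with $C > 1/\ln 2 \approx 1.44$. So already for $k_n = 2\log n$ (well within the hypothesis $k_n = O(\log n)$) your bad event has vanishing probability and the argument yields nothing. Neither of your proposed repairs closes this: the $2^{-k_n}$ ceiling does not involve $s_0$, so letting $s_0(n)$ shrink or aggregating over several scales cannot raise the success probability; and a shrinking $s_0$ additionally destroys the conclusion, because refuting neighborhood consistency requires a \emph{fixed} triple $\epsilon,\delta,\kappa$ with failures at infinitely many $n$, whereas your loss $\epsilon = c'\kappa s_0^2$ would vanish with $s_0$.

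The missing idea is to relax ``all $-1$'' to ``at least half $-1$'': a majority already forces the prediction $-1$ at the test point whose $k_n$ nearest neighbors are exactly the window, and a single mislabeled point inside $V_x^\kappa$ is all you need. Because labels at constant distance $s_0$ from $\frac12$ are only slightly biased, the majority event has probability roughly $\exp(-k_n D)$ with $D = D(\frac12 \,\|\, \frac12 - 2s_0)$, and $D \to 0$ as $s_0 \to 0$. Given the constant $C$ with $k_n \le C\log n$, choose the \emph{constant} $s_0 = s_0(C)$ small enough that $C\,D < 1$; then among the $\Theta(n/k_n)$ disjoint windows, at least one has a $-1$ majority with probability at least $\frac12$ for all large $n$, and the rest of your argument goes through with fixed $\epsilon, \delta, \kappa$. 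This is exactly how the paper's key lemma proceeds (with the signs mirrored, on the interval $[\frac12-\Delta, \frac12-\frac{3\Delta}{4}]$): it uses the binomial lower-tail bound $\Pr[\mathrm{Bin}(k_n, \frac12-\Delta) \ge \frac{k_n}{2}] \ge \frac{1}{\sqrt{2k_n}}\exp(-k_n D_\Delta)$ and picks $\Delta$ so that $C D_\Delta < 1$.
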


The issue in the example above is that for smaller $k_n$, $k_n$-nearest neighbors lacks sufficient precision. For \ncons\emph{ }consistnecy, points must be labeled using even more training points than are needed accuracy. This is because the classifier must be uniformly correct across the entirety of $V_x^\kappa$. Thus, to build \ncons\emph{ }consistent classifiers, we must bolster the precision from the standard amount used for standard consistency. To do this, we begin by introducing \textit{splitting numbers}, a useful tool for bolstering the precision of weight functions.

\subsection{Splitting Numbers}

We will now generalize beyond nearest neighbors to consider weight functions. Doing so will allow us to simultaneously analyze nearest neighbors and kernel classifiers. To do so, we must first rigorously substantiate our intuitions about increasing precision into concrete requirements. This will require several technical definitions.

\begin{defn}\label{defn:prob_radius}
Let $\mu$ be a probability measure over $\R^d$. For any $x \in \R^d$, the \textbf{probability radius} $r_p(x)$ is the smallest radius for which $B(x, r_p(x))$ has probability mass at least $p$. More precisely, $r_p(x) = \inf\{r: \mu(B(x,r)) \geq p\}.$ 
\end{defn}

\begin{defn}\label{defn:splitting_number}
Let $W$ be a weight function and let $S = \{x_1, x_2, \dots, x_n\}$ be any finite subset of $\R^d$. For any $x \in \R^d$, $\alpha \geq 0$, and $0 \leq \beta \leq 1$, let $W_{x, \alpha, \beta}  = \{i: \d(x, x_i) \leq \alpha, w_i^S(x) \geq \beta\}.$ Then the \textbf{splitting number} of $W$ with respect to $S$, denoted as $T(W, S)$ is the number of distinct subsets generated by $W_{x, \alpha \beta}$ as $x$ ranges over $\R^d$, $\alpha$ ranges over $[0, \infty)$, and $\beta$ ranges over $[0,1]$. Thus $T(W, S) = |\{W_{x, \alpha, \beta}: x \in \R^d, 0 \leq \alpha, 0 \leq \beta \leq 1\}|.$
\end{defn}

Splitting numbers allow us to ensure high amounts of precision over a weight function. To prove \ncons\emph{ }consistency, it is necessary for a classifier to be correct at \textit{all} points in a given region. Consequently, techniques that consider a single point will be insufficient. The splitting number provides a mechanism for studying entire regions simultaneously. For more details on splitting numbers, we include several examples in the appendix. 

\subsection{Sufficient Conditions for Neighborhood Consistency}

We now state our main result.
\begin{thm}\label{thm:main}
Let $W$ be a weight function, $\D$ a distribution over $\R^d \times \{\pm 1\}$, $\U$ a neighborhood preserving collection, and $(t_n)_1^{\infty}$ be a sequence of positive integers such that the following four conditions hold. 

1. $W$ is consistent (with resp. to accuracy) with resp. to $\D$.

2. For any $0 < p < 1$, $\lim_{n \to \infty} E_{S \sim \D^n} [\sup_{x \in \R^d} \sum_1^n w_i^S(x)1_{\d(x, x_i) > r_p(x)}] = 0.$

3. $\lim_{n \to \infty} E_{S \sim D^n}[t_n \sup_{x \in \R^d} w_i^S(x)] = 0$.

4. $\lim_{n \to \infty} E_{S \sim D^n}\frac{\log T(W,S)}{t_n} = 0$.

Then $W$ is \ncons\emph{ }consistent with respect to $\D$.
\end{thm}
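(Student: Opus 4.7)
The plan is to reduce the \ncons\emph{ }consistency statement to a pointwise uniform-over-$V_x^\kappa$ convergence statement, and then control the resulting supremum via a bias-variance decomposition that exploits all four hypotheses of the theorem. Since $A_{\V^\kappa}(A_S,\D) \in [0,1]$, a standard dominated-convergence-plus-Markov reduction shows that it suffices to prove: for $\mu$-almost every $x$ with $\eta(x) \neq 1/2$, $P_{S \sim \D^n}[A_S(x') = g(x) \text{ for every } x' \in V_x^\kappa] \to 1$. Fix such an $x$, say with $x \in \mu^+$, so $g(x) = +1$; I must then bound, over the draw of $S$, the probability that $\sum_i y_i w_i^S(x') < 0$ for some $x' \in V_x^\kappa$. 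Writing $\hat\eta_S(x') = \sum_i y_i w_i^S(x')$ and $\bar\eta_S(x') = \sum_i (2\eta(x_i)-1) w_i^S(x')$, so that $\bar\eta_S$ is the conditional expectation of $\hat\eta_S$ given $\{x_i\}$, it is enough to show that with high probability $\inf_{x' \in V_x^\kappa} \bar\eta_S(x') \geq \gamma$ and $\sup_{x' \in V_x^\kappa}|\hat\eta_S(x') - \bar\eta_S(x')| < \gamma$ for some positive constant $\gamma = \gamma(\D, x, \kappa)$.

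The bias step controls $\inf_{x' \in V_x^\kappa} \bar\eta_S(x')$ from below. For $x' \in V_x^\kappa$, Definition~\ref{def:partial_nat_region} guarantees $\d(x, x') < \kappa\, \d(\mu^- \cup \mu^{1/2}, x')$, so a ball around $x'$ of radius slightly larger than $\d(x,x')/\kappa$ contains no oppositely-labeled support. Condition~2 says that for any $p$, the total weight placed on training $x_i$ outside $B(x', r_p(x'))$ vanishes in expectation, uniformly in $x'$. Choosing $p$ small relative to the mass $\mu$ places near $x$, with high probability $r_p(x')$ is smaller than the $\kappa$-slacked margin to $\mu^- \cup \mu^{1/2}$, which forces the bulk of the weight to land on $x_i$ whose signed labels $2\eta(x_i)-1$ point in the correct direction, yielding a uniform lower bound $\bar\eta_S(x') \geq \gamma$ on $V_x^\kappa$. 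Condition~1 (standard accuracy consistency) and the neighborhood-preserving hypothesis on $\U$ are what allow this Stone-type argument to proceed uniformly over $V_x^\kappa$ rather than only at $x$ itself.

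The variance step controls $\sup_{x' \in V_x^\kappa}|\hat\eta_S(x') - \bar\eta_S(x')|$, and this is where the splitting numbers play their role. Conditionally on $\{x_i\}$, the deviation $\hat\eta_S(x') - \bar\eta_S(x') = \sum_i (y_i - (2\eta(x_i)-1))\, w_i^S(x')$ is a sum of independent mean-zero random variables, each summand bounded in $[-2 w_i^S(x'), 2 w_i^S(x')]$. Hoeffding at a fixed $x'$ yields a deviation bound of order $\exp(-c\gamma^2 / \sup_i w_i^S(x'))$, and by condition~3 this exponent is of order $\gamma^2 t_n$ with high probability. To promote the bound to a supremum, I use the layer-cake identity $w_i^S(x') = \int_0^{\infty} \mathbf{1}[w_i^S(x') \geq \beta]\, d\beta$ combined with a distance truncation in $\alpha$ to write the deviation as an integral of partial sums $\sum_{i \in W_{x',\alpha,\beta}} (y_i - (2\eta(x_i)-1))$. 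The index sets $W_{x',\alpha,\beta}$ assume only $T(W,S)$ distinct values as $(x',\alpha,\beta)$ varies, so a union bound over these finitely many sets plus Hoeffding gives a bad-event probability of order $T(W,S)\, \exp(-c\gamma^2 t_n)$, which condition~4 drives to $0$ in expectation.

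The main obstacle is the bias step: establishing simultaneously over all of $V_x^\kappa$ that the weighted sum points in the correct direction, rather than only at the center $x$. The key leverage is the strict-inequality $\kappa < 1$ slack in Definition~\ref{def:partial_nat_region}, which manufactures a uniform margin between the probability radius around every $x' \in V_x^\kappa$ and the oppositely-labeled support, allowing the $\sup_x$-type locality of condition~2 to operate uniformly on $V_x^\kappa$. Edge cases where $\bar\eta_S(x')$ fails to be bounded away from zero (the $\mu^{1/2}$ set and sample points near the Bayes boundary) and correctly invoking the neighborhood-preserving hypothesis on $\U$ will require additional technical care.
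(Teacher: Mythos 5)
Your proposal follows essentially the same route as the paper: your layer-cake/union-bound control of the deviation over the level sets $W_{x',\alpha,\beta}$ with size threshold $t_n$ (using conditions 3--4) is the paper's Hoeffding-plus-$T(W,S)$ lemma combined with its summation-by-parts step, and your bias step via the $\kappa<1$ slack and probability-radius locality (condition 2) is the paper's $(p,\Delta)$-covering argument. The differences are bookkeeping rather than substance --- you reduce pointwise and center the labels where the paper covers all but $\epsilon$ of the mass by a single pair $(p,\Delta)$ and works with uncentered labels --- and the ``additional technical care'' you defer (a quantitative $\eta \geq \tfrac{1}{2}+\Delta$ bound on the probability-radius balls, obtained from closedness and boundedness of $\overline{V_x^\kappa}$, continuity of $\eta$, and compactness) is exactly the paper's covering lemma, which you would need to make your constant $\gamma$ strictly positive; note also that standard consistency (condition 1) plays no real role in that step.
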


\textbf{Remarks:} Condition 1 is necessary because \ncons\emph{ }consistency implies standard consistency -- or, convergence in accuracy to the Bayes Optimal. Standard consistency has been well studied for non-parametric classifiers, and there are a variety of results that can be used to ensure it -- for example, Stone's Theorem (included in the appendix). 

Conditions 2. and 3. are stronger version of conditions 2. and 3. of Stone's theorem. In particular, both include a supremum taken over all $x \in \R^d$ as opposed to simply considering a random point $x \sim \D$. This is necessary for ensuring correct labels on entire regions of points simultaneously. We also note that the dependence on $r_p(x)$ (as opposed to some fixed $r$) is a key property used for adaptive robustness. This allows the algorithm to adjust to potential differing distance scales over different regions in $\R^d$. This idea is reminiscent of the analysis given in \cite{Dasgupta14}, which also considers probability radii.

Condition 4. is an entirely new condition which allows us to simultaneously consider all $T(W,S)$ subsets of $S$. This is needed for analyzing weighted sums with arbitrary weights.

Next, we apply Theorem \ref{thm:main} to get specific examples of margin consistent non-parametric algorithms.

\subsection{Nearest Neighbors and Kernel Classifiers}

We now provide sufficient conditions for $k_n$-nearest neighbors to be  \ncons\emph{ }consistent.

\begin{cor}\label{cor:nn}
Suppose $(k_n)_1^{\infty}$ satisfies (1) $\lim_{n \to \infty} \frac{k_n}{n} = 0$, and (2) $\lim_{n \to \infty} \frac{\log n}{k_n} = 0$. Then $k_n$-nearest neighbors is \ncons\emph{ }consistent.
\end{cor}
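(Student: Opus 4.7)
The plan is to apply Theorem \ref{thm:main} with $W$ being the $k_n$-nearest neighbor weight function, and to choose an auxiliary sequence $t_n$ sandwiched between $\log n$ and $k_n$; such a sequence exists because condition (2) of the corollary gives $\log n / k_n \to 0$. A convenient choice is $t_n = \lfloor \sqrt{k_n \log n} \rfloor$, which satisfies both $t_n / k_n \to 0$ and $\log n / t_n \to 0$.

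First I would dispatch Condition 1 of Theorem \ref{thm:main}: Note that (2) implies $k_n \to \infty$, and together with (1) this is precisely the classical Stone condition for $k_n$-NN to be consistent with respect to accuracy. Condition 3 is immediate once $t_n$ is fixed: for $k_n$-NN we have $\sup_{i,x} w_i^S(x) = 1/k_n$ deterministically, so $t_n \sup_{i,x} w_i^S(x) = t_n/k_n \to 0$. Next I would verify Condition 2: $\sum_i w_i^S(x) 1_{\rho(x,x_i) > r_p(x)}$ equals the fraction of the $k_n$ nearest neighbors of $x$ that lie outside $B(x, r_p(x))$. Since $B(x, r_p(x))$ has true mass at least $p$, a uniform (VC) bound for Euclidean balls yields $\sup_{x} |\hat\mu(B(x,r_p(x))) - \mu(B(x,r_p(x)))| = O(\sqrt{d \log n / n})$ with high probability; once $k_n/n < p/2$, every ball $B(x, r_p(x))$ contains at least $k_n$ sample points almost surely uniformly in $x$, so all $k_n$ nearest neighbors lie inside, making the supremum in Condition 2 vanish in expectation.

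The remaining piece is Condition 4, which requires bounding the splitting number $T(W, S)$ for $k_n$-NN. Here is the key observation: for $k_n$-nearest neighbors, $w_i^S(x) \in \{0, 1/k_n\}$, so for any $\beta > 1/k_n$ the set $W_{x,\alpha,\beta}$ is empty, while for $\beta \leq 1/k_n$ the condition $w_i^S(x) \geq \beta$ simply says $x_i$ is one of the $k_n$ nearest neighbors of $x$, i.e., $\rho(x, x_i) \leq d_{k_n}(x)$ where $d_{k_n}(x)$ is the distance to the $k_n$-th NN. Combined with the constraint $\rho(x, x_i) \leq \alpha$, we conclude that every nontrivial $W_{x,\alpha,\beta}$ equals $S \cap B(x, \min(\alpha, d_{k_n}(x)))$, a set realized by a single Euclidean ball about $x$. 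Since the family of Euclidean balls in $\R^d$ has VC dimension $d+1$, Sauer's lemma gives $T(W, S) \leq O(n^{d+1})$ deterministically, so $\log T(W, S) = O(d \log n)$ and thus $\E[\log T(W,S)/t_n] = O(d \log n / t_n) \to 0$ by the choice of $t_n$.

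The main obstacle I anticipate is really just bookkeeping Condition 4: writing down the identification of $W_{x,\alpha,\beta}$ with a ball-induced subset carefully enough that the VC bound applies cleanly. Conditions 1 and 3 are essentially free, Condition 2 is a standard uniform concentration argument over the VC class of balls, and once the splitting-number reduction is made, the existence of a sandwiching sequence $t_n$ with $\log n \ll t_n \ll k_n$ under hypothesis (2) is what drives the whole argument.
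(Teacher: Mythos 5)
Your proposal follows essentially the same route as the paper: apply Theorem \ref{thm:main}, get Condition 1 from the classical Stone conditions, get Condition 2 from a uniform VC bound over balls (so that once $k_n/n < p/2$ every ball $B(x,r_p(x))$ captures all $k_n$ nearest neighbors), take $t_n$ of order $\sqrt{k_n\log n}$ (the paper uses $\sqrt{d\,k_n\log n}$), and bound the splitting number via ball-induced subsets and Sauer's lemma.

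The one genuine omission is in your Condition 4 reduction. Your claim that every nontrivial $W_{x,\alpha,\beta}$ equals $S\cap B\bigl(x,\min(\alpha,d_{k_n}(x))\bigr)$ implicitly assumes there are no distance ties: if several sample points are equidistant from $x$ at the $k_n$-th nearest-neighbor distance, the ball $B(x,d_{k_n}(x))$ contains more than $k_n$ points, only $k_n$ of which receive weight $1/k_n$ under whatever tie-breaking rule defines the classifier, so the set $\{i: w_i^S(x)\geq\beta\}$ is then not ball-induced. Since $\mu$ is an arbitrary measure (atoms are allowed), ties can occur with positive probability, so this needs to be addressed rather than assumed away. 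The paper handles it by augmenting each $x_i$ with an independent uniform $z_i\in[0,1]$ used for tie-breaking, which turns each $W_{x,\alpha,\beta}$ into an intersection of a ball-induced subset with a threshold set $\{i: z_i\leq c\}$; the threshold sets contribute only a factor $n+1$, giving $T(W,S)=O(n^{d+3})$, which changes nothing asymptotically. Your argument is correct as stated for tie-free configurations and is repaired by exactly this device, so the gap is minor, but it is the one step where your writeup and the paper's proof genuinely differ.
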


As a result of Theorem \ref{thm:lower_bound}, corollary \ref{cor:nn} is tight for nearest neighbors. Thus $k_n$ nearest neighbors is \ncons\emph{ }consistent if and only if $k_n = \omega(\log n)$. 

Next, we give sufficient conditions for a kernel-similarity classifier.
\begin{cor}\label{cor:kern}
Let $W$ be a kernel classifier over $\R^d \times \Y$ constructed from $K: \R^+ \to \R^+$ and $h_n$. Suppose the following properties hold.

1. $K$ is decreasing, and satisfies $\int_{\R^d}K(||x||)dx < \infty.$

2. $\lim_{n \to \infty} h_n = 0$ and $\lim_{n \to \infty} nh_n^d = \infty$.

3. For any $c > 1$, $\lim_{x \to \infty} \frac{K(cx)}{K(x)} = 0$.

4. For any $x \geq 0$, $\lim_{n \to \infty} \frac{n}{\log n}K(\frac{x}{h_n}) = \infty$.

Then $W$ is \ncons\emph{ }consistent.
\end{cor}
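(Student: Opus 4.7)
The plan is to verify each of the four conditions of Theorem~\ref{thm:main} for the kernel classifier $W$, choosing the auxiliary sequence $(t_n)$ to lie between $\log n$ and the rate at which the normalized weights shrink. A natural choice is $t_n = \lfloor \sqrt{\log n \cdot M_n} \rfloor$, where $M_n$ is a lower bound on the effective denominator $\sum_j K(\rho(x,x_j)/h_n)$; equivalently, one may think of $t_n$ as growing faster than $\log n$ but slower than the reciprocal of $\sup_i w_i^S$.

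First I would dispatch Condition~1 by citing the classical consistency theory for kernel classifiers (e.g.\ Devroye--Gy\"orfi--Lugosi): conditions (1) and (2) of the corollary---that $K$ is decreasing and integrable, $h_n\to 0$, and $nh_n^d\to\infty$---are exactly the standard hypotheses that imply $W$ is accuracy-consistent on any distribution $\D$. Next, I would handle Condition~4 by a direct combinatorial bound on the splitting number. For a kernel classifier, the weight $w_i^S(x)$ is a strictly decreasing function of $\rho(x,x_i)$, so the set $W_{x,\alpha,\beta} = \{i : \rho(x,x_i)\le\alpha,\ w_i^S(x)\ge\beta\}$ is always of the form $\{i : \rho(x,x_i)\le \gamma(x,\alpha,\beta)\}$ for some effective radius. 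Thus the subsets enumerated by $T(W,S)$ are exactly prefixes of the distance-ordering from some query point $x$, of which there are at most $O(n^{d+2})$ by a standard VC-type argument on balls in $\R^d$ (or in the relevant metric). Consequently $\log T(W,S) = O(\log n)$, and Condition~4 reduces to $t_n = \omega(\log n)$, which is arranged by the choice above.

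The substantive work is in Conditions~2 and~3. For Condition~3, I would upper-bound $\sup_{x,i} w_i^S(x) \le K(0)/D_n(x)$, where $D_n(x) = \sum_j K(\rho(x,x_j)/h_n)$. Using condition~(4) of the corollary---which guarantees $\frac{n}{\log n}K(y/h_n)\to\infty$ for every fixed $y$---together with a concentration argument on points $x_j$ lying in a small ball around $x$ of radius $y$ (whose probability is bounded below since $nh_n^d\to\infty$ and a.e.\ point lies in $\supp(\mu)$), I would show that with high probability $D_n(x) \gg \log n$ uniformly for $x$ in the support. This gives $t_n \sup_{x,i} w_i^S(x) \to 0$ in expectation. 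For Condition~2, the super-polynomial decay (condition~(3) of the corollary, $K(cx)/K(x)\to 0$) implies that contributions from points outside $B(x,r_p(x))$ are negligible relative to those inside: more precisely, if $\rho(x,x_i) > r_p(x)$ and $x_j$ lies within the probability ball of a slightly smaller radius, then $K(\rho(x,x_i)/h_n)/K(\rho(x,x_j)/h_n) \to 0$, so the tail fraction of the weighted sum vanishes.

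The hardest step will be the uniform-in-$x$ control required by Condition~2. Unlike Stone's classical theorem, where one averages over a random test point, here the supremum over all $x\in\R^d$ must be controlled, and it is possible for $x$ to lie near the boundary of $\supp(\mu)$ where $D_n(x)$ is small and tail contributions dominate. To manage this I would (i) cover $\supp(\mu)$ by an $h_n$-net (whose size is polynomial in $n$ under mild assumptions on $\mu$), (ii) use the monotonicity of $K$ to reduce the supremum over $x$ to a supremum over net points, and (iii) apply a union bound with concentration for the denominator at each net point. Points outside $\supp(\mu)$ are handled by the $\natural$ Bayes optimal extending via nearest-support-point, together with the decay of $K$, which lets us transfer the analysis to the closest support point up to vanishing error. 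Combining these ingredients verifies all four hypotheses of Theorem~\ref{thm:main} and yields the stated \ncons\ consistency.
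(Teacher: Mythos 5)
Your overall architecture matches the paper's: verify the four hypotheses of Theorem~\ref{thm:main}, obtain Condition~1 from classical kernel consistency, bound the splitting number by $O(n^{d+2})$ via the VC dimension of balls, and pick $t_n$ between $\log n$ and the reciprocal of the maximal weight (the paper takes $t_n=\sqrt{n\log n\,K(D/h_n)}$ with $D$ the diameter of a compact set $\X$ containing all robustness regions, so the denominator bound $\sum_j K(\rho(x,x_j)/h_n)\ge nK(D/h_n)$ is deterministic and no concentration is needed for Condition~3). The genuine gap is in your treatment of Condition~2. You argue that if $\rho(x,x_i)>r_p(x)$ while $x_j$ lies inside a slightly smaller probability ball, then $K(\rho(x,x_i)/h_n)/K(\rho(x,x_j)/h_n)\to 0$ by hypothesis~3 of the corollary. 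This fails as stated: hypothesis~3 only gives decay when the two distances differ by a multiplicative factor $c>1$ bounded away from $1$, and points just outside $B(x,r_p(x))$ are at essentially the same distance as points just inside, so for them the ratio of kernel values need not vanish. The missing idea is the paper's Lemma~\ref{cl:rad}: by compactness of $\X$ there is a single constant $c_p^\epsilon>1$ with $r_p^\epsilon(x)\ge c_p^\epsilon\, r_p(x)$ for all $x$, where $r_p^\epsilon(x)$ is the radius capturing an extra mass $\epsilon$. The tail is then split into the annulus $r_p(x)<\rho(x,x_i)\le r_p^\epsilon(x)$, controlled not by kernel decay but by the fact that it contains only about $3\epsilon n$ sample points (a uniform VC bound over balls), and the region beyond $r_p^\epsilon(x)$, where the uniform multiplicative gap lets hypothesis~3 make each weight a vanishing multiple of $K(r_p(x)/h_n)$; both pieces are compared to the denominator lower bound $n(p-\epsilon)K(r_p(x)/h_n)$. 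Without this two-scale decomposition and the uniform ratio constant, your ``tail fraction vanishes'' step does not go through.

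A second, smaller issue: the suprema in Conditions~2 and~3 range over the extended robustness regions, not just $\supp(\mu)$; your fix via an $h_n$-net of ``polynomial size under mild assumptions on $\mu$'' imports assumptions the corollary does not grant, and your claim that $D_n(x)\gg\log n$ uniformly relies on small-ball probabilities being bounded below uniformly, which is likewise unjustified. The paper sidesteps both problems by explicitly assuming (as an approximation step) a compact $\X$ containing every $U_x$, restricting the suprema in Theorem~\ref{thm:main} to $\X$, and then using the diameter bound for Condition~3 and a finite-subcover argument to extract the uniform constant needed in Condition~2.
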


Observe that conditions 1. 2. and 3. are satisfied by many common Kernel functions such as the Gaussian or Exponential kernel ($K(x) = \exp(-x^2)$/ $K(x) = \exp(-x)$). Condition 4. can be similarly satisfied by just increasing $h_n$ to be sufficiently large. Overall, this theorem states that Kernel classification is \ncons\emph{ }consistent as long as the bandwidth shrinks slowly enough.

\begin{figure}
    \centering
        \includegraphics[scale=0.25] {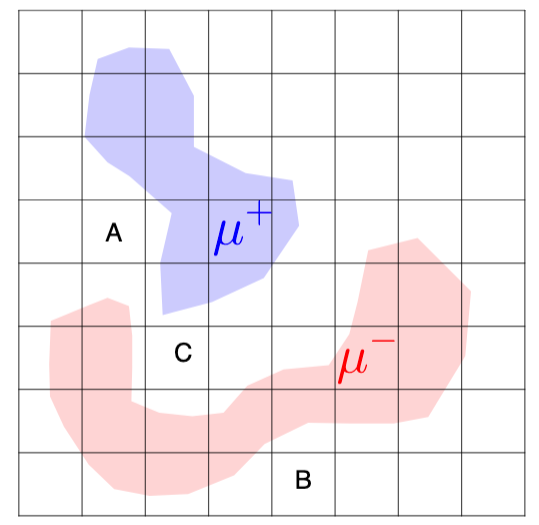}
    \caption{we have a histogram classifier being applied to the blue and red regions. The classifier will be unable to construct good labels in the cells labeled $A, B, C$, and consequently will not be robust with respect to $V_x^{\kappa}$ for sufficiently large $\kappa$.}
    \label{fig:histogram}
\end{figure}

\subsection{Histogram Classifiers}

Having discussed \ncons\emph{ }consistent nearest-neighbors and kernel classifier, we now turn our attention towards another popular weight function, histogram classifiers. Recall that histogram classifiers operate by partitioning their input space into increasingly small cells, and then classifying each cell by using a majority vote from the training examples within that cell (a detailed description can be found in the appendix). We seek to answer the following question: is increasing precision sufficient for making histogram classifiers \ncons\emph{ }consistent? Unfortunately, the answer this turns out not to be no. The main issue is that histogram classifiers have no mechanism for performing classification outside the support of the data distribution. 

For an example of this, refer to Figure \ref{fig:histogram}. Here we see a distribution being classified by a histogram classifier. Observe that the cell labeled $A$ contains points that are strictly closer to $\mu^+$ than $\mu^-$, and consequently, for sufficiently large $\kappa$, $V_x^{\kappa}$ will intersect $A$ for some point $x \in \mu^+$. A similar argument holds for the cells labeled $B$ and $C.$. However, since $A, B, C$ are all in cells that will never contain any data, they will never be labeled in a meaningful way. Because of this, histogram classifiers are not \ncons\emph{ }consistent.

%\begin{figure}[ht]\label{fig:histogram}
%\begin{center}
%\centerline{\includegraphics[width= 0.20\columnwidth]{histogram_pic}}
%\caption{A histogram classifier being applied to the blue (positive) and red (negative) regions. The classifier will be unable to construct good labels in the cells labeled $A, B, C$, and consequently will not be robust with respect to $V_x^{\kappa}$ for sufficiently large $\kappa$.}
%\label{icml-historical}
%\end{center}
%\vskip -0.2in
%\end{figure}

\section{Validation}

\begin{figure}[ht]
	\centering
	\subfloat[exponential kernel]{\includegraphics[width=.4\textwidth]{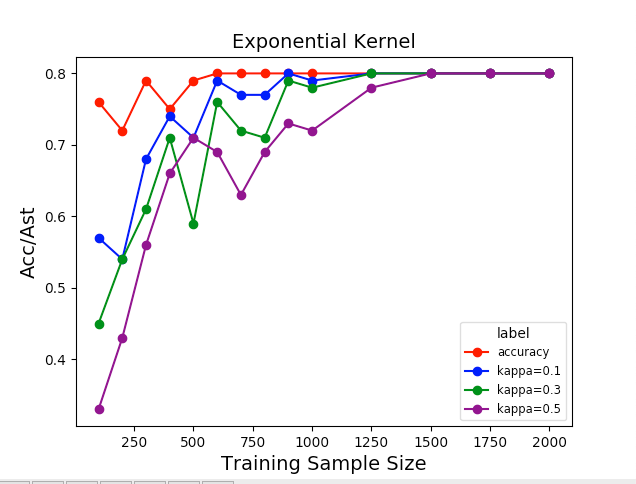}}
	\subfloat[polynomial kernel]{\includegraphics[width=.4\textwidth]{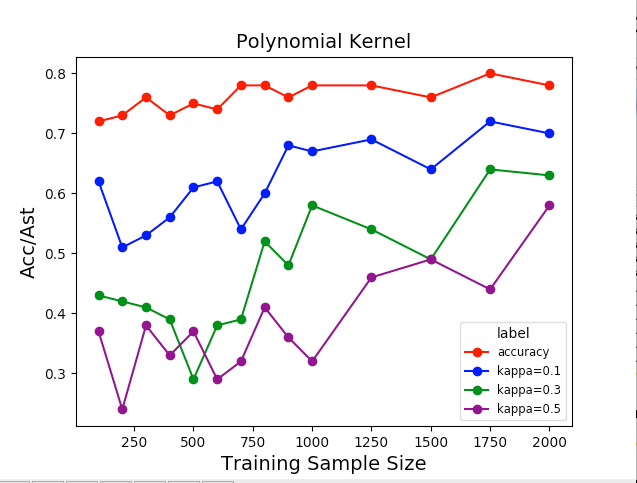}}
	\caption{Plots of astuteness against the training sample size. In both panels, accuracy is plotted in red, and the varying levels of robustness regions $(\kappa = 0.1, 0.3, 0.5)$ are givne in blue, green and purple. In panel (a), observe that as sample size increases, every measure of astuteness converges towards $0.8$ which is as predicted by Corollary \ref{cor:kern}. In panel (b), although the accuracy appears to converge, none of the robustness measure. In fact, they get progressively worse the larger $\kappa$ gets. 
	}
	\label{fig:validation}
\end{figure}

To complement our theoretical large sample results for non-parametric classifiers, we now include several experiments to understand their behavior for finite samples. We seek to understand how quickly non-parametic classifiers converge towards the \natural\emph{ }Bayes optimal. 

We focus our attention on kernel classifiers and use two different kernel similarity functions: the first, an exponential kernel, and the second, a polynomial kernel.  These classifiers were chosen so that the former meets the conditions of Corollary \ref{cor:kern}, and the latter does not. Full details on these classifiers can be found in the appendix.

To be able to measure performance with increasing data size, we look at a simple synthetic dataset over overlayed circles (see Figure \ref{fig:distribution} for an illustration) with support designed so that the data is intrinsically multiscaled. In particular, this calls for different levels of robustness in different regions. For simplicity, we use a global label noise parameter of $0.2$, meaning that any sample drawn from this distribution is labeled differently than its support with probability $0.2$. Further details about our dataset are given in section \ref{sec:experiment_details}. 

\textbf{Performance Measure.} For a given classifier, we evaluate its astuteness at a test point $x$ with respect to the robustness region $V_x^{\kappa}$ (Definition \ref{def:partial_nat_region}). While these regions are not computable in practice due to their dependency on the  support of the data distribution, we are able to approximate them for this synthetic example due to our explicit knowledge of the data distribution. Details for doing this can be found in the appendix. To compute the empirical astuteness of a kernel classifier $W_K$ about test point $x$, we perform a grid search over all points in $V_x^{\kappa}$ to ensure that all points in the robustness region are labeled correctly.  

For each classifier, we measure the empirical astuteness by using three trials of $20$ test points and taking the average. While this is a relatively small amount of test data, it suffices as our purpose is to just verify that the algorithm roughly converges towards the optimal possible astuteness. Recall that for any \ncons\emph{ }consistent algorithm, as $n \to \infty$, $A_{\mathcal{V}^\kappa}$ should converge towards $A^*$, the accuracy of the Bayes optimal classifier, for \textit{any} $0 < \kappa < 1$. Thus, to verify this holds, we use $\kappa = 0.1, 0.3, 0.5$. For each of these values, we plot the empirical astuteness as the training sample size $n$ gets larger and larger. As a baseline, we also plot their standard accuracy on the test set. 

\textbf{Results and Discussion:} The results are presented in Figure~\ref{fig:validation}; the left panel is for the exponential kernel, while the right one is for the polynomial kernel. As predicted by our theory, we see that in all cases, the exponential kernel converges towards the maximum astuteness regardless of the value of $\kappa$: the only difference is that the rate of convergence is slower for larger values of $\kappa$. This is, of course, expected because larger values of $\kappa$ entail larger robustness regions. 

By contrast, the polynomial kernel performs progressively worse for larger values of $\kappa$. This kernel was selected specifically to violate the conditions of Corollary \ref{cor:kern}, and in particular fails criteria 3. However, note that the polynomial kernel nevertheless performs will with respect to accuracy thus giving another example demonstrating the added difficulty of \ncons\emph{ }consistency.

Our results bridge the gap between our asymptotic theoretical results and finite sample regimes. In particular, we see that kernel classifiers that meet the conditions of Corollary \ref{cor:kern} are able to converge in astuteness towards the \natural\emph{ }Bayes optimal classifier, while classifiers that do not meet these conditions fail. 

\section{Related Work}\label{sec:rel_work}

There is a wealth of literature on robust classification, most of which impose the same robustness radius $r$ on the entire data.  \cite{Carlini17, Liu17, Papernot17, Papernot16,Szegedy14, Hein17,Katz17,Schmidt18,Wu16,Steinhardt18, Sinha18}, among others, focus primarily on neural networks, and robustness regions that are $\ell_1, \ell_2, $ or $\ell_\infty$ norm balls of a given radius $r$. 

\cite{ChenLeiChen20} and~\cite{mma20} show how to train neural networks with different robustness radii at different points by trading off robustness and accuracy; their work differ from ours in that they focus on neural networks, their robustness regions are still norm balls, and that their work is largely empirical.

Our framework is also related to large margin classification -- in the sense that the robustness regions $\U$ induce a {\em{margin constraint}} on the decision boundary. The most popular large margin classifier is the Support Vector Machine\cite{cortes95, Bennett00, Freund99} -- a large margin linear classifier that minimizes the worst-case margin over the training data. Similar ideas have also been used to design classifiers that are more flexible than linear; for example, \cite{Luxburg03} shows how to build large margin Lipschitz classifiers by rounding globally Lipschitz functions. Finally, there has also been purely empirical work on achieving large margins for more complex classifiers -- such as~\cite{Samy18} for deep neural networks that minimizes the worst case margin, and~\cite{Weinberger05} for metric learning to find large margin nearest neighbors. Our work differs from these in that our goal is to ensure a high enough local margin at each $x$, (by considering the \natural\emph{ }regions $V_x$) as opposed to optimizing a global margin. 

%However, in all of these cases, the goal is substantially different from building astute classifiers. Broadly speaking, this area is centered around minimizing \textit{global} margin, whereas our work can be thought of us optimizing local margin, since the regions $\U_x$ vary considerably in size based on $x$. 

%The relationship here is that building a robust classifier for a neighborhood preserving collection $\U$, bears some resemblance to building a large margin classifier.. In particular, the conditions on $\U$ indicates that the decision boundary for the data distribution, $\D$, should ideally be fairly far away from the support. 

%Nearly all of these works consider robustness regions $\U_x$ that are balls of radius $r$, typically using the $\ell_1, \ell_2, $ and $\ell_\infty$ norms. 

Finally, our analysis builds on prior work on robust classification for non-parametric methods in the standard framework. \cite{Amsaleg17, Sitawarin19, WJC18, YRWC19} provide adversarial attacks on non-parametric methods. Wang et. al. \cite{WJC18} develops a defense for $1$-NN that removes a subset of the training set to ensure higher robustness. Yang et. al~\cite{YRWC19} proposes the $r$-optimal classifier -- which is the maximally astute classifier in the standard robustness framework -- and proposes a defense called Adversarial Pruning.

%that relies on \textbf{adversarial pruning}, which is process in which points are removed from the training set so that any two elements with opposite labels have distance at least $2r$, where $r$ is the robustness radius.  They show that their algorithm converges towards the same robustness as the Bayes optimal classifier. This was subsequently improved upon in Yang et. al. \cite{YRWC19} which proposes the $r$-optimal classifier, which is the maximally robust classifier. 

Theoretically, \cite{Bhattacharjee20} provide conditions under which weight functions converge towards the $r$-optimal classifier in the large sample limit. They show that for $r$-separated distributions, where points from different classes are at least distance $2r$ or more apart, nearest neighbors and kernel classifiers satisfy these conditions. In the more general case, they use Adversarial Pruning as a preprocessing step to ensure that the training data is $r$-separated, and show that this preprocessing step followed by nearest neighbors or kernel classifiers leads to solutions that are robust and accurate in the large sample limit. Our result fundamentally differs from theirs in that we analyze a different algorithm, and our proof techniques are quite different. In particular, the fundamental differences between the $r$-optimal classifier and the \natural\emph{ }Bayes optimal classifier call for different algorithms and different analysis techniques.

In concurrent work, \cite{ruth} proposes a similar limit to the neighborhood preserving Bayes optimal which they refer to as the margin canonical Bayes. However, their work then focuses on a data augmentation technique that leads to convergence whereas we focus on proving the neighborhood consistency of classical non-parametric classifiers.

\section*{Acknowledgments}

We thank NSF under CNS 1804829 for research support.

%Using their general theorem, they were able to obtain similar convergence results for Nearest Neighbor and Kernel classifiers. Although their results appear to mirror our own, the crucial difference is that they too are constrained by setting their adversarial regions, $\U_x$, to balls of radius $r$ under some metric. Because of this, the conditions for $r$-consistency, meaning covergence towards the $r$-optimal classifier, are very different from our conditions for neighborhood consistency. 

%Our work is also related to out-of-distribution generalization in non-parametric methods, such as~\cite{Kpotufe18}. In particular, generalizing classification from $x \sim \D$ to the region $\U_x$ can be thought of as out-of-distribution classification. However, prior work in this area operates on a different framework where there are separate source and a target distributions, and measure how much target data it might take to transfer a nearest neighbor classifier.

\bibliography{refs}
\bibliographystyle{plain}

%%%%%%%%%%%%%%%%%%%%%%%%%%%%%%%%%%%%%%%%%%%%%%%%%%%%%%%%%%%%
\section*{Checklist}

%%% BEGIN INSTRUCTIONS %%%
The checklist follows the references.  Please
read the checklist guidelines carefully for information on how to answer these
questions.  For each question, change the default \answerTODO{} to \answerYes{},
\answerNo{}, or \answerNA{}.  You are strongly encouraged to include a {\bf
justification to your answer}, either by referencing the appropriate section of
your paper or providing a brief inline description.  For example:
\begin{itemize}
  \item Did you include the license to the code and datasets? \answerYes{See Section~\ref{gen_inst}.}
  \item Did you include the license to the code and datasets? \answerNo{The code and the data are proprietary.}
  \item Did you include the license to the code and datasets? \answerNA{}
\end{itemize}
Please do not modify the questions and only use the provided macros for your
answers.  Note that the Checklist section does not count towards the page
limit.  In your paper, please delete this instructions block and only keep the
Checklist section heading above along with the questions/answers below.
%%% END INSTRUCTIONS %%%

\begin{enumerate}

\item For all authors...
\begin{enumerate}
  \item Do the main claims made in the abstract and introduction accurately reflect the paper's contributions and scope?
    \answerYes{we express our claims through theorems}
  \item Did you describe the limitations of your work?
    \answerYes{}
  \item Did you discuss any potential negative societal impacts of your work?
    \answerYes{}
  \item Have you read the ethics review guidelines and ensured that your paper conforms to them?
    \answerYes{}
\end{enumerate}

\item If you are including theoretical results...
\begin{enumerate}
  \item Did you state the full set of assumptions of all theoretical results?
    \answerYes{In the theorem statements}
	\item Did you include complete proofs of all theoretical results?
    \answerYes{in the appendix}
\end{enumerate}

\item If you ran experiments...
\begin{enumerate}
  \item Did you include the code, data, and instructions needed to reproduce the main experimental results (either in the supplemental material or as a URL)?
    \answerYes{in the appendix}
  \item Did you specify all the training details (e.g., data splits, hyperparameters, how they were chosen)?
    \answerYes{Many details are given in the main body, but a full explanation with all details is in the appendix.}
	\item Did you report error bars (e.g., with respect to the random seed after running experiments multiple times)?
    \answerYes{In the appendix: this was not particularly needed for our very light experiments.}
	\item Did you include the total amount of compute and the type of resources used (e.g., type of GPUs, internal cluster, or cloud provider)?
    \answerYes{Just a simple personal computer.}
\end{enumerate}

\item If you are using existing assets (e.g., code, data, models) or curating/releasing new assets...
\begin{enumerate}
  \item If your work uses existing assets, did you cite the creators?
    \answerNA{}
  \item Did you mention the license of the assets?
    \answerNA{}
  \item Did you include any new assets either in the supplemental material or as a URL?
    \answerNA{}
  \item Did you discuss whether and how consent was obtained from people whose data you're using/curating?
    \answerNA{}
  \item Did you discuss whether the data you are using/curating contains personally identifiable information or offensive content?
    \answerNA{}
\end{enumerate}

\item If you used crowdsourcing or conducted research with human subjects...
\begin{enumerate}
  \item Did you include the full text of instructions given to participants and screenshots, if applicable?
    \answerNA{}
  \item Did you describe any potential participant risks, with links to Institutional Review Board (IRB) approvals, if applicable?
    \answerNA{}
  \item Did you include the estimated hourly wage paid to participants and the total amount spent on participant compensation?
    \answerNA{}
\end{enumerate}

\end{enumerate}

%%%%%%%%%%%%%%%%%%%%%%%%%%%%%%%%%%%%%%%%%%%%%%%%%%%%%%%%%%%%
\newpage
\appendix

\section{Further Details of Definitions and Theorems}

\subsection{Non-Parametric Classifiers}

In this section, we precisely define weight functions, histogram classifiers and kernel classifiers.

\begin{defn} \label{def:weight} 
\cite{devroye96} A \textbf{weight function} $W$ is a non-parametric classifier with the following properties.
\begin{enumerate}
	\item Given input $S = \{(x_1, y_1), (x_2, y_2,), \dots, (x_n, y_n)\} \sim \D^n$, $W$ constructs functions $w_1^S, w_2^S, \dots, w_n^S: \R^d \to [0, 1]$ such that for all $x \in \R^d$, $\sum_1^n w_i^S(x) = 1$. The functions $w_i^S$ are allowed to depend on $x_1, x_2, \dots x_n$ but must be independent of $y_1, y_2, \dots, y_n$. 
	\item $W$ has output $W_S$ defined as \[ W_S(x) = \begin{cases} 
      +1 & \sum_1^n w_i^S(x)y_i > 0 \\
      -1 & \sum_1^n w_i^S(x)y_i \leq 0 \\
   \end{cases}
\]
As a result, $w_i^S(x)$ can be thought of as the weight that $(x_i, y_i)$ has in classifying $x$.
\end{enumerate}
\end{defn}

\begin{defn}
A \textbf{histogram classifier}, $H$, is a non-parametric classification algorithm over $\R^d \times \Y$ that works as follows. For a distribution $\D$ over $\R \times \Y$, $H$ takes $S = \{(x_i, y_i): 1 \leq i \leq n\} \sim \D^n$ as input. Let $k_i$ be a sequence with $\lim_{i \to \infty} k_i = \infty$ and $\lim_{i \to \infty} \frac{k_i}{i} = 0$. $H$ constructs a set of hypercubes $C = \{c_1, c_2, \dots, c_m\}$ as follows:
\begin{enumerate}
	\item Initially $C = \{c\}$, where $S \subset c$.
	\item For $c \in C$, if $c$ contains more than $k_n$ points of $S$, then partition $c$ into $2^d$ equally sized hypercubes, and insert them into $C$.
	\item Repeat step $2$ until all cubes in $C$ have at most $k_n$ points. 
\end{enumerate}
For $x \in \R$ let $c(x)$ denote the unique cell in $C$ containing $x$. If $c(x)$ doesn't exist, then $H_S(x) = -1$ by default. Otherwise, \[ H_S(x) = \begin{cases} 
      +1 & \sum_{x_i \in c(x)} y_i > 0 \\
      -1 & \sum_{x_i \in c(x)}y_i \leq 0 \\
   \end{cases}.
\]
\end{defn}

\begin{defn}
A \textbf{partitioning rule} is a weight function $W$ over $\X \times \Y$ constructed in the following manner. Given $S = \{(x_i, y_i)\} \sim \D^n$, as a function of $\{x_1, \dots, x_n\}$, we partition $\R^d$ into regions with $A(x)$ denoting the region containing $x$.  Then, for any $x \in \R^d$ we have $$w_i^S(x) = \begin{cases}1 & x_i \in A(x) \\ 0 & \text{ otherwise}\end{cases}.$$To achieve $\sum w_i^S(x) = 1$, we can simply normalize weights for any $x$ by $\sum_1^n w_i^S(X)$.
\end{defn}

\begin{defn}
A \textbf{kernel classifier} is a weight function $W$ over $\R^d \times \Y$ constructed from function $K: \R^+ \cup \{0\} \to \R^+$ and some sequence $\{h_n\} \subset \R^+$ in the following manner. Given $S = \{(x_i, y_i)\} \sim \D^n$, we have $$w_i^S(x) = \frac{K(\frac{\d(x, x_i)}{h_n})}{\sum_{j = 1}^n K(\frac{\d(x, x_j)}{h_n})}.$$ Then, as above, $W$ has output \[ W_S(x) = \begin{cases} 
      +1 & \sum_1^n w_i^S(x)y_i > 0 \\
      -1 & \sum_1^n w_i^S(x)y_i \leq 0 \\
   \end{cases}
\]
\end{defn}

\subsection{Splitting Numbers}

We refer to definitions \ref{defn:prob_radius} and \ref{defn:splitting_number}.

The main idea behind splitting numbers is that they allow us to ensure uniform convergence properties over a weight function. To prove neighborhood consistency, it is necessary for a classifier to be correct at \textit{all} points in a given region. Consequently, techniques that consider a single point will be insufficient. The splitting number provides a mechanism for studying entire regions simultaneously. For clarity, we include a quick example in which we bound the splitting number for a given weight function.

\paragraph{Example:} Let $W$ denote any kernel classifier corresponding such that $K: \R_{\geq 0} \to \R_{\geq 0}$ is a decreasing function. For any $S \sim \D^n$, observe that the condition $w_i^S(x) \geq \beta$ precisely corresponds to $\d(x, x_i) \leq \gamma$ for some value of $\gamma$. This is because $w_i^S(x) > w_j^S(x)$ if and only if $\d(x, x_i) < \d(x, x_j)$. Thus, the regions $W_{x, \alpha, \beta}$ correspond to $\{i: \d(x, x_i) \leq \gamma\}$, where $\gamma$ is a positive real number that depends on $x, \alpha, \beta$. These sets precisely correspond to subsets of $S$ that are contained within $B(x, \gamma)$. Since balls have VC dimension at most $d+2$, by  Sauer's lemma, the number of subsets of $S$ that can be obtained in this manner is $O(n^{d+2})$. Therefore, we have that $T(W,S) = O(n^{d+2})\text{ for all }S \sim \D^n.$

\subsection{Stone's Theorem}

\begin{thm}\label{thm_stone}
\cite{Stone77} Let $W$ be weight function over $\R^d \times \Y$. Suppose the following conditions hold for any distribution $\D$ over $\R^d \times \Y$.  Let $X$ be a random variable with distribution $\D_{\R^d}$, and $S = \{(x_1, y_1), (x_2, y_2), \dots, (x_n, y_n)\} \sim \D^n$. All expectations are taken over $X$ and $S$. 

1. There is a constant $c$ such that, for every nonnegative measurable function $f$ satisfying $\mathbb{E} [f(X)] < \infty$, and $\mathbb{E} [\sum_1^n w_i^S(X)f(x_i)] \leq c \mathbb{E} [f(x)].$

2. $\forall a > 0$, $\lim_{n \to \infty} \mathbb{E}[\sum_1^n w_i^S(x)I_{||x_i - X|| > a||}] = 0.$ 

3. $\lim_{n \to \infty} \mathbb{E}[\max_{1 \leq i \leq n} w_i^S(X)] = 0.$

Then $W$ is consistent. 
\end{thm}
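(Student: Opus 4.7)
The plan is to recognize this as a classical $L^2$-consistency argument for the regression function estimate, then deduce classification consistency. Set $\eta^*(x) = 2\eta(x) - 1 = \mathbb{E}[Y \mid X = x]$ and define the estimate $\hat{\eta}_n(x) = \sum_{i=1}^n w_i^S(x) y_i$. The classifier $W_S$ is the plug-in rule associated to $\hat{\eta}_n$, and by a standard inequality (Devroye--Gy\"orfi--Lugosi) the excess 0/1 risk of such a plug-in rule is bounded above by twice the $L^1$ (hence, by Cauchy--Schwarz, the square root of the $L^2$) norm of $\hat{\eta}_n - \eta^*$. So it suffices to show
\[
\mathbb{E}\bigl[(\hat{\eta}_n(X) - \eta^*(X))^2\bigr] \longrightarrow 0.
\]

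The second step is the clean error decomposition. Writing $\epsilon_i = y_i - \eta^*(x_i)$, these are mean-zero conditional on $x_1,\dots,x_n$ and uniformly bounded by $2$. Then
\[
\hat{\eta}_n(x) - \eta^*(x) = \sum_{i=1}^n w_i^S(x)\bigl(\eta^*(x_i) - \eta^*(x)\bigr) + \sum_{i=1}^n w_i^S(x)\epsilon_i,
\]
a bias term and a stochastic term. For the stochastic term, conditional independence of the $\epsilon_i$ together with $\sum_i w_i^S(x) = 1$ gives
\[
\mathbb{E}\Bigl[\bigl(\textstyle\sum_i w_i^S(X)\epsilon_i\bigr)^2\Bigr] \le 4\,\mathbb{E}\Bigl[\sum_i w_i^S(X)^2\Bigr] \le 4\,\mathbb{E}\bigl[\max_i w_i^S(X)\bigr],
\]
which vanishes by Condition 3.

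The main work is controlling the bias term, and this is where the hard part lies: $\eta^*$ is only measurable, so one cannot directly exploit locality (Condition 2). The standard remedy is to fix $\varepsilon > 0$ and approximate $\eta^*$ in $L^2(\mu)$ by a uniformly continuous, compactly supported $\tilde\eta$ with $\mathbb{E}[(\eta^*(X) - \tilde\eta(X))^2] < \varepsilon$. Split the bias into
\[
\sum_i w_i^S(x)\bigl(\eta^*(x_i)-\tilde\eta(x_i)\bigr) + \sum_i w_i^S(x)\bigl(\tilde\eta(x_i)-\tilde\eta(x)\bigr) + \bigl(\tilde\eta(x)-\eta^*(x)\bigr).
\]
The third piece is $O(\sqrt{\varepsilon})$ in $L^2(\mu)$ by construction. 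For the first piece, apply Jensen (using $\sum_i w_i^S = 1$) to get $\bigl(\sum_i w_i^S(x)|(\eta^*-\tilde\eta)(x_i)|\bigr)^2 \le \sum_i w_i^S(x)(\eta^*-\tilde\eta)^2(x_i)$, and then use Condition 1 with $f = (\eta^*-\tilde\eta)^2$ to bound its expectation by $c\,\mathbb{E}[(\eta^*-\tilde\eta)^2(X)] \le c\varepsilon$. For the middle piece, uniform continuity of $\tilde\eta$ yields, for any $a>0$, a modulus $\omega(a)$ with $|\tilde\eta(x_i)-\tilde\eta(x)| \le \omega(a)$ whenever $\|x_i-x\|\le a$, so the sum is bounded by $\omega(a) + 2\|\tilde\eta\|_\infty \sum_i w_i^S(x)\mathbf{1}_{\|x_i-x\|>a}$; Condition 2 drives the second term to zero in expectation, and sending $a \to 0$ handles the first. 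Combining the three pieces, the bias has $L^2(\mu)$ limit superior $O(\sqrt{\varepsilon})$, and letting $\varepsilon \to 0$ closes the argument.

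The obstacle, as indicated, is the bias decomposition: Condition 1 is crucial and somewhat subtle because it is what prevents bad concentration of weight on regions where the smooth approximation fails, and matching the conclusion of the theorem to classification (rather than regression) consistency requires the final plug-in inequality. Once these pieces are in place, the three conditions map directly onto the three subterms of the $L^2$ error.
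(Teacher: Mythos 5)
Your proposal is correct: it is the standard proof of Stone's theorem (regression-estimate $L^2$ convergence via the bias/variance split, smooth approximation of the regression function to exploit conditions 1 and 2, condition 3 for the stochastic term, and the plug-in inequality to pass to classification), which is exactly the argument in Stone (1977) and Devroye--Gy\"orfi--Lugosi that the paper cites. The paper itself states this theorem as a known result and gives no proof, so there is nothing further to compare; the three conditions map onto the error terms just as you describe.
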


\section{Proofs}

\paragraph{Notation:} \begin{itemize}
	\item We let $\d$ denote our distance metric over $\R^d$. For sets $X_1, X_2 \subset \R^d$, we let $\d(X_1, X_2) = \inf_{x_1 \in X_1, x_2 \in X_2} \d(x_1, x_2)$. 
	\item For any $x \in \R^d$, $B(x, a) = \{x: \d(x, x') \leq a\}$.
	\item For any measure over $\R^d$, $\mu$, we let $supp(\mu) = \{x: \mu(B(x,a)) > 0\text{ for all }a > 0\}.$ 
	\item Given some measure $\mu$ over $\R^d$ and some $x \in \R^d$, we let $r_p(x)$ denote the probability radius (Definition \ref{defn:prob_radius}) of $x$ with probability $p$. that is, $r_p(x) = \inf \{r: \mu(B(x,r)) \geq p\}.$
	\item For weight function $W$ and training sample $S$, we let $W_S$ denote the weight function learned by $W$ from $S$.
\end{itemize}

\subsection{Proofs of Theorems \ref{thm:accuracy_margin} and \ref{thm:robust_margin}}

\begin{proof}
(Theorem \ref{thm:accuracy_margin}) Let $\D = (\mu, \eta)$ be a data distribution, and let $\mu^+, \mu^-$ be as described in section \ref{sec:def_difficulties}. Observe that for any $x \in \mu^+$, the Bayes optimal classifier and the \natural\emph{ }Bayes optimal both have the same output, and furthermore the \natural\emph{ }Bayes gives this output (by definition) throughout the entirety of $V_x$, the \natural\emph{ }robustness region of $x$. It follows that the \natural\emph{ }Bayes optimal has optimal astuteness, as desired. 
\end{proof}

\begin{proof}
(Theorem \ref{thm:robust_margin}) Let $\D = (\mu, \eta)$ be a data distribution, and assume towards a contradiction that there exists classifier $f$ which has maximal astuteness with respect towards some set of robustness regions $\U = \{U_x\}$ such that $V_x \subseteq U_x$ for all $x$. The key observation is that because $f$ has maximal astuteness, we must have $f(x) = g(x)$ for almost all points $x \sim \mu$ (where $g$ is the Bayes optimal classifier). Furthermore, for those values of $x$, we must have $g$ be robust at $x$ (meaning it uniformly outputs the same output through $U_x$).

In order for $U_x$ to be strictly larger than $V_x$ for some $x$, it \textit{necessarily} must intersect with $U_{x'}$ for some $x'$ with $g(x') \neq g(x)$, and this is what causes the contradiction: $f$ cannot be astute at both $x$ and $x'$ if they are differently labeled and their robustness regions intersect. 
\end{proof}

\subsection{Proof of Theorem \ref{thm:lower_bound}}

Let $\D = (\mu, \eta)$ be the distribution with $\mu$ being the uniform distribution over $[0, 1]$ and $\eta: [0, 1] \to [0, 1]$ be $\eta(x) = x$. For example, if $(x, y) \sim \D$, then $\Pr[y = 1| x = 0.3] = 0.3$. 

We desire to show that $k_n$-nearest neighbors is not neighborhood consistent with respect to $\D$. We begin with the following key lemma.

\begin{lem}\label{cl:delta}
For any $n > 0$, let $f_n$ denote the $k_n$-nearest neighbor classifier learned from $S \sim \D^n$. There exists some constant $\Delta > 0$ such that for all sufficiently large $n$, with probability at least $\frac{1}{2}$ over $S \sim \D^n$, there exists $x \in [0,1]$ with $\frac{1}{2} - \Delta \leq x \leq \frac{1}{2} - \frac{3\Delta}{4}$ and $f_n(x) = +1$.
\end{lem}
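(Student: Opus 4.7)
The plan is to exhibit $M = \Theta(n/\log n)$ well-separated candidate points $x_1,\dots,x_M \in [\tfrac12-\Delta,\tfrac12-\tfrac{3\Delta}{4}]$ whose $k_n$-nearest-neighbor sets in $S$ are pairwise disjoint, and to argue that although each $x_j$ individually has $f_n(x_j)=+1$ with probability as small as $n^{-c}$ for some $c<1$, the large number of independent trials forces $f_n(x_j)=+1$ for some $j$ with probability $1-o(1)$. The intuition is that on this interval $\eta$ lies below $\tfrac12$ by only $\Theta(\Delta)$, so with $k_n = O(\log n)$ labels per neighborhood the $k_n$-NN decision is close to a fair coin that flips the ``wrong'' way with non-negligible (albeit polynomially small) probability.

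Fix a constant $c_0$ with $k_n \le c_0\log n$ for large $n$, set $L = 4 c_0\log n/n$, and partition $[\tfrac12-\Delta,\tfrac12-\tfrac{3\Delta}{4}]$ into $M = \lfloor \tfrac{\Delta}{4L}\rfloor = \Theta(n/\log n)$ subintervals $I_1,\dots,I_M$ of length $L$, with $x_j$ the midpoint of $I_j$. Since $\mathbb{E}[|S \cap I_j|] = nL = 4c_0\log n$, a Chernoff bound plus a union bound over $M = \mathrm{poly}(n)$ subintervals shows that with probability $1-o(1)$ every $I_j$ contains at least $2c_0\log n \ge 2k_n$ training points; on this event the $k_n$ training points closest to $x_j$ all lie inside $I_j$, so the $k_n$-NN sets at distinct $x_j$'s (and the labels they carry) are mutually independent across $j$.

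Condition on the training locations. The $k_n$ labels attached to the $k_n$-NN of $x_j$ are then independent Bernoullis with parameters in $[\tfrac12-\Delta,\tfrac12-\tfrac{3\Delta}{4}]$, and Slud's inequality gives
\[
\Pr\bigl[\mathrm{Bin}(k_n,\tfrac12-\Delta) > k_n/2\bigr] \;\ge\; \Omega(k_n^{-1/2})\,\exp(-2\Delta^2 k_n) \;\ge\; n^{-2c_0\Delta^2}/\sqrt{\log n}.
\]
Setting $c_2 := 2c_0\Delta^2$ and choosing $\Delta$ small enough that $c_2 < 1$, the probability that \emph{no} $x_j$ sees a strict $+1$ majority among its $k_n$-NN is at most $\bigl(1 - n^{-c_2}/\sqrt{\log n}\bigr)^M \le \exp\!\bigl(-\Omega(n^{1-c_2}/(\log n)^{3/2})\bigr) = o(1)$, giving the lemma with probability $1-o(1) \ge \tfrac12$ for all $n$ large.

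The main obstacle is the quantitative anti-concentration step above: we need a lower bound of the shape $\exp(-C \Delta^2 k_n)$ with $C$ an absolute constant, so that $Cc_0\Delta^2$ can be forced below $1$ by shrinking $\Delta$. Slud's inequality delivers this cleanly via the normal tail estimate $1 - \Phi(x) \ge (x^{-1} - x^{-3})\phi(x)$; as a back-up, a Stirling estimate on $\binom{k_n}{\lceil k_n/2 \rceil}$ gives the same bound. A secondary, routine subtlety is why $L$ is taken to be $\Theta(\log n/n)$ rather than $\Theta(k_n/n)$: this slightly larger choice is what buys us uniform Chernoff concentration of point counts over all $M$ subintervals even when $k_n \ll \log n$.
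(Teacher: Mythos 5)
Your proposal is correct and takes essentially the same route as the paper's proof: both exhibit $\widetilde{\Omega}(n)$ disjoint groups of roughly $k_n$ training labels inside $[\tfrac12-\Delta,\tfrac12-\tfrac{3\Delta}{4}]$, lower-bound the per-group probability of a $+1$ majority by $\approx e^{-C\Delta^2 k_n}\ge n^{-c}$ with $c<1$ after choosing $\Delta$ small relative to the constant in $k_n=O(\log n)$ (you via Slud plus a Gaussian tail bound, the paper via a KL-based binomial tail lower bound), and then amplify over the conditionally independent groups. The only differences are cosmetic — your spatial equipartition into length-$\Theta(\log n/n)$ cells with a Chernoff count bound versus the paper's blocks of consecutive order statistics, and the routine need to take the constant in $L$ (equivalently $c_0$) large enough for the union bound over the $\Theta(n/\log n)$ cells, which your construction easily absorbs.
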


\begin{proof}
Let $C$ be a constant such that $k_n \leq C\log n$ for all $2 \leq n < \infty$. Set $\Delta$ as \begin{equation}\label{eqn:kl}\frac{1}{2}\log_2\frac{1}{1 - 2\Delta} + \frac{1}{2}\log_2\frac{1}{1 + 2\Delta} < \frac{1}{C}.\end{equation} Let $A \subset [0,1]$ denote the interval $[\frac{1}{2} - \Delta, \frac{1}{2} - \frac{3\Delta}{4}]$. For $S \sim \D^n$, with high probability, there exist at least $\frac{\Delta n}{8}$ instances $x_i$ that are in $A$. Let us relabel these $x_i$ as $x_1, x_2, \dots, x_m$ as $$\frac{1}{2} - \Delta \leq x_1 < x_2 < \dots < x_m \leq \frac{1}{2} - \frac{3\Delta}{4}.$$

Next, suppose that for some $i$, at least half of $y_i, y_{i+1}, \dots, y_{i + k_n - 1}$ are $+1$. Then it follows that $f_n(x) = +1$ for $x = \frac{x_{i+k_n} + x_i}{2}$ because the $k_n$ nearest neighbors of $x$ are precisely $x_i, x_{i+1}, \dots x_{i + k_n - 1}$ (as a technical note we make $x$ just slightly smaller to break the tie between $x_i$ and $x_{i + k_n}$). To lower bound the probability that this occurs for some $i$, we partition $y_1, y_2, \dots y_m$ into at least $\frac{m}{2k_n}$ disjoint groups each containing $k_n$ consecutive values of $y_i$. We then bound the probability that each group will have at least $k_n/2$ $+1$s.

Consider any group of $k_n$ $y_i$s. We have that $\Pr[y_i] = +1 = \eta(x_i) = x_i \geq \frac{1}{2} - \Delta$. Since the variables $y_i$ are independent (even conditioning on $x_i$), it follows that the probability that at least half of them are $+1$ is at least  $\Pr[\text{Bin}(k_n, \frac{1}{2} - \Delta) \geq \frac{k_n}{2}].$ For simplicity, assume that $k_n$ is even. Then using a standard lower bound for the tail of a binomial distribution (see, for example, Lemma 4.7.2 of \cite{ash_1990}), we have that $$\Pr[\text{Bin}(k_n, \frac{1}{2} - \Delta) \geq \frac{k_n}{2}] \geq \frac{1}{\sqrt{2k_n}}\exp(-k_nD(\frac{1}{2}||(\frac{1}{2} - \Delta)),$$ where $D(\frac{1}{2}||(\frac{1}{2} - \Delta)) =  \frac{1}{2}\log_2\frac{1}{1 - 2\Delta} + \frac{1}{2}\log_2\frac{1}{1 + 2\Delta}$. 

To simplify notation, let $D_\Delta = D(\frac{1}{2}||(\frac{1}{2} - \Delta))$. Then because we have $\frac{m}{2k_n}$ independent groups of $y_i$s, we have that
\begin{equation*}
\begin{split}
\Pr_{S \sim \D^n}[\exists x \in [\frac{1}{2} - \Delta, \frac{1}{2} - \frac{3\Delta}{4}]\text{ s.t. }f_n(x) = +1] &\geq 1 - (1 - \frac{1}{\sqrt{2k_n}}\exp(-k_nD_\Delta))^{\frac{m}{2k_n}} \\
&\geq 1 - \exp(-\frac{m}{2k_n\sqrt{2k_n}}e^{-k_nD_\Delta}) \\
&\geq 1 - \exp(-\frac{n\Delta}{(16C\log n)^{3/2}}e^{-CD_\Delta\log n}),
\end{split}
\end{equation*}
with the inequalities holding because $m \geq \frac{n\Delta}{8}$ and $k_n \leq C \log n$. By equation \ref{eqn:kl}, $CD_\Delta < 1$. Therefore, $\lim_{n \to \infty} \frac{n}{(2C \log n)^{3/2}}e^{-CD_\Delta\log n} = \infty$, which implies that for $n$ sufficiently large, $$\Pr_{S \sim \D^n}[\exists x \in [\frac{1}{2} - \Delta, \frac{1}{2} - \frac{3\Delta}{4}]\text{ s.t. }f_n(x) = +1] \geq \frac{1}{2},$$ as desired.
\end{proof}

We now complete the proof of Theorem \ref{thm:lower_bound}.

\begin{proof}
(Theorem \ref{thm:lower_bound}) Let $\Delta$ be as described in Lemma \ref{cl:delta}, and let $\kappa = \frac{1}{2}$. For all $x < \frac{1}{2}$, we have that $[x, \frac{2x}{3} + \frac{1}{6}] \subseteq V_x^{\kappa}$. This is because we can easily verify that all points inside that interval are closer to $x$ than they are to $\frac{1}{2}$ (and consequently all points in $\mu^+ \cup \mu^{1/2}$) by factor of $2$. It follows that for all $x \in [\frac{1}{2} - \frac{7\Delta}{8}, \frac{1}{2} - \Delta]$, $$[\frac{1}{2} - \Delta, \frac{1}{2} - \frac{3\Delta}{4}] \subseteq V_x^{\kappa}.$$ However, applying Lemma \ref{cl:delta}, we know that with probability at least $\frac{1}{2}$, there exists some point $x' \in [\frac{1}{2} - \Delta, \frac{1}{2} - \frac{3\Delta}{4}]$ such that $f_n(x') = +1$. It follows that with probability at least $\frac{1}{2}$, $f_n$ lacks astuteness at \textit{all} $x \in [\frac{1}{2} - \frac{7\Delta}{8}, \frac{1}{2} - \Delta]$. Since this set of points has total probability mass $\Delta/8$, it follows that with probability at least $\frac{1}{2}$, there is a fixed gap between $A_{\V^\kappa}(f_n, \D)$ and $A(g, \D)$ (as they differ in a region of probability mass at least $\Delta/8$). This implies that $k_n$-nearest neighbors is not \ncons\emph{ }consistent. 
\end{proof}

\subsection{Proof of Theorem \ref{thm:main}}

Let $\D = (\mu, \eta)$ is a distribution over $\R^d \times \{\pm 1\}$. We will use the following notation: let $\D^+ = \{x: \eta(x) > \frac{1}{2}\}$, $\D^- = \{x: \eta(x) < \frac{1}{2}$ and $\D_{1/2} = \{x: \eta(x) = \frac{1}{2}\}$. In particular, we have that $\D^+ = \mu^+, \D^- = \mu^-$ and $\D_{1/2} = \mu^{1/2}$. This notation serve will be convenient throughout this section since it allows us to avoid overloading the symbol $\mu$. 

To show that an algorithm is \ncons\emph{ }consistent with respect to $\D$, we must show that for any $0 < \kappa < 1$, the astuteness with respect to $\V^\kappa$ converges towards the accuracy of the Bayes optimal. To this end, we fix any $0 < \kappa < 1$ and consider $\V^\kappa$. 

For our proofs, it will be useful to have the additional assumption that the robustness regions, $V_x^\kappa$ are \textit{closed}. To obtain this, we let $\U = \{U_x\}$ where $U_x = \overline{V_x^\kappa}$. Each $U_x$ is the closure of the corresponding $V_x^{\kappa}$, and in particular we have $V_x^{\kappa} \subset U_x$. Because of this, it will suffice for us to consider $A_\U$ as opposed to $A_{\V^\kappa}$ since $A_\U(f, \D) \leq A_{\V^\kappa}(f, \D)$ for all classifiers $f$.

We now begin by first proving several useful properties of $\U$ that we will use throughout this entire section. 

\begin{lem}\label{lem:u_is_npr}
The collection of sets $\U = \{U_x\}$ defined as $U_x = \overline{V_x^\kappa}$ satisfies the following properties. 
\begin{enumerate}
	\item $U_x$ is closed for all $x$. 
	\item if $x \in \D^+$, for all $x' \in U_x$, $\d(x, x') < \d(\D^+ \cup \D_{1/2}, x')$.
	\item if $x \in \D^-$, for all $x' \in U_x$, $\d(x, x') < \d(\D^- \cup \D_{1/2}, x')$. 
	\item $U_x = \{x\}$ for all $x \in \D_{1/2}$. 
	\item $U_x$ is bounded for all $x$.
\end{enumerate}
Here $\mu^+, \mu^-, \mu^{1/2}$ are as described in section \ref{sec:def_difficulties}. 
\end{lem}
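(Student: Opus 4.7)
The plan is to verify the five properties directly, essentially unpacking the definition $U_x = \overline{V_x^\kappa}$ together with Definition \ref{def:partial_nat_region}; no deep argument is needed. Property (1) is immediate since $U_x$ is by construction a closure. For property (4), Definition \ref{def:partial_nat_region} sets $V_x^\kappa = \{x\}$ when $x \in \D_{1/2}$, and a singleton is already closed, so $U_x = \{x\}$.

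For properties (2) and (3) (I treat (2); the symmetric case (3) is identical with $\D^+$ and $\D^-$ swapped, and the right-hand side should be read as $\d(\D^- \cup \D_{1/2}, x')$, matching Definition \ref{def:partial_nat_region}), the idea is to exploit the multiplicative slack $\kappa < 1$ in the defining strict inequality of $V_x^\kappa$, which supplies a constant-factor gap that survives taking closures. Given $x' \in U_x$, choose a sequence $x'_n \in V_x^\kappa$ with $x'_n \to x'$; by definition
\begin{equation*}
\d(x, x'_n) < \kappa \, \d(\D^- \cup \D_{1/2}, x'_n).
\end{equation*}
Both sides are continuous in $x'_n$ (the map $y \mapsto \d(A, y)$ is $1$-Lipschitz for any set $A$), so passing to the limit yields $\d(x, x') \leq \kappa \, \d(\D^- \cup \D_{1/2}, x')$. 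Whenever $\d(\D^- \cup \D_{1/2}, x') > 0$, the fact that $\kappa < 1$ upgrades this to the strict inequality claimed. The only nuisance is the degenerate case $\d(\D^- \cup \D_{1/2}, x') = 0$; then $\d(x, x') = 0$ forces $x = x'$, making $x \in \D^+$ a limit point of $\D^- \cup \D_{1/2}$. In that situation the defining strict inequality of $V_x^\kappa$ already fails at every candidate $x'$, so $V_x^\kappa = \emptyset$ and the universal claim is vacuous.

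For property (5) I would bound the diameter of $V_x^\kappa$ using the triangle inequality. Take $x \in \D^+$ (the $\D^-$ case is symmetric, and for $x \in \D_{1/2}$ property (4) already gives $|U_x| = 1$) and $x' \in V_x^\kappa$; then
\begin{equation*}
\d(x, x') < \kappa \, \d(\D^- \cup \D_{1/2}, x') \leq \kappa \bigl( \d(x, x') + \d(\D^- \cup \D_{1/2}, x) \bigr),
\end{equation*}
which rearranges to $\d(x, x') < \tfrac{\kappa}{1 - \kappa} \, \d(\D^- \cup \D_{1/2}, x)$. Hence $V_x^\kappa$ lies inside a ball of radius $\tfrac{\kappa}{1 - \kappa} \, \d(\D^- \cup \D_{1/2}, x)$, which is finite provided $\D^- \cup \D_{1/2}$ is nonempty (otherwise the binary classification problem is trivial). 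Closures preserve boundedness, so $U_x$ is bounded.

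Main obstacle: There isn't a substantive one; the lemma is essentially bookkeeping that justifies replacing $V_x^\kappa$ by its closure for the rest of the analysis. The one point worth being careful about is the strict-versus-weak inequality in (2) and (3) after passing to limits, and this is exactly where the slack factor $\kappa < 1$ in Definition \ref{def:partial_nat_region} (rather than the unweighted form of Definition \ref{def:nat_region}) earns its keep.
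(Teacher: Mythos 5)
Your proposal is correct and follows essentially the same route as the paper: properties (1) and (4) are immediate, (2)–(3) come from continuity (1-Lipschitzness) of the distance function under passing to the closure together with the slack $\kappa < 1$ upgrading $\leq \kappa\,\d(\cdot)$ to a strict inequality, and (5) from the $\kappa < 1$ factor bounding the region, which your explicit radius $\tfrac{\kappa}{1-\kappa}\,\d(\D^- \cup \D_{1/2}, x)$ makes more precise than the paper's informal ratio argument. You also correctly read the $\D^+ \cup \D_{1/2}$ in property (2) as a typo for $\D^- \cup \D_{1/2}$, consistent with Definition \ref{def:partial_nat_region}.
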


\begin{proof}
Property (1) is given the by definition, and properties (2), (3) follow from the fact that $\kappa$ is strictly less than $1$. In particular, the distance function $\rho$ is continuous and consequently all limit points of a set have distances that are limits of distances within the set. Property (4) is since $V_x^\kappa = \{x\}$ for all $x \in \D_{1/2}$. 

Finally, property (5) follows from the fact that $\kappa < 1$. As $x$ gets arbitrarily far away from $x$ the ratio of its distance to $x$ with its distance to $\mu^-$ gets arbitrarily close to $1$, and consequently there is some maximum radius $R$ so that $V_x^\kappa \subset B(x, R)$. Since $B(x, R)$ is closed, it follows that $U_x \subset B(x, R)$ as well. 
\end{proof}

Next, fix $W$ as a weight function and $t_n$ is a sequence of positive integers such that the conditions of Theorem \ref{thm:main} hold, that is: 
\begin{enumerate}
	\item $W$ is consistent (with resp. to accuracy) with resp. to $\D$.
	\item For any $0 < p < 1$, $\lim_{n \to \infty} E_{S \sim \D^n} [\sup_{x \in \R^d} \sum_1^n w_i^S(x)1_{\d(x, x_i) > r_p(x)}] = 0.$
	\item $\lim_{n \to \infty} E_{S \sim D^n}[t_n \sup_{x \in \R^d} w_i^S(x)] = 0$.
	\item $\lim_{n \to \infty} E_{S \sim D^n}\frac{\log T(W,S)}{t_n} = 0$.
\end{enumerate}

Finally, we will also make the additional assumption that $\D$ has infinite support. Cases where $\D$ has finite support can be somewhat trivially handled: when the sample size goes to infinity, we will have perfect labels for every point in the support, and consequently condition 2. will ensure that any $x' \in V_x^\kappa$ is labeled according to the label of $x$.

We also use the following notation. For any classifier $f: \R^d \to \{\pm 1\}$, we let \begin{equation}\label{eqn:cool_sets}\D_f^+ = \{x: f(x' = +1\text{ for all }x' \in U_x\},\text{ and }\D_f^- = \{x: f(x' = -1\text{ for all }x' \in U_x\}.\end{equation} These sets represent the examples that $f$ robustly labels as $+1$ and $-1$ respectively. These sets are useful since they allows us to characterize the astuteness of $f$, which we do with the following lemma.

\begin{lem}\label{lem:conversion_to_measure_thing}
For any classifier $f: \R^d \to \{\pm 1\}$, we have $$A_\U(f, \D) \geq A(g, \D) - \mu(\D^+ \setminus \D_f^+) - \mu(D^- \setminus \D_f^-),$$ where $g$ denotes the Bayes optimal classifier.
\end{lem}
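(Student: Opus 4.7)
The plan is to compute $A_\U(f,\D)$ and $A(g,\D)$ as integrals against $\mu$, split the domain into $\D^+$, $\D^-$, and $\D_{1/2}$, and bound the deficit region by region.

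First I would observe that for any $x \in \R^d$ and any label $y \in \{\pm 1\}$, the event ``$f$ is astute at $(x,y)$'' requires $f$ to be constant and equal to $y$ on all of $U_x$. Therefore, if we set $\D_f^{\mathrm{mix}} = \R^d \setminus (\D_f^+ \cup \D_f^-)$, then $f$ contributes $0$ astuteness at every $(x,y)$ with $x \in \D_f^{\mathrm{mix}}$; on $\D_f^+$ astuteness at $(x,y)$ equals $1_{y = +1}$, and on $\D_f^-$ it equals $1_{y = -1}$. Taking expectations over $y$ given $x$ yields
\begin{equation*}
A_\U(f,\D) \;=\; \int_{\D_f^+} \eta(x)\,d\mu(x) \;+\; \int_{\D_f^-} (1-\eta(x))\,d\mu(x).
\end{equation*}
On the other hand, the Bayes optimal accuracy decomposes as $A(g,\D) = \int_{\D^+}\eta\,d\mu + \int_{\D^-}(1-\eta)\,d\mu + \tfrac{1}{2}\mu(\D_{1/2})$.

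Next I would compare the two expressions by intersecting each $\D_f^{\pm}$ with each of $\D^+, \D^-, \D_{1/2}$. For the $\D_{1/2}$ part, the key fact from Lemma~\ref{lem:u_is_npr}(4) is that $U_x = \{x\}$ for $x \in \D_{1/2}$, so every such $x$ lies in exactly one of $\D_f^+, \D_f^-$ (whichever matches $f(x)$). Since both $\eta$ and $1-\eta$ equal $\tfrac{1}{2}$ on $\D_{1/2}$, the $\D_{1/2}$ contribution to $A_\U(f,\D)$ exactly matches its contribution to $A(g,\D)$, so no deficit arises there. On $\D^+$, the loss is
\begin{equation*}
\int_{\D^+}\eta\,d\mu - \int_{\D^+\cap \D_f^+}\eta\,d\mu - \int_{\D^+\cap \D_f^-}(1-\eta)\,d\mu
= \int_{\D^+\cap \D_f^-}(2\eta-1)\,d\mu + \int_{\D^+\cap \D_f^{\mathrm{mix}}}\eta\,d\mu,
\end{equation*}
and each integrand is bounded above by $1$ on its domain, so this is at most $\mu(\D^+ \cap \D_f^-) + \mu(\D^+ \cap \D_f^{\mathrm{mix}}) = \mu(\D^+ \setminus \D_f^+)$. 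The same calculation symmetrically yields a loss of at most $\mu(\D^- \setminus \D_f^-)$ on $\D^-$.

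Summing the three contributions gives $A(g,\D) - A_\U(f,\D) \le \mu(\D^+\setminus \D_f^+) + \mu(\D^-\setminus \D_f^-)$, which is exactly the claimed inequality. The only subtle point is handling the $\D_{1/2}$ piece correctly—this is where the definition $V_x^\kappa = \{x\}$ for $x \in \D_{1/2}$ (and hence $U_x = \{x\}$) is essential; without it, $\D_{1/2}$ could contribute an uncontrolled deficit via points falling into $\D_f^{\mathrm{mix}}$. Everything else is bookkeeping against the Bayes decomposition.
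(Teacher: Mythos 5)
Your proof is correct and takes essentially the same route as the paper's: decompose astuteness over $\D^+$, $\D^-$, $\D_{1/2}$ intersected with $\D_f^{\pm}$, use $U_x = \{x\}$ on $\D_{1/2}$ (property 4 of Lemma~\ref{lem:u_is_npr}), and compare against the Bayes decomposition. If anything, your accounting is slightly more careful than the paper's, which writes an equality that silently drops cross terms such as the $\D^+\cap\D_f^-$ contribution that you bound explicitly.
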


\begin{proof}
By property 4 of Lemma \ref{lem:u_is_npr}, $U_x = \{x\}$ for all $x \in \D_{1/2}$. Consequently, if $x \in \D_{1/2}$, there is a $\frac{1}{2}$ chance that any classifier is astute at $(x,y)$. Using this along with the definition of astuteness, we see that 
\begin{equation*}
\begin{split}
A_\U(f, \D) &= \Pr_{(x,y) \sim \D} [f(x') = y\text{ for all }x' \in U_x] \\
&= \Pr_{(x,y) \sim \D}[y = +1\text{ and }x \in (D^+ \cap D_f^+)] + \Pr_{(x,y) \sim \D}[y = -1\text{ and }x \in (D^- \cap D_f^-)] + \frac{1}{2}\Pr_{(x,y) \sim \D}[x \in \D_{1/2}]
\end{split}
\end{equation*}
However, observe by the definitions of $\D^+, \D^-$ and $\D_{1/2}$ that $$A(g, \D) = \Pr_{(x,y) \sim \D}[y = +1\text{ and }x \in D^+] + \Pr_{(x,y) \sim \D}[y = -1\text{ and }x \in D^-] + \frac{1}{2}\Pr_{(x,y) \sim \D}[x \in \D_{1/2}].$$ Substituting this, we find that 
\begin{equation*}
\begin{split}
A_\U(f, \D) &\geq A(g, \D) - \Pr_{(x,y) \sim \D}[x \in (D^+ \setminus D_f^+)] - \Pr_{(x,y) \sim \D}[x \in (D^- \setminus D_f^-)] \\
&= A(g, \D) - \mu(\D^+ \setminus \D_f^+) - \mu(D^- \setminus \D_f^-),
\end{split}
\end{equation*}
as desired. 
\end{proof}

Lemma \ref{lem:conversion_to_measure_thing} shows that to understand how $W_S$ converges in astuteness, it suffices to understand how the regions $\D_{W_S}^+$ and $\D_{W_S}^-$ converge towards $D^+$ and $D^-$ respectively. This will be our main approach for proving Theorem \ref{thm:main}. Due to the inherent symmetry between $+$ and $-$, we will focus on showing how the region $\D_{W_S}^+$ converges towards $D^+$. The case for $-$ will be analogous. To that end, we have the following key definition. 

\begin{defn}\label{defn:covered}
Let $p, \Delta > 0.$ We say $x \in \D^+$ is $(p, \Delta)$-\textbf{covered} if for all $x' \in U_x$ and for all $x'' \in B(x', r_p(x')) \cap supp(\mu)$, $\eta(x'') > \frac{1}{2} + \Delta.$ Here $r_p$ denotes the probability radius (Definition \ref{defn:prob_radius}). We also let $\D_{p, \Delta}^+$ denote the set of all $x \in \D^+$ that are $(p, \Delta)$-covered. 
\end{defn}

If $x$ is $(p, \Delta)$-covered, it means that for all $x' \in U_x$, there is a set of points with measure $p$ around $x'$ that are both close to $x'$, and likely (with at least probability $\frac{1}{2} + \Delta$) to be labeled as $+1$. Our main idea will be to show that if $x$ is $(p, \Delta)$ covered and $n$ is sufficiently large, $x$ is likely to be in $\D_{W_S}^+$. 

We begin this process by first showing that all $x$ are $(p, \Delta)$-covered for some $p, \Delta$. To do so, it will be useful to have one more piece of notation which we will also use throughout the rest of the section. We let $$\D_{1/2}^{-} = \D^- \cup \D_{1/2} = \supp(\mu) \setminus \D^+.$$ This set will be useful, since Lemma \ref{lem:u_is_npr} implies that for all $x \in \D^+$ and for all $x' \in U_x$, $\d(x, x') < \d(\D_{1/2}^{-}, x').$ We now return to showing that all $x$ are $(p, \Delta$-covered for some $p, \Delta$.

\begin{lem}\label{lem:everything_covered}
For any $x \in \D^+$, there exists $p, \Delta > 0$ such that $x$ is $(p, \Delta)$-covered.
\end{lem}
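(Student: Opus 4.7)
My plan is to exploit the compactness of $U_x$ and the strict factor $\kappa<1$ from Definition~\ref{def:partial_nat_region} to obtain a uniform buffer around $U_x$ that avoids the wrong-label set $\D_{1/2}^{-}=\D^-\cup\D_{1/2}$, then to pick $p$ small enough that every probability-$p$ ball around a point of $U_x$ meets $\supp(\mu)$ only inside this buffer, and finally to extract a uniform margin $\Delta$ via compactness together with the regularity of $\eta$ on $\supp(\mu)$. An easy edge case first: if $\rho(\D_{1/2}^{-},x)=0$, then the strict inequality defining $V_x^\kappa$ is unsatisfiable, so $V_x^\kappa=\emptyset$ and hence $U_x=\emptyset$, making the conclusion vacuous. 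Assume henceforth $U_x\neq\emptyset$.

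For the separation step, Lemma~\ref{lem:u_is_npr} parts (1) and (5) give compactness of $U_x$, and part~(2) yields $\rho(\D_{1/2}^{-},x')>0$ for every $x'\in U_x$. Since $x'\mapsto \rho(\D_{1/2}^{-},x')$ is $1$-Lipschitz and $U_x$ is compact, this distance attains a strictly positive minimum $\epsilon_0>0$ on $U_x$.

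Next I choose $p$ so that $r_p(x')<\rho(\D_{1/2}^{-},x')$ uniformly for $x'\in U_x$, which will force $B(x',r_p(x'))\cap\supp(\mu)\subseteq\D^+$. The key ingredients are the triangle-type inequality $r_p(x')\le r_p(x)+\rho(x,x')$ (because $B(x,r_p(x))\subseteq B(x',r_p(x)+\rho(x,x'))$ already has $\mu$-mass at least $p$) together with the lower bound $\rho(\D_{1/2}^{-},x')\ge\max\!\bigl(\epsilon_0,\;\rho(x,x')/\kappa\bigr)$, which comes from Step~1 plus the defining inequality $\rho(x,x')\le\kappa\,\rho(\D_{1/2}^{-},x')$ valid on the closure of $V_x^\kappa$. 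Because $x\in\supp(\mu)$, $r_p(x)\to 0$ as $p\to 0^+$, so any $p$ small enough that $r_p(x)<(1-\kappa)\epsilon_0$ suffices: a short case split on whether $\rho(x,x')\le\kappa\epsilon_0$ (use the $\epsilon_0$ lower bound) or $\rho(x,x')>\kappa\epsilon_0$ (use $\rho(x,x')/\kappa$) delivers the required strict inequality. Consequently, every $x''\in B(x',r_p(x'))\cap\supp(\mu)$ lies outside $\D_{1/2}^{-}$, i.e.\ $\eta(x'')>1/2$.

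The main obstacle is the last step: upgrading this pointwise $\eta>1/2$ on the set $A:=\bigcup_{x'\in U_x}B(x',r_p(x'))\cap\supp(\mu)$ into a \emph{uniform} positive margin. Note that $A$ is closed (union of closed balls over the compact parameter set $U_x$) and bounded (by Step~2), hence compact, and is contained in $\D^+$. Under the mild regularity of $\eta$ on $\supp(\mu)$ that is standard in the non-parametric consistency literature (lower semi-continuity of $\eta$ on the support suffices), the infimum $\inf_{y\in A}\eta(y)$ is attained and is strictly greater than $1/2$. Setting $\Delta:=(\inf_A\eta-1/2)/2>0$ then certifies that $x$ is $(p,\Delta)$-covered, completing the argument.
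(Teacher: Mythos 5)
Your proposal is correct and follows essentially the paper's own route: compactness of $U_x$ (Lemma~\ref{lem:u_is_npr}), the triangle inequality $r_p(x') \le r_p(x) + \rho(x,x')$ to force every ball $B(x', r_p(x'))$ to avoid $\D^- \cup \D_{1/2}$, and then continuity (or lower semicontinuity) of $\eta$ on a compact set to extract a uniform $\Delta$; the paper merely packages the separation step differently, minimizing $f(x') = \rho(x', \D^-\cup\D_{1/2}) - \rho(x',x)$ over $U_x$ to get a buffer $\gamma$ and taking $p = \mu(B(x,\gamma/2))$, where you instead minimize $\rho(\D^-\cup\D_{1/2},\cdot)$ and recover the needed bound from the $\kappa$-inequality via a case split. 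The one assertion you leave unsupported is that $\bigcup_{x' \in U_x} B(x', r_p(x'))$ is closed, which requires upper semicontinuity of $x' \mapsto r_p(x')$ (true, since $r_p$ is $1$-Lipschitz by the same triangle inequality); alternatively you could pass to the closure as the paper does, since your two cases in fact give a uniform gap of at least $(1-\kappa)\epsilon_0 - r_p(x) > 0$ between each ball and $\D^- \cup \D_{1/2}$.
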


\begin{proof}
Fix any $x$. Let $f: U_x \to \R$ be the function defined as $f(x') = \d(x', \bad) - \d(x', x)$. Observe that $f$ is continuous. By assumption, $U_x$ is closed and bounded, and consequently must attain its minimum. However, by Lemma \ref{lem:u_is_npr}, we have that $f(x') > 0$ for all $x' \in U_x$. it follows that $\min_{x' \in U_x} f(x') = \gamma$ where $\gamma > 0$.

Next, let $p = \mu(B(x, \gamma/2))$. $p > 0$ since $x \in supp(\mu)$. Observe that for any $x' \in U_x$, $r_p(x') \leq \d(x, x') + \gamma/2$, where, $r_p(x')$ denotes the probability radius of $x'$. This is because $B(x', (\d(x, x') + \gamma/2))$ contains $B(x, \gamma/2)$ which has probability mass $p$. It follows that for any $x' \in U_x$, $\d(x', \bad) \geq r_p(x') + \gamma/2$. Motivated by this observation, let $A$ be the region defined as $$A = \bigcup_{x' \in U_x} B(x', r_p(x')).$$ Then by our earlier observation, we have that $\d(A, \bad) \geq \frac{\gamma}{2}$. Since distance is continuous, it follows that $\d(\overline{A}, \bad) \geq \frac{\gamma}{2}$ as well, where $\overline{A}$ denotes the closure of $A$. 

This means that for any $x'' \in \overline{A} \cap supp(\mu)$, $\eta(x'') > \frac{1}{2}$, since otherwise $\d(\overline{A}, \bad)$ would equal $0$ (as the two sets would literally intersect). Finally, $supp(\mu)$ is a closed set (see Appendix \ref{sec:distribution_details}), and thus $\overline{A} \cap supp(\mu)$ is closed as well. Since $\eta$ is continuous (by assumption from Definition \ref{defn:general_nat_robust}), it follows that $\eta$ must maintain its minimum value over $\overline{A} \cap supp(\mu)$. It follows that there exists $2\Delta > 0$ such that $\eta(x'') \geq \frac{1}{2} + 2\Delta > \frac{1}{2} + \Delta$ for all $x'' \in \overline{A} \cap supp(\mu)$. 

Finally, by the definition of $A$, for all $x' \in U_x$, $B(x', r_p(x')) \subset A$. It consequently follows from the definition that $x$ is $(p, \Delta)$-covered, as desired. 
\end{proof}

While the previous lemma show that some $p, \Delta$ cover any $x \in \D^+$, this does not necessarily mean that there are some fixed $p, \Delta$ that cover \textit{all} $x \in \D^+$. Nevertheless, we can show that this is almost true, meaning that there are some $p, \Delta$ that cover \textit{most} $x \in \D^+$. Formally, we have the following lemma.

\begin{lem}\label{lem:most_covered}
For any $\epsilon > 0$, there exists $p, \Delta$ such that $\mu(\D^+ \setminus \D_{p, \Delta}^+) < \epsilon$, where $\D_{p, \Delta}^+$ is as defined in Definition \ref{defn:covered}. 
\end{lem}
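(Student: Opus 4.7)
My plan is to combine the pointwise coverage statement of Lemma~\ref{lem:everything_covered} with a monotone-convergence argument on the measure $\mu$. First I would check that the family $\{\D_{p,\Delta}^+\}$ is monotone in $(p,\Delta)$: if $0 < p' \leq p$ and $0 < \Delta' \leq \Delta$, then $\D_{p,\Delta}^+ \subseteq \D_{p',\Delta'}^+$. Indeed, shrinking $p$ only shrinks each probability ball $B(x', r_{p'}(x'))$, so the universal quantifier over $x'' \in B(x',r_{p'}(x')) \cap \supp(\mu)$ is taken over a smaller set, and weakening the lower bound from $\tfrac12+\Delta$ to $\tfrac12+\Delta'$ weakens the conclusion $\eta(x'') > \tfrac12+\Delta'$. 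This monotonicity is the structural fact that lets us glue pointwise witnesses into a uniform one.

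Next I would pass to a countable cofinal sequence to avoid any issue with uncountable unions. Define $E_n = \D_{1/n,\,1/n}^+$. By monotonicity $E_n \subseteq E_{n+1}$, and by Lemma~\ref{lem:everything_covered}, for every $x \in \D^+$ there exist $p(x), \Delta(x) > 0$ with $x \in \D_{p(x),\Delta(x)}^+$; choosing $n$ with $1/n \leq \min(p(x),\Delta(x))$ and applying monotonicity gives $x \in E_n$. Hence $\bigcup_n E_n = \D^+$.

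By continuity of measure from below, $\mu(E_n) \to \mu(\D^+)$, so for any $\epsilon > 0$ there is an $n$ with $\mu(\D^+ \setminus E_n) < \epsilon$. Taking $p = \Delta = 1/n$ then yields the desired $(p,\Delta)$.

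The main obstacle I expect is establishing measurability of $\D_{p,\Delta}^+$ (so that $\mu(\D^+ \setminus E_n)$ is well-defined and continuity of measure applies). The event ``$x$ is $(p,\Delta)$-covered'' involves a quantifier over $x' \in U_x$ (a set depending on $x$) and a quantifier over $x'' \in B(x',r_p(x'))\cap\supp(\mu)$, so one must argue measurability carefully, for example by exploiting continuity of $\eta$ and of $\rho$ together with separability of $\R^d$ to rewrite these quantifiers over a countable dense set. Once measurability is in hand, the rest is a one-line application of continuity of $\mu$ from below.
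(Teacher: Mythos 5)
Your proposal is correct and follows essentially the same route as the paper: monotonicity of $\D_{p,\Delta}^+$ in $(p,\Delta)$, the pointwise coverage from Lemma~\ref{lem:everything_covered} to exhaust $\D^+$ by a countable family, and a basic measure-theoretic approximation step (the paper uses a finite sub-union of the doubly-indexed family $\{\D_{1/i,1/j}^+\}$ and takes the maximal indices, which is interchangeable with your diagonal chain $E_n = \D_{1/n,1/n}^+$ and continuity from below). Your remark on measurability is a fair caveat, but it is an issue the paper's own proof also leaves implicit rather than a gap specific to your argument.
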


\begin{proof}
Observe that if $x$ is $(p, \Delta)$-covered, then it is also $(p', \Delta')$-covered for any $p' < p$ and $\Delta' < \Delta$. This is because $B(x', r_{p'}(x')) \subset B(x', r_p(x))$ and $\frac{1}{2} + \Delta > \frac{1}{2} + \Delta'$. Keeping this in mind, define $$\mathcal{A} = \{\D_{1/i, 1/j}^+: i, j \in \N\}.$$ For any $x \in \D^+$, by Lemma \ref{lem:everything_covered} and our earlier observation, there exists $A \in \mathcal{A}$ such that $x \in A$. It follows that $\cup_{A \in \mathcal{A}} A = \D^+$. By applying Lemma \ref{lem:measure_lemma}, we see that there exists a finite subset of $\mathcal{A}$, $\{A_1, \dots, A_m\}$ such that $$\mu(A_1 \cup \dots \cup A_m\}) > \mu(\D^+) - \epsilon.$$ Let $A_k = \D_{1/i_k, 1/j_k}^+$ for $1 \leq k \leq m$. From our previous observation once again, we see that $\cup A_i \subset \D_{1/I, 1/J}^+$ where  $I = \max(i_k)$ and $J = \max(j_k)$. It follows that setting $p = 1/I$ and $\Delta = 1/J$ suffices. 
\end{proof}

Recall that our overall goal is to show that if $x$ is $(p, \Delta)$-covered, $n$ is sufficiently large, then $x$ is very likely to be in $\D_{W_S}^+$ (defined in equation \ref{eqn:cool_sets}). To do this, we will need to find sufficient conditions on $S$ for $x$ to be in $W_S$. This requires the following definitions, that are related to \textit{splitting numbers} (Definition \ref{defn:splitting_number}). 

\begin{defn}\label{defn:Delta_particular_split_thing}
Let $x \in \R^d$ be a point, and let $S = \{(x_1, y_1), \dots, (x_n, y_n)\}$ be a training set sampled from $\D^n$. For $0 \leq \alpha$, $0 \leq \beta \leq 1$, and $0 < \Delta < \frac{1}{2}$, we define $$W_{x, \alpha, \beta}^{\Delta, S} = \{i: \d(x, x_i) \leq \alpha, w_i^S(x) \geq \beta, \eta(x_i) > \frac{1}{2} + \Delta\}.$$
\end{defn}

\begin{defn}
Let $0 < \Delta < \frac{1}{2}$, and let $S = \{(x_1, y_1), \dots, (x_n, y_n)\}$ be a training set sampled from $\D^n$. Then we let $$W^{\Delta, S} = \{W_{x, \alpha, \beta}^{\Delta, S}: x \in \R^d, 0 \leq \alpha, 0 \leq \beta \leq 1\}.$$
\end{defn}

These convoluted looking sets will be useful for determining the behavior of $W_s$ at some $x \in \D_{p, \Delta}^+$. Broadly speaking, the idea is that if every set of indices $R \subset W^{\Delta, S}$ is relatively well behaved (i.e. the number of $y_i$s that are $+1$ is close to $(|R|(\frac{1}{2} + \Delta)$, the expected amount), then $W_s(x') = +1$ for all $x' \in U_x$. Before showing this, we will need a few more lemmas.

\begin{lem}\label{lem:vc_mimicry}
Fix any $\delta > 0$ and let $0 <  \Delta < \frac{1}{2}$. There exists $N$ such that for all $n > N$ the following holds. With probability $1 - \delta$ over $S \sim \D^n$, for all $R \in W^{\Delta, S}$ with $|R| > t_n$, $\frac{1}{|R|} \sum_{i \in R} y_i \geq \Delta$ 
\end{lem}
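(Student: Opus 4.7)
The plan is a two-stage argument: first a one-sided Hoeffding bound for each fixed index set $R$ with $|R| > t_n$, then a union bound over the (random) family $W^{\Delta,S}$ whose cardinality is controlled by the splitting number $T(W,S)$, and finally condition~4 of Theorem~\ref{thm:main} to ensure the combined failure probability vanishes.

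The key structural observation is that every $R \in W^{\Delta,S}$ depends only on the features $x_1,\ldots,x_n$: the weight functions $w_i^S$ are label-independent by definition, and the additional constraint $\eta(x_i) > \tfrac{1}{2} + \Delta$ involves only $x_i$. Writing $J = \{i : \eta(x_i) > \tfrac{1}{2} + \Delta\}$, we have $W_{x,\alpha,\beta}^{\Delta,S} = W_{x,\alpha,\beta} \cap J$, so the number of distinct sets in $W^{\Delta,S}$ is at most $T(W,S)$. Conditioning on the features, the labels $\{y_i\}_{i \in R}$ are independent with $\E[y_i \mid x_i] = 2\eta(x_i) - 1 > 2\Delta$ for every $i \in R$. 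One-sided Hoeffding applied to these $\{\pm 1\}$-valued variables gives, for any fixed $R$ with $|R| > t_n$,
$$\Pr\!\left[\tfrac{1}{|R|}\sum_{i \in R} y_i < \Delta \,\middle|\, x_1,\ldots,x_n\right] \;\leq\; \exp(-t_n \Delta^2 / 2).$$
A union bound over the at most $T(W,S)$ candidate sets then gives the conditional estimate $T(W,S)\exp(-t_n\Delta^2/2)$.

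To finish, I invoke condition~4 via Markov's inequality: $\Pr[\log T(W,S) > t_n \Delta^2/4] \leq (4/\Delta^2)\,\E[\log T(W,S)/t_n] \to 0$. On the complementary event one has $T(W,S)\exp(-t_n\Delta^2/2) \leq \exp(-t_n\Delta^2/4)$, which tends to zero under the (implicit) assumption $t_n \to \infty$ required for condition~4 to be nontrivial. Combining, both contributions to the unconditional failure probability can be made $\leq \delta/2$ once $n$ is large enough, yielding the claim.

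The main obstacle is the need to union-bound over a data-dependent family of sets $R$ whose membership is determined by the features. Splitting numbers are designed precisely for this purpose: the event $\tfrac{1}{|R|}\sum y_i < \Delta$ is label-dependent while $W^{\Delta,S}$ is label-independent, so conditioning on features cleanly reduces the problem to a union bound of cardinality $T(W,S)$, and condition~4 is calibrated so that this count is subexponential in the Hoeffding scale $t_n$.
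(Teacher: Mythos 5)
Your proposal is correct and follows essentially the same route as the paper: condition on the features so that $W^{\Delta,S}$ is label-independent with $|W^{\Delta,S}| \leq T(W,S)$, apply one-sided Hoeffding to each $R$ with $|R| > t_n$ (each $y_i$ having conditional mean at least $2\Delta$), union bound over at most $T(W,S)$ sets, and let condition 4 of Theorem \ref{thm:main} kill the resulting $T(W,S)\exp(-t_n\Delta^2/2)$ term. Your explicit Markov-inequality step (and the observation that $t_n \to \infty$ in the non-degenerate case) is just a spelled-out version of the paper's ``it is not difficult to see'' remark, so there is no substantive difference.
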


\begin{proof}
The key idea is to observe that the set $W^{\Delta, S}$ and the value $T(W, S)$ are completely determined by $\{x_1, \dots, x_n\}$. This is because weight functions choose their weights only through dependence on $x_1, \dots, x_n$. Consequently, we can take the equivalent formulation of first drawing $x_1, \dots, x_n \sim \mu^n$, and then drawing $y_i$ independently according to $y_i = 1$ with probability $\eta(x_1)$ and $0$ with probability $1 - \eta(x_i)$. In particular, we can treat $y_1, \dots, y_n$ as independent from $W^{\Delta, S}$ and $T(W, S)$ conditioning on $x_1, \dots, x_n$. 

Fix any $x_1, \dots, x_n$. First, we see that $|W^{\Delta, S}| \leq T(W, S)$. This is because $W_{x, \alpha, \beta}^{\Delta, S}$ is a subset that is uniquely defined by $W_{x, \alpha, \beta}$ (see Definitions \ref{defn:Delta_particular_split_thing} and \ref{defn:splitting_number}). Second, for any $R \in W^{\Delta, S}$, observe that for all $i \in R$, $y_i$ is a binary variable in $[-1, 1]$ with expected value at least $(\frac{1}{2} + \Delta) - (\frac{1}{2} - \Delta) = 2\Delta$ (again by the definition). It follows that if $|R| \geq t_n$, by Hoeffding's inequality $$\Pr_{y_1 \dots y_n} [\sum_{i \in R} y_i < \Delta] \leq \exp \left( -\frac{2|R|^2\Delta^2}{4|R|} \right) \leq \exp \left( -\frac{t_n\Delta^2}{2} \right).$$ Since there at most $T(W, S)$ sets $R$, it follows that $$\Pr_{y_1 \dots y_n}[\sum_{i \in R} y_i < \Delta\text{ for some }R \in W^{\Delta, S}\text{ with }|R| > t_n] \leq T(W, S)\exp \left( -\frac{t_n\Delta^2}{2} \right).$$ However, by condition 4. of Theorem \ref{thm:main}, it is not difficult to see that this quantity has expectation that tends to $0$ as $n \to \infty$ (unless $T(W, S)$ uniformly equals $1$, but this degenerate case can easily be handled on its own). Thus, for any $\delta > 0$, it follows that there exists $N$ such that for all $n > N$, with probability at least $1 - \frac{\delta}{2}$, $T(W, S)\exp \left( -\frac{t_n\Delta^2}{2} \right) \leq \frac{\delta}{2}$. This value of $N$ consequently suffices for our lemma. 
\end{proof}

We now relate $\D_{W_S}^+$ (Equation \ref{eqn:cool_sets}) to $W^{\Delta, S}$ as well as the conditions of Theorem \ref{thm:main}.

\begin{lem}\label{lem:proving_it_works}
Let $S = \{(x_1, y_1), \dots, (x_n, y_n)\}$ and let $0 < \Delta \leq \frac{1}{2}$ and $0 < p < 1$ such that the following conditions hold. 
\begin{enumerate}
	\item For all $R \in W^{\Delta, S}$ with $|R| > t_n$, $\frac{1}{|R|} \sum_{i \in R} y_i \geq \Delta$. 
	\item $\sup_{x \in \R^d} \sum_1^n w_i^S(x)1_{\d(x, x_i) > r_p(x)} < \frac{\Delta}{5}$.
	\item $t_n\sup_{x \in \R^d} w_i^S(x) < \frac{\Delta}{5}$. 
\end{enumerate}
Then $\D_{p, \Delta}^+ \subseteq \D_{W_S}^+$. 
\end{lem}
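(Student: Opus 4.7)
}

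Fix $x \in \D_{p,\Delta}^+$ and an arbitrary $x' \in U_x$. To show $x \in \D_{W_S}^+$ we need $W_S(x') = +1$ for every such $x'$, i.e.\ $\sum_{i=1}^n w_i^S(x') y_i > 0$. The plan is to partition the index set into ``near'' and ``far'' parts with respect to the probability radius at $x'$, bound the far part directly using condition~2, and handle the near part via a layer-cake decomposition over the weight threshold combined with conditions~1 and~3.

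Let $N = \{i : \d(x', x_i) \le r_p(x')\}$ and $F = \{1,\ldots,n\} \setminus N$. By condition~2 applied at $x'$, $\sum_{i \in F} w_i^S(x') \le \Delta/5$, so $\bigl|\sum_{i \in F} w_i^S(x') y_i\bigr| \le \Delta/5$ and $\sum_{i \in N} w_i^S(x') \ge 1 - \Delta/5$. For the near part, the $(p,\Delta)$-covered hypothesis on $x$ (combined with $x_i \in \supp(\mu)$, which holds with probability one when $S \sim \D^n$) gives $\eta(x_i) > \tfrac12 + \Delta$ for every $i \in N$. Consequently, for each $\beta \ge 0$,
\[
    R_\beta \;:=\; \{i \in N : w_i^S(x') \ge \beta\} \;=\; W_{x',\, r_p(x'),\, \beta}^{\Delta,S} \;\in\; W^{\Delta,S},
\]
so condition~1 applies to $R_\beta$ whenever $|R_\beta| > t_n$.

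Now I use the layer-cake identity $w = \int_0^\infty \mathbf{1}_{w \ge \beta}\, d\beta$ to write
\[
    \sum_{i \in N} w_i^S(x') y_i \;=\; \int_0^{W^*} \!\Bigl(\sum_{i \in R_\beta} y_i\Bigr)\, d\beta,
    \qquad W^* := \sup_{x \in \R^d} \max_i w_i^S(x).
\]
Split the integration interval $[0, W^*]$ into $B_{\text{big}} = \{\beta : |R_\beta| > t_n\}$ and $B_{\text{small}} = \{\beta : |R_\beta| \le t_n\}$. On $B_{\text{big}}$, condition~1 gives $\sum_{i \in R_\beta} y_i \ge \Delta |R_\beta|$; on $B_{\text{small}}$, the trivial bound $\sum_{i \in R_\beta} y_i \ge -|R_\beta| \ge -t_n$ applies. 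Condition~3 gives $\int_{B_{\text{small}}} |R_\beta|\, d\beta \le t_n W^* \le \Delta/5$, and combining with $\int_0^{W^*} |R_\beta| d\beta = \sum_{i \in N} w_i^S(x') \ge 1 - \Delta/5$ yields $\int_{B_{\text{big}}} |R_\beta|\, d\beta \ge 1 - 2\Delta/5$. Therefore
\[
    \sum_{i \in N} w_i^S(x') y_i \;\ge\; \Delta\bigl(1 - 2\Delta/5\bigr) \;-\; t_n W^* \;\ge\; \Delta\bigl(1 - 2\Delta/5\bigr) - \Delta/5 \;\ge\; 2\Delta/5,
\]
using $\Delta \le 1/2$. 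Adding back the far contribution, $\sum_i w_i^S(x') y_i \ge 2\Delta/5 - \Delta/5 = \Delta/5 > 0$, which forces $W_S(x') = +1$. Since $x'$ was arbitrary in $U_x$, $x \in \D_{W_S}^+$.

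The main delicate step is the layer-cake argument: condition~1 only gives a useful lower bound once $|R_\beta|$ exceeds $t_n$, so one must carefully bound the ``small'' region where $R_\beta$ is too sparse for Hoeffding-type control, and this is precisely where condition~3 is needed to cap $t_n W^*$. The other subtlety is ensuring that $R_\beta$ actually belongs to $W^{\Delta,S}$, which rests on the $(p,\Delta)$-covered property combined with $x_i \in \supp(\mu)$ almost surely.
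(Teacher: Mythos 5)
Your proposal is correct and is essentially the paper's argument in different notation: the layer-cake integral over the weight threshold $\beta$ is the continuous analogue of the paper's summation-by-parts over the sorted weights, with the level sets $R_\beta$ playing exactly the role of the sets $\{1,\dots,i\}\in W^{\Delta,S}$ there. The three conditions are used in the same way in both proofs (condition 2 for the mass outside $B(x',r_p(x'))$, condition 3 for the level sets of size at most $t_n$, condition 1 for the large level sets), yielding the same kind of $\Theta(\Delta)$ lower bound on $\sum_i w_i^S(x')y_i$.
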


\begin{proof}
Let $x \in \D_{p, \Delta}^+$, and let $x' \in U_x$ be arbitrary. It suffices to show that $W_S(x') = +1$ (as $x, x'$ were arbitrarily chosen). From the definition of $W_S$, this is equivalent to showing that $\sum_1^n w_i^S(x')y_i > 0.$ Thus, our strategy will be to lower bound this sum using the conditions given in the lemma statement. 

We first begin by simplifying notation. Since $S$ and $x'$ are both fixed, we use $w_i$ to denote $w_i^S(x')$. Since $n$ is fixed, we will also use $t$ to denote $t_n$. Next, suppose that $|\{x_1, \dots, x_n\} \cap B(x', r_p(x'))| = k$. Without loss of generality, we can rename indices such that $\{x_1, \dots, x_n\} \cap B(x', r_p(x')) \cap B(x', r_p(x')) = \{x_1, \dots, x_k\}$, and $w_1 \geq w_2 \geq \dots \geq w_k.$ 

Let $Y_j = \sum_{i=1}^j y_i$. Our main idea will be to express the sum in terms of these $Y_j$s as follows.  
\begin{equation*}
\begin{split}
\sum_1^n w_iy_i &= \sum_1^k w_iy_i + \sum_{k+1}^n w_iy_i \\
&= w_kY_k + (w_{k-1} - w_k)Y_{k-1} + \dots + (w_{t+1} -w_{t+2})Y_{t+1} + \sum_{i = 1}^t (w_i - w_{t+1})y_i + \sum_{k+1}^n w_iy_i \\
&= \underbrace{w_kY_k + \sum_{i = t+1}^{k-1} (w_i - w_{i+1})Y_i}_{\alpha} + \underbrace{\sum_{i = 1}^t (w_i - w_{t+1})y_i}_\beta + \underbrace{\sum_{k+1}^n w_iy_i}_\tau. 
\end{split}
\end{equation*}

We now bound $\alpha, \beta$ and $\tau$ in terms of $\Delta$ by using the conditions given in the lemma. We begin with $\beta$ and $\tau$, which are considerably easier to handle.

For $\beta$, we have that 
\begin{equation*}
\begin{split}
\beta = \sum_{i=1}^t (w_i - w_{t+1})y_i \geq \sum_{i=1}^t (w_i - w_{t+1})(-1) \geq -tw_1. 
\end{split}
\end{equation*}
By condition 2 of the lemma, we see that $tw_1 < \frac{\Delta}{5}$, which implies that $\beta \geq  -\frac{\Delta}{5}$.

For $\gamma$, we have that $\gamma = \sum_{k+1}^n w_iy_i \geq -\sum_{k+1}^n w_i$. However, for all $k+1 \leq i \leq n$, by definition of $k$, $\d(x', x_i) > r_p(x')$. It follows from condition 3 of the lemma that $\gamma \geq -\frac{\Delta}{5}$.

Finally, we handle $\alpha$. Recall that $x$ is $(p, \Delta)$-covered. It follows that for all $x'' \in supp(\mu) \cap B(x', r_p(x'))$, $\eta(x'') > \frac{1}{2} + \Delta$. Thus, by the definition of $k$, $\eta(x_i) > \frac{1}{2} + \Delta$ for $1 \leq i \leq k$. It follows that if $w_i > w_{i+1}$ or $i = k$, then 
\begin{equation*}
\begin{split}
W_{x', r_p(x'), w_i}^{\Delta, S} &= \{j: \d(x', x_j) \leq r_p(x'), w_j \geq w_i, \eta(x_j) > \frac{1}{2} + \Delta\} \\
&= \{1, \dots, i\}.
\end{split}
\end{equation*}

This implies that $\{1, \dots, i\} \in W^{\Delta, S}$, and consequently that $Y_i \geq i\Delta$, from condition 1 of the lemma. It follows that for all $t < i \leq k$, $(w_{i} - w_{i+1})Y_i \geq i(w_i - w_{i+1})\Delta$, and that $w_kY_k \geq kw_k\Delta$. Substituting these, we find that 
\begin{equation*}
\begin{split}
\alpha &= w_kY_k + \sum_{i = t+1}^{k-1} (w_i - w_{i+1})Y_i \\
&\geq kw_k\Delta + \sum_{i = t + 1}^{k-1} i(w_i - w_{i+1})\Delta \\
&= w_k\Delta + w_{k-1}\Delta + \dots + w_{t+1}\Delta + (t+1)w_{t+1}\Delta. \\
&\geq (1 - \sum_{1^t} w_i - \sum_{k+1}^n w_i)\Delta \\
&\geq (1 - \frac{2\Delta}{5})\Delta \\
&\geq (\frac{4\Delta}{5}),
\end{split}
\end{equation*}
with the last inequalities holding from the arguments given for $\beta$ and $\gamma$ along with the fact that $0 < \Delta \leq  \frac{1}{2}$. Finally, substituting these, we find that $\alpha + \beta + \gamma \geq \frac{4\Delta}{5} - \frac{2\Delta}{5} = \frac{2\Delta}{5} > 0$, as desired. 
\end{proof}

We are now ready to prove the key lemma that forms one half of the main theorem (the other half corresponding to $\D_{W_S}^-$). 

\begin{lem}
Let $\delta, \epsilon > 0$. There exists $N$ such that for all $n > N$, with probability $1 - \delta$ over $S \sim \D^n$, $\mu(\D^+ \setminus \D_{W_S}^+) < \epsilon$. 
\end{lem}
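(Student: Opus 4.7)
The proof will assemble the three preceding lemmas into a single statement via a union bound. The plan is to first fix $\epsilon, \delta > 0$ and apply Lemma \ref{lem:most_covered} to extract parameters $p, \Delta$ with $0 < p < 1$ and $0 < \Delta < 1/2$ such that $\mu(\D^+ \setminus \D_{p, \Delta}^+) < \epsilon$. Once these parameters are fixed, the goal reduces to showing that with high probability $\D_{p, \Delta}^+ \subseteq \D_{W_S}^+$, because then $\D^+ \setminus \D_{W_S}^+ \subseteq \D^+ \setminus \D_{p, \Delta}^+$ and the desired measure bound follows.

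Next, I would verify each of the three hypotheses of Lemma \ref{lem:proving_it_works} holds with probability at least $1 - \delta/3$ for all sufficiently large $n$, with the chosen $p$ and $\Delta$. Hypothesis 1 is exactly the conclusion of Lemma \ref{lem:vc_mimicry}, applied at failure probability $\delta/3$, yielding some threshold $N_1$. For hypotheses 2 and 3, I would invoke conditions 2 and 3 of Theorem \ref{thm:main}: these state that the expectations $E_{S \sim \D^n}[\sup_{x} \sum_i w_i^S(x)\mathbf{1}_{\d(x, x_i) > r_p(x)}]$ and $E_{S \sim \D^n}[t_n \sup_x w_i^S(x)]$ tend to $0$. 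By Markov's inequality applied to each expectation at threshold $\Delta/5$, there exist $N_2, N_3$ such that for $n$ beyond these thresholds each of hypotheses 2 and 3 fails with probability at most $\delta/3$.

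Taking $N = \max(N_1, N_2, N_3)$ and applying a union bound, for any $n > N$, with probability at least $1 - \delta$ over $S \sim \D^n$ all three hypotheses of Lemma \ref{lem:proving_it_works} hold simultaneously. The lemma then gives $\D_{p, \Delta}^+ \subseteq \D_{W_S}^+$, so
\begin{equation*}
\mu(\D^+ \setminus \D_{W_S}^+) \leq \mu(\D^+ \setminus \D_{p, \Delta}^+) < \epsilon,
\end{equation*}
which is the desired conclusion.

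The only subtlety I anticipate is being careful about parameter quantifier order: the $p$ and $\Delta$ must be fixed before invoking Lemma \ref{lem:vc_mimicry} and before using conditions 2 and 3 of Theorem \ref{thm:main} (so that the convergence in $n$ happens with $p$ and $\Delta$ already held fixed). Provided this order is respected, the rest is a routine union-bound assembly; no new probabilistic machinery is required, since all the heavy lifting (the uniform $T(W,S)$-based Hoeffding bound, the covered-region construction, and the weight-sum estimate) has already been done in the earlier lemmas.
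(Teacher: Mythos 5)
Your proposal is correct and follows essentially the same route as the paper: fix $p,\Delta$ via Lemma \ref{lem:most_covered}, establish each hypothesis of Lemma \ref{lem:proving_it_works} with probability at least $1-\delta/3$ (the first from Lemma \ref{lem:vc_mimicry}, the other two from conditions 2 and 3 of Theorem \ref{thm:main}, with Markov's inequality being exactly the right way to pass from vanishing expectations to high-probability bounds at level $\Delta/5$), and finish with a union bound. Your remark on fixing $p,\Delta$ before sending $n\to\infty$ matches the paper's quantifier order as well.
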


\begin{proof}
First, by Lemma \ref{lem:most_covered}, let $0 < p$ and $0 < \Delta$ be such that $\mu(\D^+ \setminus \D_{p, \Delta}^+) < \epsilon$. By combining Lemma \ref{lem:vc_mimicry}, condition 3 of Theorem \ref{thm:main}, and condition 2 of Theorem \ref{thm:main} respectively,  we see that there exists $N$ such that for all $n > N$, the following hold:
\begin{enumerate}
	\item With probability at least $1 - \frac{\delta}{3}$ over $S \sim \D^n$, for all $R \in W^{\Delta, S}$ with $|R| > t_n$, $\frac{1}{|R|} \sum_{i \in R} y_i \geq \Delta$. 
	\item With probability at least $1 - \frac{\delta}{3}$ over $S \sim \D^n$, $\sup_{x \in \R^d} \sum_1^n w_i^S(x)1_{\d(x, x_i) > r_p(x)} < \frac{\Delta}{5}$.
	\item With probability at least $1 - \frac{\delta}{3}$ over $S \sim \D^n$, $t_n\sup_{x \in \R^d} w_i^S(x) < \frac{\Delta}{5}$. 
\end{enumerate}
By a union bound, this implies that $p, \Delta, S$ satisfy the conditions of Lemma \ref{lem:proving_it_works} with probability at least $1 - \delta$.  Thus, applying the Lemma, we see that with probability $1 - \delta$, $\D_{p, \Delta}^+ \subset \D_{W_S}^+$. This immediately implies our claim. 
\end{proof}

By replicating all of the work in this section for $\D^-$ and $\D_{p, \Delta}^-$, we can similarly show the following: 

\begin{lem}
Let $\delta, \epsilon > 0$. There exists $N$ such that for all $n > N$, with probability $1 - \delta$ over $S \sim \D^n$, $\mu(\D^- \setminus \D_{W_S}^-) < \epsilon$. 
\end{lem}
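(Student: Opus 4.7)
The plan is to mirror the entire chain of arguments from the positive-class case essentially verbatim, exploiting the symmetry between $+1$ and $-1$ labels. First I would introduce the negative analogue of Definition \ref{defn:covered}: say $x \in \D^-$ is $(p, \Delta)$-covered if for every $x' \in U_x$ and every $x'' \in B(x', r_p(x')) \cap \supp(\mu)$, $\eta(x'') < \frac{1}{2} - \Delta$, and let $\D_{p,\Delta}^-$ collect these points. The symmetric versions of Lemmas \ref{lem:everything_covered} and \ref{lem:most_covered} then go through word-for-word: using the analogue of property (3) of Lemma \ref{lem:u_is_npr} giving strict separation of $U_x$ from $\D^+ \cup \D_{1/2}$ for $x \in \D^-$, continuity of $\rho$ and $\eta$ plus closedness of $\supp(\mu)$ yields positive $(p, \Delta)$ for each individual $x$, and the countable-cover argument via Lemma \ref{lem:measure_lemma} then produces uniform $(p, \Delta)$ that cover all but $\epsilon$ measure of $\D^-$.

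Next I would reintroduce the splitting family with the opposite label threshold: set
$$W_{x,\alpha,\beta}^{-\Delta, S} = \{i : \rho(x,x_i) \leq \alpha,\; w_i^S(x) \geq \beta,\; \eta(x_i) < \tfrac{1}{2} - \Delta\}$$
and let $W^{-\Delta,S}$ be the family of all such index sets as $x, \alpha, \beta$ vary. The mirror of Lemma \ref{lem:vc_mimicry} will assert that with high probability, every $R \in W^{-\Delta,S}$ with $|R| > t_n$ satisfies $\frac{1}{|R|}\sum_{i\in R} y_i \leq -\Delta$. The proof is identical: the counting bound $|W^{-\Delta,S}| \leq T(W,S)$ still holds since each $W_{x,\alpha,\beta}^{-\Delta,S}$ is uniquely determined by the corresponding $W_{x,\alpha,\beta}$, and Hoeffding applies because each $y_i$ with $i \in R$ now has conditional expectation at most $-2\Delta$. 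Condition 4 of Theorem \ref{thm:main} drives the union bound to zero as before.

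The mirror of Lemma \ref{lem:proving_it_works} then states: if (i) every such large $R$ satisfies the negative mean bound, (ii) $\sup_{x} \sum_i w_i^S(x)\mathbf{1}_{\rho(x,x_i) > r_p(x)} < \Delta/5$, and (iii) $t_n \sup_{x} w_i^S(x) < \Delta/5$, then $\D_{p,\Delta}^- \subseteq \D_{W_S}^-$. The calculation is the same telescoping decomposition $\sum_i w_i y_i = \alpha + \beta + \tau$, but now one shows the sum is strictly negative at any $x' \in U_x$ for $x \in \D_{p,\Delta}^-$: the bulk term $\alpha$ becomes at most $-\tfrac{4\Delta}{5}$ by flipping signs throughout the chain of inequalities (the key step $Y_i \geq i\Delta$ becomes $Y_i \leq -i\Delta$), while $|\beta|, |\tau| \leq \tfrac{\Delta}{5}$ by conditions (ii) and (iii) exactly as before. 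Combining via $W_S(x') = -1$ iff $\sum w_i y_i \leq 0$ gives the inclusion $\D_{p,\Delta}^- \subseteq \D_{W_S}^-$.

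Finally, the three events above hold with probability at least $1 - \delta/3$ each for $n$ sufficiently large (by the negative analogue of Lemma \ref{lem:vc_mimicry} together with conditions 2 and 3 of Theorem \ref{thm:main}), and a union bound then yields $\mu(\D^- \setminus \D_{W_S}^-) \leq \mu(\D^- \setminus \D_{p,\Delta}^-) < \epsilon$ with probability at least $1 - \delta$. I do not anticipate any genuine obstacle; the argument is a routine sign-flip of the positive case, and all measure-theoretic and probabilistic machinery is agnostic to which class is in focus.
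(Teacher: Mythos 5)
Your proposal is correct and matches the paper exactly: the paper proves this lemma simply by noting that all the work for $\D^+$ (the covering lemmas, the splitting-family concentration bound, and the telescoping weight-sum argument) can be replicated for $\D^-$ and $\D_{p,\Delta}^-$ with the signs flipped, which is precisely the sign-mirrored chain you spell out. The only difference is that you make the symmetry explicit rather than asserting it, including the correct observation that the tie-breaking convention $W_S(x)=-1$ when $\sum_i w_i^S(x)y_i \leq 0$ makes the negative case no harder.
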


Combining these two lemmas with Lemma \ref{lem:conversion_to_measure_thing} immediately implies that for all $\delta, \epsilon > 0$, there exists $N$ such that for all $n > N$, with probability $1- \delta$ over $S \sim \D^n$, $$A_\U(W_S, \D) \geq A(g, \D) - \epsilon.$$ Since $V_x^\kappa \subset U_x$ and since $\kappa$ was arbitrary, this implies Theorem \ref{thm:main}, which completes our proof. 

\subsection{Proof of Corollary \ref{cor:nn}}

Recall that $k_n$-nearest neighbors can be interpreted as a weight function, in which $w_i^S(x) = \frac{1}{k_n}$ if $x_i$ is one of the $k_n$ closest points to $x$, and $0$ otherwise. Therefore, it suffices to show that the conditions of Theorem \ref{thm:main} are met. 

We let $W$ denote the weight function associated with $k_n$-nearest neighbors.

\begin{lem}
$W$ is consistent.
\end{lem}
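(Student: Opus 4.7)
The plan is to invoke Stone's theorem (Theorem~\ref{thm_stone}) and verify its three hypotheses for the $k_n$-nearest neighbor weight function $w_i^S(x) = (1/k_n) \mathbf{1}\{x_i \text{ is among the } k_n \text{ nearest neighbors of } x\}$. Note that the hypothesis $\log n / k_n \to 0$ forces $k_n \to \infty$, while $k_n/n \to 0$ enforces locality; these are exactly the two facts Stone's conditions require.

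First I would dispose of Condition~3: since $w_i^S(X) \leq 1/k_n$ deterministically, we get $\mathbb{E}[\max_i w_i^S(X)] \leq 1/k_n \to 0$.

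Next I would check Condition~2. For any fixed $a > 0$, the quantity $\sum_i w_i^S(X) \mathbf{1}\{\rho(X, x_i) > a\}$ is exactly the fraction of the $k_n$ nearest neighbors of $X$ that lie outside $B(X, a)$. For $\mu$-almost every $X \in \supp(\mu)$, the ball $B(X, a)$ has strictly positive mass $q(X) > 0$, and since $k_n / n \to 0$, a Chernoff bound on the binomial count of training points in $B(X, a)$ shows that at least $k_n$ of them land inside $B(X, a)$ with probability tending to $1$. Each summand lies in $[0,1]$, so dominated convergence after integrating over $X \sim \mu$ delivers Condition~2.

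The main obstacle is Condition~1, the classical cone-covering inequality of Stone. The content I need is that there is a constant $c_d$, depending only on the ambient dimension, such that any fixed sample point $x_i$ can be among the $k_n$ nearest neighbors of at most $c_d \cdot k_n$ query points; this is proven by covering $\R^d$ around $x_i$ with a bounded number of cones of sufficiently small angular diameter and observing that within each cone the $k_n$-nearest-neighbor relation can be recycled at most $k_n$ times. I would cite this geometric lemma from Devroye--Gy\"orfi--Lugosi rather than reprove it, and use it to bound $\mathbb{E}\bigl[\sum_i w_i^S(X) f(x_i)\bigr] \leq c_d \cdot \mathbb{E}[f(X)]$ for every nonnegative measurable $f$ with finite expectation. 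With all three Stone conditions in hand, Stone's theorem yields consistency of $W$.
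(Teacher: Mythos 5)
Your proposal is correct, but it takes a more laborious route than the paper does. The paper's proof of this lemma is a one-line appeal to the classical literature: it observes that $k_n$-nearest neighbors is known to be consistent whenever $k_n \to \infty$ and $k_n/n \to 0$ (citing, e.g., \cite{Dasgupta14}), and notes that both conditions follow immediately from the hypotheses of Corollary~\ref{cor:nn} (indeed $\log n / k_n \to 0$ forces $k_n \to \infty$). You instead re-derive that classical fact by verifying the three hypotheses of Stone's theorem (Theorem~\ref{thm_stone}) directly: the max-weight bound $1/k_n \to 0$ for Condition~3, the Chernoff-plus-dominated-convergence locality argument for Condition~2, and Stone's cone-covering lemma (cited from Devroye--Gy\"orfi--Lugosi) for Condition~1. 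All three verifications are sound and standard, so your argument buys self-containedness at the cost of length, while the paper's buys brevity at the cost of deferring everything to a reference. One small caveat worth noting if you keep your route: the cone-covering lemma behind Stone's Condition~1 is tied to norm-induced (e.g., Euclidean) geometry, whereas the paper nominally allows $\rho$ to be an arbitrary metric on $\R^d$; the paper's citation-based proof quietly inherits whatever assumptions the cited consistency result needs, and your explicit proof should likewise state that it is working with a norm metric (as your cone argument assumes).
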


\begin{proof}
It is well known (for example \cite{Dasgupta14}) that $k_n$-nearest neighbors is consistent for $\lim_{n \to \infty} k_n = \infty$ and $\lim_{n \to \infty} \frac{k_n}{n} = 0$. These can easily be verified for our case.
\end{proof}

\begin{lem}
For any $0 < p < 1$, $\lim_{n \to \infty} \E_{S \sim \D^n}[ \sup_{x \in \R^d} \sum_1^n w_i^S(x)1_{\d(x, x_i) > r_p(x)}] = 0.$ 
\end{lem}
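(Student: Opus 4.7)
The plan is to show that, with probability tending to one, every $k_n$-nearest neighbor of every $x \in \R^d$ lies inside $B(x, r_p(x))$, so the indicator $1_{\d(x, x_i) > r_p(x)}$ vanishes simultaneously at every $x$ and the supremum is $0$. Since the $k_n$-NN weights always satisfy $\sum_i w_i^S(x) = 1$, bounding the probability of the complementary ``bad'' event will immediately control the expectation.

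For the $k_n$-NN weight function the quantity inside the expectation equals $\tfrac{1}{k_n}$ times the number of $k_n$ nearest neighbors of $x$ lying strictly outside $B(x, r_p(x))$, hence it is $0$ provided $|\{i : x_i \in B(x, r)\}| \geq k_n$ for every $r > r_p(x)$. By the definition of $r_p$ (Definition \ref{defn:prob_radius}), $\mu(B(x, r)) \geq p$ whenever $r > r_p(x)$, so the task reduces to showing that the empirical count $|\{i : x_i \in B\}|$ closely tracks $n\mu(B)$ uniformly over the class of closed balls $\mathcal{B} = \{B(x, r) : x \in \R^d,\, r \geq 0\}$.

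To obtain this I would invoke the standard VC inequality for the class $\mathcal{B}$, which has finite VC dimension $V$ under the regularity on $\d$ that is standard in this literature. For any $\delta > 0$, with probability at least $1 - \delta$ over $S \sim \D^n$,
\[ \sup_{B \in \mathcal{B}} \Bigl| \tfrac{1}{n}|\{i : x_i \in B\}| - \mu(B)\Bigr| \;\leq\; \epsilon_n(\delta), \]
with $\epsilon_n(\delta) = O\bigl(\sqrt{(V \log n + \log(1/\delta))/n}\bigr)$. Choosing $\delta_n = 1/n$ makes $\epsilon_n \to 0$. On this good event, for every $x$ and every $r > r_p(x)$ we have $|\{i : x_i \in B(x, r)\}| \geq n(p - \epsilon_n)$; and since $k_n/n \to 0$ while $p > 0$ is fixed, $n(p - \epsilon_n) > k_n$ for all sufficiently large $n$. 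Hence the $k_n$-th nearest-neighbor distance of $x$ is $\leq r$ for every $r > r_p(x)$, which forces $\d(x, x_i) \leq r_p(x)$ for every $k_n$-NN $x_i$, killing the indicator. The sup is therefore $0$ on the good event, giving $\E[\sup_x \sum_i w_i^S(x) 1_{\d(x, x_i) > r_p(x)}] \leq 1 \cdot \delta_n \to 0$.

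The main obstacle is precisely the uniformity in $x \in \R^d$: a pointwise Chernoff/Hoeffding bound for a single fixed ball is trivial, but the statement requires the conclusion simultaneously at every $x$, since otherwise an adversarially chosen $x$ could have too many of its NN pushed outside $B(x, r_p(x))$. The VC/Glivenko--Cantelli bound for balls is exactly what bridges that gap; modulo it, the rest is just unpacking of definitions together with the hypothesis $k_n/n \to 0$.
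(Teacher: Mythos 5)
Your proposal is correct and follows essentially the same route as the paper's proof: a uniform VC/Glivenko--Cantelli bound over the class of balls shows that, with probability tending to one, every ball $B(x,r)$ with $r>r_p(x)$ contains at least $n(p-o(1))>k_n$ sample points, so all $k_n$ nearest neighbors of every $x$ lie in $B(x,r_p(x))$ and the supremum vanishes, with the expectation then controlled by the failure probability. The only cosmetic difference is that the paper uses the relative (Vapnik--Chervonenkis) deviation bound with the $\sqrt{\E f}$ factor rather than the additive form you quote; either suffices here.
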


\begin{proof}
It suffices to show that for $n$ sufficiently large, all $k_n$-nearest neighbors of $x$ are located inside $B(x, r_p(x))$ for all $x \in \R^d$. We do this by using a VC-dimension type argument to show that all balls $B(x, r)$ contain a number of points from $S \sim \D^n$ that is close to their expectation. 

For $x \in \R^d$ and $r \geq 0$, let $f_{x, r}$ denote the $0-1$ function defined as $f_{x,r}(x') = 1_{x' \in B(x,r)}$. Let $F = \{f_{x, r}: x \in \R^d, r \geq 0\}$ denote the class of all such functions. It is well known that the VC dimension of $F$ is at most $d+2$. 

For $f \in F$, let $\E f$ denote $\E_{(x', y) \sim \D}f(x')$ and $\E_nf$ denote $\frac{1}{n}\sum_1^n f(x_i)$, where $\E_nf$ is defined with respect to some sample $S \sim \D^n$. By the standard generalization result of Vapnik and Chervonenkis (see \cite{dasgupta2007active} for a proof), we have that with probability $1- \delta$ over $S \sim \D^n$, \begin{equation}\label{eqn:vc}-\beta_n\sqrt{\E f} \leq \E f - \E_nf \leq \beta_n\sqrt{\E f}\end{equation} holds for all $f \in F$, where $\beta_n = \sqrt{(4/n)((d+2)\ln 2n + \ln(8/\delta)}.$ 

Suppose $n$ is sufficiently large so that $\beta_n \leq \frac{p}{2}$ and $\frac{k_n}{n} < \frac{p}{2}$, and suppose that equation \ref{eqn:vc} holds. Pick any $x \in \R^d$ and consider $f_{x, r}$ where $r > r_p(x)$. This implies $\E f_{x, r} \geq p$. Then by equation \ref{eqn:vc}, we see that $\E_nf \geq \frac{p}{2}$. This implies that all $k_n$ nearest neighbors of $x$ are in the ball $B(x,r)$, and that consequently $\sum_1^n w_i^S(x)1_{\d(x, x_i) > r} = 0$. Because this holds for all $x, r$ with $x \in \R^d$ and $r > r_p(x)$, it follows that equation $2$ implies that $$\sup_{x \in X} \sum_1^n w_i^S(x) 1_{\d(x, x_i) > r_p(x)} = 0.$$ Because equation \ref{eqn:vc} holds with probability at least $1 - \delta$, and $\delta$ can be made arbitrarily small, the desired claim follows. 
\end{proof}

Let $t_n = \sqrt{d k_n\log n}$.

\begin{lem}
 $\lim_{n \to \infty} E_{S \sim D^n}[t_n \sup_{x \in \R^d} w_i^S(x)] = 0$.
\end{lem}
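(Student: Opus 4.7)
The plan is to observe that for $k_n$-nearest neighbors, the weight function takes only two possible values: $w_i^S(x) = 1/k_n$ if $x_i$ is among the $k_n$ closest training points to $x$, and $w_i^S(x) = 0$ otherwise. Consequently, for any sample $S$ and any $x \in \R^d$, we have the deterministic bound $\max_i w_i^S(x) \leq 1/k_n$, and in fact equality holds as soon as $n \geq k_n$. Thus the supremum inside the expectation can be replaced by $1/k_n$ without any probabilistic argument.

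Substituting $t_n = \sqrt{d k_n \log n}$, the quantity inside the expectation becomes $\sqrt{d k_n \log n} \cdot (1/k_n) = \sqrt{d \log n / k_n}$, which is a deterministic function of $n$. Therefore $\E_{S \sim \D^n}[t_n \sup_{x \in \R^d} w_i^S(x)] = \sqrt{d \log n / k_n}$, and it remains only to show this tends to zero.

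The hypothesis of Corollary \ref{cor:nn} states that $\lim_{n \to \infty} \log n / k_n = 0$, so the square root also tends to zero, completing the argument. There is essentially no obstacle here; the step that deserves emphasis is simply pinning down that $\sup_{x} w_i^S(x)$ is interpreted as $\sup_{x, i} w_i^S(x)$ (which equals $1/k_n$ for $k_n$-NN), after which the claim reduces to a one-line substitution using condition (2) of the corollary.
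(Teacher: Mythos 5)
Your proof is correct and matches the paper's own argument: both identify $\sup_x w_i^S(x) = 1/k_n$ for $k_n$-nearest neighbors, substitute $t_n = \sqrt{d k_n \log n}$ to get $\sqrt{d\log n/k_n}$, and conclude via condition (2) of Corollary \ref{cor:nn}. No differences worth noting.
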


\begin{proof}
Let $S \sim \D^n$. By the definition of $k_n$ nearest neighbors, $\sup_{x \in \R^d}w_i^S(x) = \frac{1}{k_n}$. Therefore, $t_n\sup_{x \in \R^d} w_i^S(x) = \sqrt{\frac{d\log n}{k_n}}$. By assumption 2. of corollary \ref{cor:nn}, $\lim_{n \to \infty} \frac{d \log n}{k_n} = 0$, which implies that $$\lim_{n \to \infty} \E_{S \sim D^n}[t_n\sup_{x \in \R^d} w_i^S(x)] = \lim_{n \to \infty} \sqrt{\frac{d\log n}{k_n}} = \lim_{n \to \infty} \frac{d \log n}{k_n} = 0,$$ as desired.
\end{proof}

\begin{lem}
$\lim_{n \to \infty} E_{S \sim D^n}\frac{\log T(W,S)}{t_n} = 0$.
\end{lem}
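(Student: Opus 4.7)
The plan is to bound the splitting number $T(W,S)$ deterministically by a polynomial in $n$, so that $\log T(W,S) = O(d \log n)$ is dominated by $t_n = \sqrt{d k_n \log n}$ whenever $\log n / k_n \to 0$.

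First I would exploit the special structure of $k_n$-nearest neighbors: since $w_i^S(x) \in \{0, 1/k_n\}$, the condition $w_i^S(x) \geq \beta$ has only three regimes. For $\beta > 1/k_n$ the set $W_{x,\alpha,\beta}$ is empty. For $\beta = 0$ it collapses to $\{i : x_i \in B(x,\alpha)\}$. For $0 < \beta \leq 1/k_n$ it equals $\{i : x_i \in B(x,\alpha)\} \cap k_n\text{-NN}_S(x)$. In this last regime I would split on the size of $S \cap B(x,\alpha)$: if this has at most $k_n$ points, every one of them is automatically a $k_n$-nearest neighbor of $x$, so the set reduces to $\{i : x_i \in B(x,\alpha)\}$; if it has more than $k_n$ points, then all $k_n$ nearest neighbors of $x$ lie inside $B(x,\alpha)$, so the intersection equals the $k_n$-NN of $x$, which itself is $\{i : x_i \in B(x,r)\}$ for $r$ the distance from $x$ to its $k_n$-th nearest sample point.

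Thus every nonempty $W_{x,\alpha,\beta}$ is a subset of $\{x_1,\dots,x_n\}$ cut out by a single closed ball in $(\R^d,\d)$. Since closed balls have VC dimension at most $d+2$, Sauer's lemma gives at most $O(n^{d+2})$ distinct such subsets, so $T(W,S) \leq O(n^{d+2})$ deterministically. Hence $\log T(W,S) \leq (d+2)\log n + O(1)$, and
$$\frac{\log T(W,S)}{t_n} \;\leq\; \frac{(d+2)\log n + O(1)}{\sqrt{d k_n \log n}} \;=\; O\!\left(\sqrt{\frac{d \log n}{k_n}}\right),$$
which tends to $0$ under condition (2) of Corollary \ref{cor:nn}. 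Since the bound is deterministic in $S$, the expectation tends to $0$ as well.

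The only mildly delicate step is the reduction in the third regime, which folds the ``ball intersected with $k_n$-NN'' case back into a single ball via the two-subcase split above; everything else is a direct invocation of the VC dimension of balls together with the hypothesis on $k_n$.
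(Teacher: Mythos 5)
Your proposal follows essentially the same route as the paper: bound the splitting number deterministically by a polynomial in $n$ via the VC dimension of balls and Sauer's lemma, then observe that $\log T(W,S)/t_n = O\left(\sqrt{\log n / k_n}\right) \to 0$ under condition (2) of Corollary \ref{cor:nn}. The one place where your reduction is too quick is exactly the step you flag as delicate: in the regime $0 < \beta \le 1/k_n$ with $|S \cap B(x,\alpha)| > k_n$, you identify the $k_n$-NN set of $x$ with $\{i : x_i \in B(x,r)\}$ where $r$ is the $k_n$-th nearest-neighbor distance. This fails for query points $x$ equidistant from several sample points at distance exactly $r$ (such $x$ always exist, e.g.\ on bisectors of pairs of sample points, and the definition of $T(W,S)$ ranges over all $x \in \R^d$): there $B(x,r) \cap S$ has more than $k_n$ elements, so the $k_n$-NN set under any tie-breaking rule is a proper subset of it and is not cut out by a single closed ball. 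The paper handles precisely this point by introducing a randomized tie-breaking variable $z_i$ for each sample and writing every $W_{x,\alpha,\beta}$ as the intersection of a ball subset $A_{x,\alpha}$ with a sublevel set $B_{x,c}$ of the $z_i$'s, which costs an extra factor of $n+1$ and yields $T(W,S) = O(n^{d+3})$ rather than your $O(n^{d+2})$. This does not affect the conclusion---any fixed polynomial bound suffices because $t_n = \sqrt{d k_n \log n}$ and $\log n / k_n \to 0$---so your argument is sound once you add some explicit tie-breaking (or an extra polynomial factor accounting for it); apart from that patch it is the same proof as the paper's.
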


\begin{proof}
For $S \sim \D^n$, recall that $T(W, S)$ was defined as $$T(W, S) |\{W_{x, \alpha, \beta}: x \in \R^d, 0 \leq \alpha, 0 \leq \beta \leq 1\}|,$$ where $W_{x, \alpha, \beta}$ denotes $$W_{x, \alpha, \beta} = \{i: \d(x, x_i) \leq \alpha, w_i^S(x) \geq \beta\}.$$ Our goal will to be upper bound $\log T(W, S)$. 

To do so, we first need a tie-breaking mechanism for $k_n$-nearest neighbors. For each $x_i \in S$, we independently sample $z_i \in [0, 1]$ from the uniform distribution. We then tie break based upon the value of $z_i$, i.e. if $\d(x, x_i) = \d(x, x_j)$, we say that $x_i$ is closer to $x$ than $x_j$ if $z_i < z_j$. With probability $1$, no two values $z_i ,z_j$ will be equal, so this ensures that this method always works.

Let $A_{x, \alpha} = \{i: \d(x, x_i) \leq \alpha\}$ and let $B_{x, c} = \{i: z_i \leq c\}.$ The key observation is that for any $\alpha, \beta$, $W_{x, \alpha, \beta} = A_{x, \alpha} \cap B_{x, c}$ for some value of $c$. This can be seen by noting that the nearest neighbors of $x$ are uniquely determined by $\d(x, x_i)$ and $z_i$. Therefore, it suffices to bound $|A = A_{x, \alpha}: x \in \R^d, \alpha \geq 0\}|$ and $|B = \{B_{x, c}: x \in \R^d, c \geq 0\}|$. 

To bound $|A|$, observe that the set of closed balls in $\R^d$ has VC-dimension at most $d+2$. Thus by Sauer's lemma, there are at most $O(n^{d+2}$ subsets of $\{x_1, x_2, \dots, x_n\}$ that can be obtained from closed balls. Thus $|A| \leq O(n^{d+2}$. 

To bound $|B|$, we simply note that $B_{x, c}$ consists of all $i$ for which $z_i \leq c$. Since the $z_i$ can be sorted, there are at most $n+1$ such sets. Thus $|B| \leq n+1$.

Combining this, we see that $T(W, S) \leq |A||B| \leq O(n^{d+3})$. Finally, we see that $$\lim_{n \to \infty} \frac{\log T(W,S)}{t_n} = \lim_{n \to \infty} \frac{O(d\log n)}{\sqrt{k_n d \log n}} = \lim_{n \to \infty} \sqrt{\frac{O(d \log n)}{k_n}} = 0,$$ with the last inequality holding by condition 2. of Corollary \ref{cor:nn}. 

\end{proof}

Finally, we note that Corollary \ref{cor:nn} is an immediate consequence of the previous 4 lemmas as we can simply apply Theorem \ref{thm:main}.

\subsection{Proof of Corollary \ref{cor:kern}}

Let $W$ be a kernel classifier constructed from $K$ and $h_n$ such that the conditions of Corollary \ref{cor:kern} hold: that is, 
\begin{enumerate}
	\item $K: [0, \infty) \to [0, \infty)$ is decreasing and satisfies $\int_{\R^d}K(x)dx < \infty.$
	\item $\lim_{n \to \infty} h_n = 0$ and $\lim_{n \to \infty} nh_n^d = \infty$.
	\item For any $c > 1$, $\lim_{x \to \infty} \frac{K(cx)}{K(x)} = 0$.
	\item For any $x \geq 0$, $\lim_{n \to \infty} \frac{n}{\log n}K(\frac{x}{h_n}) = \infty$.
\end{enumerate}

It suffices to show that the conditions of Theorem \ref{thm:main} are met for $W$. Before doing this, we will describe one additional assumption we make for this case.

\paragraph{Additional Assumption:} We assume that $\D, \U$ are such that there exists some compact set $\X \subset \R^d$ such that for all $x \in supp(\mu)$, $U_x \subset \X$. This is primarily for convenience: observe that any distribution can be approximated arbitrarily closely by distributions satisfying these properties (as each $U_x$ is bounded by assumption). Importantly, because of this, we will note that it is possible for conditions 2. and 3. of Theorem \ref{thm:main} to be relaxed to taking supremums over $\X$ rather than $\R^d$. This is because in our proof, we only ever used these conditions in their restriction to $\bigcup_{x \in supp(\mu)} \bigcup{x' \in U_x} B(x', r_p(x'))$.

Using this assumption, we return to proving the corollary. 

\begin{lem}\label{cl:kern_consistency}
$W$ is consistent with respect to $\D$. 
\end{lem}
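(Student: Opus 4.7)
The plan is to establish consistency by verifying the three conditions of Stone's theorem (Theorem \ref{thm_stone}) for the kernel weight function $w_i^S(x) = K(\rho(x,x_i)/h_n)/\sum_j K(\rho(x,x_j)/h_n)$. The hypotheses 1 and 2 of Corollary \ref{cor:kern}---namely $K$ is decreasing with $\int_{\R^d} K(\|x\|)\,dx < \infty$, together with $h_n \to 0$ and $nh_n^d \to \infty$---are precisely the classical conditions under which kernel-type plug-in classifiers are known to be consistent, so the proof is largely a matter of verifying each piece. Conditions 3 and 4 of Corollary \ref{cor:kern}, which will be used later to obtain the stronger neighborhood consistency, play no role here.

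First I would check Stone's condition 3, $\lim_n \E[\max_i w_i^S(X)] = 0$. Since $K$ is decreasing, the numerator is bounded by $K(0)$, so it suffices to show that the denominator $\sum_j K(\rho(X,x_j)/h_n)$ is of order at least $n h_n^d$ (up to $\mu$-dependent constants) with high probability. A standard covering argument over a small ball around $X$, combined with $nh_n^d \to \infty$, delivers this. Next I would verify Stone's condition 2, $\lim_n \E\bigl[\sum_i w_i^S(X)\,1_{\rho(x_i,X) > a}\bigr] = 0$ for every $a > 0$. Here I would split the numerator into near and far parts; using the decreasing property and $\int K(\|x\|)dx < \infty$, the far part contributes at most $\int_{\|u\| > a/h_n} K(\|u\|)\,du$ times a normalization factor, which tends to zero as $h_n \to 0$, while the denominator is again $\Omega(nh_n^d)$.

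The main obstacle is Stone's condition 1, the so-called covering condition: there exists $c$ such that $\E\bigl[\sum_i w_i^S(X) f(x_i)\bigr] \leq c\,\E[f(X)]$ for every nonnegative measurable $f$. This is the classical Stone-type kernel covering lemma, which for an integrable decreasing $K$ holds with $c$ depending only on the dimension $d$ and on $K$. The standard argument partitions $\R^d$ into dyadic annuli of radii $2^k h_n$ and bounds the ratio of kernel weights between nearby pieces, exploiting both the monotonicity of $K$ and its integrability to sum a geometric series. I would either reproduce this argument or cite it directly from Devroye--Györfi--Lugosi. With all three Stone conditions verified, Theorem \ref{thm_stone} immediately yields that $W$ is consistent with respect to $\D$.
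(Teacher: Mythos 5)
Your proposal is correct and follows essentially the same route as the paper: both reduce the claim to the classical consistency theorem for regular kernels under $h_n \to 0$ and $nh_n^d \to \infty$. The paper simply observes that Condition 1 of Corollary \ref{cor:kern} makes $K$ a regular kernel and cites \cite{devroye96}, whereas you sketch the standard Stone-type verification (covering lemma, near/far split, denominator lower bound of order $nh_n^d$) that underlies that cited result.
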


\begin{proof}
Condition 1. of Corollary \ref{cor:kern} imply that $K$ is a regular kernel. This together with Condition 2. implies that $W$ is consistent: a proof can be found in \cite{devroye96}. 
\end{proof}

To verify the second condition, it will be useful to have the following definition. 

\begin{defn}\label{defn:probability_epsilon_radius}
For any $p, \epsilon > 0$ and $x \in \X$, define $r_p^\epsilon$ as $$r_p^\epsilon(x) = \sup \{r: \mu(B(x, r)) - \mu(B(x, r_p(x)) \leq \epsilon\}.$$ 
\end{defn}

\begin{lem}\label{cl:rad}
For any $p, \epsilon > 0$, there exists a constant $c_p^\epsilon > 1$ such that $\frac{r_p^\epsilon(x)}{r_p(x)} \geq c_p^{\epsilon}$ for all $x \in \X$, where we set $\frac{r_p^\epsilon(x)}{r_p(x)} = \infty$ if $r_p(x) = 0$. 
\end{lem}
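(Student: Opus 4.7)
The strategy is to argue by contradiction, combining compactness of $\X$ with semi-continuity of the function $(x, r) \mapsto \mu(B(x, r))$. Suppose no such constant $c_p^\epsilon > 1$ exists. Then for every integer $n \geq 1$ there is a point $x_n \in \X$ with $r_p(x_n) > 0$ and $r_p^\epsilon(x_n) < (1 + 1/n)\, r_p(x_n)$. By compactness of $\X$, pass to a subsequence on which $x_n \to x^* \in \X$.

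First I would dispatch the degenerate case $r_p(x_n) \to 0$. Since $\mu(B(x_n, r_p(x_n))) \geq p$ and the closed balls $B(x_n, r_p(x_n))$ shrink to $\{x^*\}$, continuity of $\mu$ on a decreasing sequence of closed sets forces $\mu(\{x^*\}) \geq p$. By right-continuity of $r \mapsto \mu(B(x^*, r))$ at $r=0$, pick $\delta > 0$ so small that $\mu(B(x^*, \delta)) \leq \mu(\{x^*\}) + \epsilon/2$. For $n$ large enough that $\rho(x_n, x^*) < \delta/2$, we have $B(x_n, \delta/2) \subseteq B(x^*, \delta)$, hence $\mu(B(x_n, \delta/2)) - \mu(B(x_n, r_p(x_n))) \leq \epsilon/2 < \epsilon$, giving $r_p^\epsilon(x_n) \geq \delta/2$. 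Together with $r_p(x_n) \to 0$ this forces $r_p^\epsilon(x_n)/r_p(x_n) \to \infty$, contradicting the choice of $x_n$.

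So along a subsequence $r_p(x_n) \to r^* > 0$, and then $r_p^\epsilon(x_n) \to r^*$ as well since the ratio tends to $1$. The key semi-continuity fact I would use is that $x \mapsto \mu(B(x, r))$ is upper semi-continuous in $x$ for closed balls (because $\limsup_n B(x_n, r) \subseteq B(x^*, r)$, combined with reverse Fatou for the finite measure $\mu$), which in turn makes $r_p$ lower semi-continuous, so $r_p(x^*) \leq r^*$. On the other hand, for any fixed $r > r^*$ we eventually have $r > r_p^\epsilon(x_n)$, so by the definition of $r_p^\epsilon$, $\mu(B(x_n, r)) - \mu(B(x_n, r_p(x_n))) > \epsilon$. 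Passing $n \to \infty$, applying usc of $\mu(B(\cdot, r))$ to the first term and using $B(x_n, r_p(x_n)) \supseteq B^\circ(x_n, r)$ for any $r < r_p(x_n)$ to control the second, one obtains $\mu(B(x^*, r)) - \mu(B^\circ(x^*, r^*)) \geq \epsilon$ for every $r > r^*$. Sending $r \downarrow r^*$ via right-continuity of $\mu(B(x^*, \cdot))$ then contradicts the construction of $r_p^\epsilon(x^*)$, which by the pointwise argument (applied at $x^*$ using $r_p(x^*) \leq r^*$) must strictly exceed $r_p(x^*)$ by a definite right-neighborhood of mass increment at most $\epsilon$.

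The main obstacle is the careful bookkeeping of open versus closed balls and ensuring each semi-continuity estimate is taken in the correct direction, particularly when $\mu$ has atoms at $x^*$ or positive mass concentrated on spheres about $x^*$. The argument closes because $r \mapsto \mu(B(x, r))$ is non-decreasing and right-continuous in $r$, so its discontinuities form a countable set of radii that can be side-stepped by infinitesimal perturbation, and compactness of $\X$ upgrades the pointwise bound $r_p^\epsilon(x)/r_p(x) > 1$ to a single uniform constant $c_p^\epsilon > 1$.
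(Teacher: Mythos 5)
Your main case does not actually close, and the failure point is exactly the open-versus-closed-ball issue you flag as ``bookkeeping.'' What your limiting argument yields is $\mu(B(x^*,r)) - \mu(B^\circ(x^*,r^*)) \geq \epsilon$ for all $r > r^*$, hence that the sphere $S = \{x' : \rho(x^*,x') = r^*\}$ carries mass at least $\epsilon$. But $r_p^\epsilon(x^*)$ is defined by subtracting the \emph{closed} ball $B(x^*,r_p(x^*))$, and lower semicontinuity only gives $r_p(x^*) \leq r^*$. In the perfectly possible sub-case $r_p(x^*) = r^*$ the heavy sphere already lies inside $B(x^*,r_p(x^*))$, so $\mu(S)\geq\epsilon$ is compatible with $r_p^\epsilon(x^*) > r_p(x^*)$; in the sub-case $r_p(x^*) < r^*$ you only get $r_p(x^*) < r_p^\epsilon(x^*) \leq r^*$. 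Neither is a contradiction. Concretely, take $\mu = \tfrac12\delta_{x^*} + \tfrac12\,\mathrm{Unif}(S(x^*,1))$ in $\R^2$, $p=0.6$, $\epsilon=0.05$: for $x$ at distance $t$ from $x^*$ one computes $r_p(x)=\sqrt{1+t^2-2t\cos(0.2\pi)}$ and $r_p^\epsilon(x)=\sqrt{1+t^2-2t\cos(0.3\pi)}$, whose ratio tends to $1$ as $t\to 0$, even though $r_p^\epsilon(x^*)>r_p(x^*)=1$. So the scenario you are trying to refute is realizable once a sphere carries mass $\geq\epsilon$, and no semicontinuity bookkeeping alone can finish the argument; you would need an extra hypothesis such as ``no sphere has $\mu$-mass $\geq \epsilon$,'' under which your endpoint $\mu(S)\geq\epsilon$ would itself be the contradiction. (A smaller imprecision: in the degenerate case, bounding $\mu(B(x_n,\delta/2))-\mu(B(x_n,r_p(x_n)))$ by $\epsilon/2$ implicitly uses $\mu(B(x_n,r_p(x_n)))\geq\mu(\{x^*\})$, which requires the extra observation that $x^*$ eventually lies in $B(x_n,r_p(x_n))$ — otherwise mass $\geq p$ would sit in punctured balls shrinking to $x^*$, which is impossible.)

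For comparison, the paper argues directly rather than by contradiction: at each $x_0$ it shows $r_p^\epsilon(x_0) > r_p(x_0)$ and $s_p^\epsilon(x_0) < r_p(x_0)$ (an analogous inner radius), then a triangle-inequality computation gives a uniform bound $r_p^{2\epsilon}(x)/r_p(x) \geq c(x_0) > 1$ on a small ball around $x_0$, and compactness supplies a finite subcover, the relaxation from $\epsilon$ to $2\epsilon$ being absorbed because $\epsilon$ is arbitrary; limits of $r_p$ and $r_p^\epsilon$ along sequences are never taken. Note that the assertion $s_p^\epsilon(x_0) < r_p(x_0)$, stated there without proof, is precisely a no-heavy-sphere condition (left-continuity of $r\mapsto\mu(B(x_0,r))$ at $r_p(x_0)$ up to mass $\epsilon$), i.e.\ the same assumption your approach is missing. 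So your ``main obstacle'' is a genuine obstruction for the statement as written rather than a presentational slip, and as written your final sentence asserts a contradiction that does not follow from what you have derived.
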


\begin{proof}
The basic idea is to use the fact that $\X$ is compact. Our strategy will be to analyze the behavior of $\frac{r_p^{\epsilon}(x)}{r_p(x)}$ over small balls $B(x_0, r)$ centered around some fixed $x_0$, and then use compactness to pick some finite set of balls $B(x_0, r)$. This must be done carefully because the function $x \to \frac{r_p^\epsilon(x)}{r_p(x)}$ is not necessarily continuous. 

Fix any $x_0 \in \X$. First, observe that $r_p^{\epsilon}(x_0) > r_p(x_0)$. This is because $B(x_0, r_p(x_0)) = \cap_{r > r_p(x_0)} B(x_0,r)$, and consequently $\lim_{r \downarrow r_p(x_0)} \mu(B(x_0, r)) = \mu(B(x_0, r_p(x_))).$ 

Next, define $$s_p^\epsilon(x) = \inf\{r: \mu(B(x, r_p(x)) - \mu(B(x,r)) \leq \epsilon\}.$$ We can similarly show that $r_p(x_0) > s_p^{\epsilon}(x_0)$. 

Finally, define $$r_0 = \frac{1}{3}\min(r_p^{\epsilon}(x_0) - r_p(x_0), r_p(x_0) - s_p^{\epsilon}(x_0)).$$ Consider any $x \in B^o(x_0, r_0)$ where $B^o$ denotes the open ball, and let $\alpha = \d(x_0, x)$. Then we have the following. 
\begin{enumerate}
	\item $r_p(x) \leq r_p(x_0) + \alpha$. This holds because $B(x, r_p(x_0) + \alpha)$ contains $B(x_0, r_p(x_0))$, which has probability mass at least $p$. 
	\item $r_p(x) \geq r_p(x_0) - \alpha$. This holds because if $r_p(x) < r_p(x_0) - \alpha$, then there would exists $r < r_p(x_0)$ such that $\mu(B(x_0, r)) \geq p$ which is a contradiction.
	\item $B(x_0, s_p^{\epsilon}(x_0)) \subset B(x, r_p(x)).$ This is just a consequence of the definition of $r_0$ and the previous observation.
\end{enumerate}
By the definitions of $r_p^\epsilon$ and $s_p^\epsilon$, we see that $\mu(B(x_0, r_p^{\epsilon}(x_0)) - \mu(B(x_0, s_p^\epsilon(x_0)) \leq 2\epsilon$. By the triangle inequality, $B(x, r_p^{\epsilon}(x_0) - \alpha) \subset B(x_0, r_p^\epsilon(x_0))$ and $B(x_0, s_p^{\epsilon}(x_0)) \subset B(x, r_p(x))$. it follows that $$\mu(B(x, r_p^{\epsilon}(x_0) - \alpha)) - \mu(B(x, r_p(x))) \leq 2\epsilon,$$ which implies that $r_p^{2\epsilon}(x) \geq r_p^{\epsilon}(x_0) - \alpha$. Therefore we have the for all $x \in B(x_0, r_0)$, $$\frac{r_p^{2\epsilon}(x) }{r_p(x)} \geq \frac{r_p^{\epsilon}(x_0) - \alpha}{r_p(x_0) + \alpha} \geq \frac{2r_p^\epsilon(x_0) + r_p(x_0)}{r_p^\epsilon(x_0) + 2r_p(x_0)}.$$ Notice that the last expression is a constant that depends only on $x_0$, and moreover, since $r_p^\epsilon(x_0) > r_p(x_0)$, this constant is strictly larger than $1$. Let us denote this as $c(x_0)$. Then we see that $\frac{r_p^{2\epsilon}(x)}{r_p(x)} \geq c(x_0)$ for all $x \in B^o(x_0, r_0)$. 

Finally, observe that $\{B^o(x_0, r_0): x_0 \in \X\}$ forms an open cover of $\X$ and therefore has a finite sub-cover $C$. Therefore, taking $c = \min_{B^o(x_0, r_0) \in C}c(x_0)$, we see that $\frac{r_p^{2\epsilon}(x)}{r_p(x)} \geq c > 1$ for all $x \in \X$. Because $\epsilon$ was arbitrary, the claim holds.
\end{proof}

\begin{lem}\label{cl:kern_radius}
For any $0 < p < 1$, $\lim_{n \to \infty} \E_{S \sim \D^n}[ \sup_{x \in \X} \sum_1^n w_i^S(x)1_{\d(x, x_i) > r_p(x)}] = 0.$ 
\end{lem}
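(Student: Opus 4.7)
The plan is to combine a uniform empirical-process bound on ball counts with the multiplicative gap produced by Lemma~\ref{cl:rad} and the super-polynomial decay implied by condition~3 of Corollary~\ref{cor:kern}. First I would invoke the standard VC bound for closed balls in $\R^d$ (VC-dimension at most $d+2$), exactly as in the $k_n$-nearest-neighbor corollary above. With probability at least $1-\delta$ over $S\sim\D^n$, the empirical counts $N_r(x)=|\{i:\rho(x,x_i)\le r\}|$ satisfy $|N_r(x)/n-\mu(B(x,r))|\le\beta_n$ uniformly in $x\in\X$ and $r\ge0$, with $\beta_n\to 0$. On this event, the denominator of the kernel weights is bounded below by
\[
\sum_{j=1}^n K\!\bigl(\rho(x,x_j)/h_n\bigr)\;\ge\;N_{r_p(x)}(x)\cdot K\!\bigl(r_p(x)/h_n\bigr)\;\ge\;n(p-\beta_n)\,K\!\bigl(r_p(x)/h_n\bigr),
\]
uniformly in $x\in\X$, using the fact that $K$ is decreasing.

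Next, fix $\epsilon>0$ and apply Lemma~\ref{cl:rad} to obtain $c=c_p^{\epsilon}>1$ with $r_p^{\epsilon}(x)\ge c\,r_p(x)$ for all $x\in\X$. I split the numerator sum $\sum_i K(\rho(x,x_i)/h_n)\mathbf{1}_{\rho(x,x_i)>r_p(x)}$ into a \emph{middle band} $r_p(x)<\rho(x,x_i)\le r_p^{\epsilon}(x)$ and a \emph{tail} $\rho(x,x_i)>r_p^{\epsilon}(x)$. By the definition of $r_p^{\epsilon}$ and the VC event, the middle band contains at most $n(\epsilon+2\beta_n)$ samples, each contributing at most $K(r_p(x)/h_n)$; the tail contains at most $n$ samples, each contributing at most $K(c\,r_p(x)/h_n)$ by monotonicity. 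Dividing by the denominator bound above yields
\[
\sum_{i=1}^n w_i^S(x)\mathbf{1}_{\rho(x,x_i)>r_p(x)}\;\le\;\frac{\epsilon+2\beta_n}{p-\beta_n}\;+\;\frac{1}{p-\beta_n}\cdot\frac{K(c\,r_p(x)/h_n)}{K(r_p(x)/h_n)}.
\]

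The main obstacle is bounding the supremum of the last ratio over $x\in\X$. Pointwise, condition~3 gives $K(cy)/K(y)\to 0$ as $y\to\infty$, which handles any $x$ with $r_p(x)$ bounded away from zero (since $h_n\to0$). The delicate case is $x\in\X$ with $r_p(x)$ very small or even zero: here the ratio need not be small, but the denominator lower bound uses $K(r_p(x)/h_n)$ which is close to $K(0)$, and condition~3 implies that $K$ decays faster than any polynomial, so the tail contribution $nK(r_p^{\epsilon}(x)/h_n)$ must be bounded using the absolute size of $K(r_p^{\epsilon}(x)/h_n)$ together with the integrability of $K$ (condition~1), rather than via the ratio. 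My plan here is to split $\X$ into the subset where $r_p(x)\ge h_n^{1/2}$ (where the ratio argument applies uniformly, using compactness to promote the pointwise condition~3 to a uniform bound over a compact range of $y$'s) and the complementary subset where $r_p(x)<h_n^{1/2}$ (where $N_{r_p(x)}(x)\ge n(p-\beta_n)$ many samples lie in a ball of radius $o(h_n)$, so all of their kernel values are essentially $K(0)$, while tail samples beyond $h_n^{1/2}$ contribute at most $nK(h_n^{-1/2})$, which is $o(nK(0))$ by the super-polynomial decay of $K$).

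Finally, I would assemble the two regimes and let $\delta\to 0$ (to discard the VC failure event, whose probability contributes at most a bounded term times $\delta$ to the expectation since $w_i^S(x)\mathbf{1}_{\cdot}\in[0,1]$) and then $\epsilon\to 0$, concluding that the expectation of the supremum tends to zero. The hardest technical step is the uniform control of the kernel ratio in the small-$r_p$ regime; everything else is a routine combination of VC concentration, Lemma~\ref{cl:rad}, and the decay conditions on $K$.
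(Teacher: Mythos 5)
Your main line of argument is the same as the paper's: a uniform VC bound for closed balls, the denominator lower bound $n(p-\beta_n)K(r_p(x)/h_n)$ coming from the sample points inside $B(x,r_p(x))$, a split of the numerator at $r_p^\epsilon(x)$ with the middle band controlled by its probability mass and the tail controlled through the multiplicative gap of Lemma~\ref{cl:rad} together with condition~3 of Corollary~\ref{cor:kern}; your displayed bound is the paper's $\frac{\delta+3\epsilon}{p-\epsilon}$ in slightly different notation.

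The divergence is in how you make the kernel-ratio bound uniform over $x$, and there your argument has a genuine flaw. In your second regime, $r_p(x)<h_n^{1/2}$, the claims do not hold: $r_p(x)<h_n^{1/2}$ does not put the denominator points in a ball of radius $o(h_n)$ (on the contrary, $h_n^{1/2}/h_n\to\infty$), so their kernel values need not be ``essentially $K(0)$'' --- for $r_p(x)=h_n^{3/4}$ they can be as small as $K(h_n^{-1/4})$. Consequently comparing the tail contribution $nK(h_n^{-1/2})$ with $nK(0)$ proves nothing; the relevant comparison is with the denominator $n(p-\beta_n)K(r_p(x)/h_n)$, which for $r_p(x)$ just below $h_n^{1/2}$ is itself of order $nK(h_n^{-1/2})$, leaving a weight bound of order $1/p$ rather than $o(1)$. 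The detour is also unnecessary, because $r_p$ is a fixed, $n$-independent function: under the assumptions the paper invokes ($\X$ compact and $x\mapsto r_p(x)$ continuous, implicitly positive), $r_{\min}:=\inf_{x\in\X}r_p(x)>0$, hence $r_p(x)/h_n\ge r_{\min}/h_n\to\infty$ uniformly in $x$, and condition~3, being a limit statement (so uniform over all arguments beyond some threshold), yields $\sup_{x\in\X}K(c_p^\epsilon r_p(x)/h_n)/K(r_p(x)/h_n)<\delta$ for all large $n$ --- this is exactly the paper's equation~(\ref{eqn:ratio}). In particular your ``delicate'' regime is empty once $h_n^{1/2}<r_{\min}$, and the genuinely problematic situation $\inf_{\X}r_p=0$ (e.g.\ an atom of mass at least $p$) is excluded by that same continuity/positivity assumption; your case split does not repair it either. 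Deleting the second regime and replacing it by this observation makes your proof coincide with the paper's.
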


\begin{proof}
Fix $p > 0$, and fix any $\epsilon, \delta > 0$. Pick $n$ sufficiently large so that the following hold.
\begin{enumerate}
	\item Let $c_p^\epsilon$ be as defined from Lemma \ref{cl:rad}. \begin{equation}\label{eqn:ratio}\sup_{x \in \X} \frac{K(c_p^\epsilon r_p(x)/h_n)}{K(r_p(x) / h_n)} < \delta.\end{equation} This is possible because of conditions 2. and 3. of Corollary \ref{cor:kern}, and because the function $x \to r_p(x)$ is continuous.
	\item With probability at least $1 - \delta$ over $S \sim \D^n$, for all $r > 0$, and $x \in \X$, \begin{equation}\label{eqn:uniform_vc}|\mu(B(x,r)) - \frac{1}{n}\sum_1^n 1_{x_i \in B(x, r)}| \leq \epsilon.\end{equation} This is possible because the set of balls $B(x,r)$ has VC dimension at most $d+2$.
\end{enumerate}
We now bound $\E_{S \sim \D^n}[ \sup_{x \in \X} \sum_1^n w_i^S(x)1_{\d(x, x_i) > r_p(x)}]$ by dividing into cases where $S$ satisfies and doesn't satisfy equation \ref{eqn:uniform_vc}. 

Suppose $S$ satisfies equation \ref{eqn:uniform_vc}. By condition 1. of Corollary \ref{cor:kern}, $K$ is decreasing, and by Lemma \ref{cl:rad}, $r_p^\epsilon(x) \geq c_p^\epsilon r_p(x)$. Therefore, we have that for any $x \in \X$,
\begin{equation*}
\begin{split}
\sum_1^n K(\d(x, x_i)/h_n)1_{\d(x, x_i) \geq r_p^\epsilon(x)} &\leq \sum_1^n K(c_p^\epsilon r_p(x)/h_n)\\
&\leq n\delta K(r_p(x)/h_n)),
\end{split}
\end{equation*}
where the second inequality comes from equation \ref{eqn:ratio}. 

Next, by the definition of $r_p^\epsilon(x)$, we have that $\mu(B(x, r_p^\epsilon(x)) - \mu(B(x,r_p(x))) \leq \epsilon$. Therefore, by applying equation \ref{eqn:uniform_vc} two times, we see that for any $x \in \X$ $$\sum_1^n K(\d(x, x_i)/h_n)1_{r_p(x) < \d(x, x_i) \leq r_p^\epsilon(x)} \leq 3n\epsilon K(r_p(x)/h_n).$$ Finally, we have that $$\sum_1^n w_i^S(x) \geq \sum_1^n K(r_p(x)/h_n)1_{\d(x, x_i) \leq r_p(x)} \geq n(p - \epsilon)K(r_p(x)/h_n).$$ Therefore, using all three of our inequalities, we have that for any $x \in \X$
\begin{equation*}
\begin{split}
\sum_1^n w_i^S(x)1_{\d(x, x_i) > r_p(x)} &= \sum_1^n w_i^S(x)1_{\d(x, x_i) > r_p^\epsilon(x)} + \sum_1^n w_i^S(x)1_{r_p^\epsilon \geq \d(x, x_i) > r_p(x)} \\
&= \frac{\sum_1^n K(\d(x, x_i)/h_n)1_{\d(x, x_i) > r_p^\epsilon(x)} + \sum_1^n K(\d(x, x_i)/h_n)1_{r_p^\epsilon \geq \d(x, x_i) > r_p(x)}}{\sum_1^n K(\d(x, x_i)/h_n)} \\
&\leq \frac{ n\delta K(r_p(x)/h_n)) + 3n\epsilon K(r_p(x)/h_n)}{n(p - \epsilon)K(r_p(x)/h_n).} \\
&= \frac{\delta + 3\epsilon}{p - \epsilon}.
\end{split}
\end{equation*}
If $S$ does \textit{not} satisfy equation \ref{eqn:uniform_vc}, then we simply have $\sup_{x \in \X} \sum_1^n w_i^S(x)1_{\d(x, x_i) > r_p(x)} \leq 1$. Combining all of this, we have that 
$$E_{S \sim \D^n} \sum_1^n w_i^S(x)1_{\d(x, x_i) > r_p(x)} \leq \delta(1) + (1-\delta)\frac{\delta + 3\epsilon}{p - \epsilon}.$$ Since $\delta, \epsilon$ can be made arbitrarily small, the result follows. 
\end{proof}

By assumption, $\X$ is compact and therefore has diameter $D < \infty$. Define $$t_ n = \sqrt{n\log nK(\frac{D}{h_n})}\text{ for }1 \leq n < \infty.$$

\begin{lem}\label{cl:kern_tn}
$\lim_{n \to \infty} E_{S \sim D^n}[t_n \sup_{x \in \X} w_i^S(x)] = 0$.
\end{lem}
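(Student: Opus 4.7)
The plan is to exhibit a deterministic pointwise upper bound on $\sup_{x \in \X, 1 \leq i \leq n} w_i^S(x)$ that is small enough for $t_n$ times it to tend to $0$. Since the bound will hold for every $S$, taking expectation is trivial and the claimed convergence will follow.

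First I would bound the numerator and denominator of $w_i^S(x)$ separately. Because $K$ is nonnegative and decreasing (condition 1), each numerator $K(\rho(x,x_i)/h_n)$ is at most $K(0)$. For the denominator, the additional assumption of this subsection gives $U_x \subset \X$ for every $x \in \supp(\mu)$, so in particular $\supp(\mu) \subset \X$, and hence with probability one every training point $x_j$ lies in $\X$. Since $\X$ has diameter $D$, for any $x \in \X$ and any $x_j \in \X$ we have $\rho(x,x_j) \leq D$, and the monotonicity of $K$ gives
\begin{equation*}
K\bigl(\rho(x,x_j)/h_n\bigr) \;\geq\; K(D/h_n).
\end{equation*}
Summing over $j$ yields $\sum_{j=1}^n K(\rho(x,x_j)/h_n) \geq n\,K(D/h_n)$.

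Combining these two bounds gives, uniformly in $x \in \X$ and $i$,
\begin{equation*}
w_i^S(x) \;=\; \frac{K(\rho(x,x_i)/h_n)}{\sum_{j=1}^n K(\rho(x,x_j)/h_n)} \;\leq\; \frac{K(0)}{n\,K(D/h_n)}.
\end{equation*}
Multiplying by $t_n = \sqrt{n\log n\,K(D/h_n)}$ gives
\begin{equation*}
t_n\,\sup_{x \in \X} w_i^S(x) \;\leq\; K(0)\,\sqrt{\frac{\log n}{n\,K(D/h_n)}}.
\end{equation*}
Applying condition 4 of Corollary \ref{cor:kern} with $x = D$ shows that $\frac{n}{\log n} K(D/h_n) \to \infty$, so the right-hand side tends to $0$. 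Since this bound is deterministic (it does not depend on $S$), taking $\E_{S \sim \D^n}$ preserves the limit, which is exactly the claim.

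There is essentially no obstacle here: all the work has already been done in setting up condition 4 so that the kernel value at the diameter scale shrinks slower than $\log n / n$. The only subtle point is recognising that the compactness assumption on $\X$ and the inclusion $\supp(\mu) \subset \X$ let us replace $K(\rho(x,x_j)/h_n)$ by the uniform lower bound $K(D/h_n)$, which is precisely what makes the denominator grow linearly in $n$ and overwhelm $K(0)$ in the numerator.
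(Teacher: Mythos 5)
Your proof is correct and follows essentially the same route as the paper: bound each numerator by $K(0)$, bound the denominator below by $n\,K(D/h_n)$ via monotonicity of $K$ and the diameter of $\X$, and conclude with condition 4 applied at $x = D$. Your explicit remark that $\supp(\mu) \subset \X$ justifies the distance bound for the training points is a small clarification the paper leaves implicit, but the argument is the same.
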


\begin{proof}
Because $K$ is a decreasing function, we have that $K(D / h_n) \leq K(\d(x, x_i) / h_n) \leq K(0)$. As a result, we have that for any $x \in \X$, 
\begin{equation*}
\begin{split}
t_n\sup_{1 \leq i \leq n}w_i^S(x) &= \frac{t_n\sup_{1 \leq i \leq n}K(\d(x, x_i)/h_n)}{\sum_1^n K(\d(x, x_i)/h_n)} \\
&\leq \frac{t_n K(0)}{nK(D/h_n)} \\
&= K(0)\sqrt{\frac{n\log nK(D/h_n)}{n^2K(D/h_n)^2}} \\
&= K(0)\sqrt{\frac{\log n}{nK(D/h_n)}}.
\end{split}
\end{equation*}
However, by condition 4. of Corollary \ref{cor:kern}, $\lim_{n \to \infty} \frac{n}{\log n}K(D/h_n) = \infty$. Therefore, since the above inequality holds for all $ x\in \X$, we have that $$\lim_{n \to \infty} E_{S \sim D^n}[t_n \sup_{x \in \X} w_i^S(x)] \leq \lim_{n \to \infty} K(0)\sqrt{\frac{\log n}{nK(D/h_n)}} = 0.$$
\end{proof}

\begin{lem}\label{cl:kern_vc}
$\lim_{n \to \infty} E_{S \sim D^n}\frac{\log T(W,S)}{t_n} = 0$.
\end{lem}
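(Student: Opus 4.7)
The plan is to bound $\log T(W,S)$ uniformly in $S$ by $O(\log n)$ using a VC-dimension argument on balls, and then show the ratio tends to $0$ using condition 4 of Corollary \ref{cor:kern}, which controls how slowly $K(D/h_n)$ decays.

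First I would observe that since $K$ is decreasing (condition 1), the sub-level set $\{i : w_i^S(x) \geq \beta\}$ is precisely $\{i : K(\rho(x,x_i)/h_n) \geq \beta \sum_{j=1}^n K(\rho(x,x_j)/h_n)\}$, which (by monotonicity of $K$) equals $\{i : \rho(x,x_i) \leq \gamma\}$ for some $\gamma = \gamma(x,\beta) \in [0,\infty]$ depending on $x$ and $\beta$. Intersecting with $\{i : \rho(x, x_i) \leq \alpha\}$ gives $W_{x,\alpha,\beta} = \{i : \rho(x, x_i) \leq \min(\alpha, \gamma)\}$, which is the set of indices of training points falling inside a single closed ball centered at $x$. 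Thus every element of $\{W_{x,\alpha,\beta}\}$ is a trace of $\{x_1, \ldots, x_n\}$ on a closed ball in $\R^d$.

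Next, I would invoke the standard fact that closed balls in $\R^d$ have VC dimension at most $d+2$. By Sauer's lemma, the number of distinct traces of closed balls on $n$ points is at most $O(n^{d+2})$, so deterministically $T(W,S) \leq C n^{d+2}$ for some constant $C$, and hence $\log T(W,S) \leq (d+2)\log n + \log C$ for every realization of $S$. This is essentially the same bound already derived in the splitting-number example in the appendix.

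Finally, plugging this into the ratio and using the definition $t_n = \sqrt{n \log n \, K(D/h_n)}$,
\begin{equation*}
\frac{\log T(W,S)}{t_n} \leq \frac{(d+2)\log n + \log C}{\sqrt{n \log n \, K(D/h_n)}} = O\!\left(\sqrt{\frac{\log n}{n\, K(D/h_n)}}\right).
\end{equation*}
Condition 4 of Corollary \ref{cor:kern} (applied with $x = D$) states $\frac{n}{\log n} K(D/h_n) \to \infty$, equivalently $\frac{\log n}{n\, K(D/h_n)} \to 0$. Therefore the deterministic upper bound on $\log T(W,S)/t_n$ tends to $0$, and since the bound holds pointwise in $S$ the expectation does too, completing the proof. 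The main (and only) substantive step is the VC/Sauer argument reducing splitting-number sets to traces of balls; everything else is a routine check against the Corollary's hypotheses.
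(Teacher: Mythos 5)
Your proposal is correct and follows essentially the same route as the paper: reduce each $W_{x,\alpha,\beta}$ to the trace of a closed ball on $\{x_1,\dots,x_n\}$ using monotonicity of $K$, bound $T(W,S)$ by $O(n^{d+2})$ via the VC dimension of balls and Sauer's lemma, and then conclude with condition 4 of Corollary \ref{cor:kern} applied at $x = D$ against $t_n = \sqrt{n\log n\,K(D/h_n)}$. No gaps.
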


\begin{proof}
For $S \sim \D^n$, recall that $T(W, S)$ was defined as $$T(W, S) |\{W_{x, \alpha, \beta}: x \in \X, 0 \leq \alpha, 0 \leq \beta \leq 1\}|,$$ where $W_{x, \alpha, \beta}$ denotes $$W_{x, \alpha, \beta} = \{i: \d(x, x_i) \leq \alpha, w_i^S(x) \geq \beta\}.$$ Our goal will to be upper bound $\log T(W, S)$. 

The key observation is that $W_{x, \alpha, \beta}$ is precisely the set of $x_i$ for which $\d(x, x_i) \leq r$ where $r$ is some threshold. This is because the restriction that $w_i^S(x) \geq \beta$ can be directly translated into $\d(x, x_i) \leq r$ for some value of $r$, as $K$ is a monotonically decreasing function. Thus, $T(W,S)$ is the number of subsets of $S$ that can be obtained by considering the interior of some ball $B(x,r)$ centered at $x$ with radius $r$.

We now observe that the set of closed balls in $\R^d$ has VC-dimension at most $d+2$. Thus by Sauer's lemma, there are at most $O(n^{d+2}$ subsets of $\{x_1, x_2, \dots, x_n\}$ that can be obtained from closed balls. Thus $T(W,S) \leq O(n^{d+2}$. 

Finally, we see that $$\lim_{n \to \infty} \frac{\log T(W,S)}{t_n} = \lim_{n \to \infty} \frac{O(d\log n)}{\sqrt{n\log nK(\frac{D}{h_n})}} \leq \lim_{n \to \infty} \sqrt{\frac{O(d \log n)}{nK(\frac{D}{h_n})}} = 0,$$ with the last equality holding by condition 4. of Corollary \ref{cor:kern}. 
\end{proof}

Finally, we note that Corollary \ref{cor:kern} is an immediate consequences of Lemmas \ref{cl:kern_consistency}, \ref{cl:kern_radius}, \ref{cl:kern_tn}, and \ref{cl:kern_vc}, as we can simply apply Theorem \ref{thm:main}.

\section{Useful Technical Definitions and Lemmas}\label{sec:useful_lemmas}

\begin{lem}\label{lem:measure_lemma}
Let $\mu$ be a measure over $\R^d$, and let $\mathcal{A}$ denote a countable collections of measurable sets $A_i$ such that $\mu(\bigcup_{A \in \mathcal{A}} A) < \infty$. Then for all $\epsilon > 0$, there exists a finite subset of $\mathcal{A}$, $\{A_1, \dots, A_m\}$ such that $$\mu(A_1 \cup A_2 \cup \dots \cup A_m) > \mu(\bigcup_{A \in \mathcal{A}} A) - \epsilon.$$ 
\end{lem}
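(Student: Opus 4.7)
The plan is to reduce the statement to the standard continuity-from-below property of measures. Since $\mathcal{A}$ is countable, I would first enumerate its elements as $A_1, A_2, A_3, \ldots$ (if $\mathcal{A}$ is finite, the claim is trivial). Then I would define the increasing sequence of finite unions $B_n = A_1 \cup A_2 \cup \cdots \cup A_n$, so that $B_1 \subseteq B_2 \subseteq \cdots$ and $\bigcup_{n=1}^\infty B_n = \bigcup_{A \in \mathcal{A}} A$.

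The main step is to invoke continuity from below: for any measure $\mu$ and any increasing sequence of measurable sets $B_n$, one has $\lim_{n \to \infty} \mu(B_n) = \mu(\bigcup_n B_n)$. Because we assume $\mu(\bigcup_{A \in \mathcal{A}} A) < \infty$, this limit is finite, so for any $\epsilon > 0$ there exists $m$ with $\mu(B_m) > \mu(\bigcup_{A \in \mathcal{A}} A) - \epsilon$. Taking the finite subcollection $\{A_1, \ldots, A_m\}$ then yields the desired conclusion, since $B_m = A_1 \cup \cdots \cup A_m$.

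There is no real obstacle here: the statement is essentially a repackaging of continuity from below of a measure applied to a countable union, with the finiteness assumption $\mu(\bigcup A) < \infty$ ensuring that the $\epsilon$-approximation is meaningful. The only minor subtlety is handling the trivial case where $\mathcal{A}$ is finite (take all of $\mathcal{A}$) or where $\mathcal{A}$ is empty (take $m = 0$ with the convention that an empty union has measure zero), but these can be dispatched in a single sentence.
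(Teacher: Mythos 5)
Your proof is correct and is essentially the argument the paper has in mind: the paper simply asserts the lemma ``follows directly from the definition of a measure,'' and your enumeration of $\mathcal{A}$, the increasing finite unions $B_n$, and continuity from below is precisely the standard way to fill in that one-line justification. No gaps; the trivial finite/empty cases you mention are handled exactly as you describe.
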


\begin{proof}
Follows directly from the definition of a measure.
\end{proof}

\subsection{The support of a distribution}\label{sec:distribution_details}

Let $\mu$ be a probability measure over $\R^d$.

\begin{defn}
The \textbf{support} of $\mu$, $\supp(\mu)$, is defined as all $x \in \R^d$ such that for all $r > 0$, $\mu(B(x, r)) > 0$. 
\end{defn}

From this definition, we can show that $supp(\mu)$ is closed.

\begin{lem}
$supp(\mu)$ is closed.
\end{lem}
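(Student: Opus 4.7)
The plan is to show that the complement $\R^d \setminus \supp(\mu)$ is open, which directly gives that $\supp(\mu)$ is closed. First I would take an arbitrary $x \notin \supp(\mu)$; by negating the definition, there exists some radius $r > 0$ such that $\mu(B(x, r)) = 0$. The goal is then to exhibit an open neighborhood of $x$ entirely contained in the complement of the support.

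Next I would argue that $B(x, r/2)$ is such a neighborhood. For any $y \in B(x, r/2)$, the triangle inequality gives $B(y, r/2) \subseteq B(x, r)$, so by monotonicity of the measure, $\mu(B(y, r/2)) \leq \mu(B(x, r)) = 0$. Since we have exhibited one radius (namely $r/2$) for which $y$'s ball has measure zero, the definition of support is violated at $y$, i.e.\ $y \notin \supp(\mu)$.

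This shows $B(x, r/2) \subseteq \R^d \setminus \supp(\mu)$, so every point of the complement has an open ball contained in the complement, proving the complement is open and therefore $\supp(\mu)$ is closed. The only potential subtlety is whether the balls $B(\cdot, \cdot)$ in the definition are open or closed, but either way the triangle-inequality containment argument goes through with at most a cosmetic adjustment of the radius (e.g., replacing $r/2$ with any value strictly less than $r$), so no real obstacle arises. The proof is essentially a one-line application of the triangle inequality and monotonicity of $\mu$.
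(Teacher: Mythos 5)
Your proof is correct and is essentially the same elementary argument as the paper's: both rest on the triangle inequality plus monotonicity of $\mu$, with the paper showing every adherent point of $\supp(\mu)$ lies in $\supp(\mu)$ while you show the complement is open, which is just the dual phrasing. Your handling of the open-versus-closed-ball subtlety is also fine.
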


\begin{proof}
Let $x$ be a point such that $B(x, r) \cap \supp(\mu) \neq \emptyset$ for all $r > 0$. It suffices to show that $x \in supp(\mu)$, as this will imply closure. 

Let $x$ be such a point, and fix $r > 0$. Then there exists $x' \in B(x, r/2)$ such that $x' \in supp(\mu)$. By definition, we see that $\mu(B(x', r/3)) > 0$. However, $B(x', r/3) \subset B(x, r)$ by the triangle inequality. it follows that $\mu(B(x,r)) > 0$. Since $r$ was arbitrary, it follows that $x \in supp(\mu)$.
\end{proof}

\section{Experiment Details}\label{sec:experiment_details}

\begin{figure}
    \centering
        \includegraphics[scale=0.34] {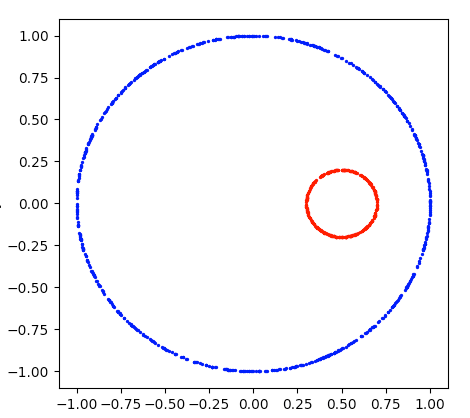}
    \caption{Our data distribution $\D = (\mu, \eta)$ with $\mu^+$ shown in blue and $\mu^-$ shown in red. Observe that this simple distribution captures varying distances between the red and blue regions, which necessitates having varying sizes for robustness regions. }
    \label{fig:distribution}
\end{figure}

\paragraph{Data Distribution} Our data distribution $\D = (\mu, \eta)$ is over $\R^2 \times \{\pm 1\}$, and is defined as follows. We let $\mu^+$ consist of a uniform distribution over the circle $x^2 + y^2 = 1$, and $\mu^-$ consist of the uniform distribution over the circle $(x-0.5)^2 + y^2 = 0.04$. The two distributions are weighted so that we draw a point from $\mu^+$ with probability 0.7, and $\mu^-$ with probability $0.3$. Finally, we utilize label noise $0.2$ meaning that the label $y$ matches that given by the Bayes optimal with probability $0.2$. In summary, $\D$ can be described with the following 4 cases:
\begin{enumerate}
	\item With probability $0.7 \times 0.8$, we select $(x,y)$ with $x \in \mu^+$ and $y = +1$.
	\item With probability $0.7 \times 0.2$, we select $(x,y)$ with $x \in \mu^+$ and $y = -1$. 
	\item With probability $0.3 \times 0.8$, we select $(x,y)$ with $x \in \mu^-$ and $y = -1$.
	\item With probability $0.3 \times 0.2$, we select $(x,y)$ with $x \in \mu^-$ and $y = +1$.
\end{enumerate}
We also include a drawing (Figure \ref{fig:distribution}) of the support of $\D$, with the positive portion $\mu^+$ shown in blue and the negative portion, $\mu^-$ shown in red. 

\paragraph{Computing Robustness Regions} 

Recall that in order to measure robustness, we utilize the so-called partial \natural\emph{ }regions $V_x^\kappa$ (Definition \ref{def:partial_nat_region}) for varying values of $\kappa$. In the case of our data distribution $\D$, $V_x^\kappa$ consists of points closer to $x$ by a factor of $\kappa$ than they are to $\mu^-$ (resp. $\mu^+$) when $x \in \mu^+$ (resp. $\mu^-$). To represent a region $V_x^\kappa$, we simply use a function $f$ that verifies whether a given point $x' \in V_x^\kappa$. While this methodology is not sufficient for training general classifiers (for a whole litany of reasons: to begin with it assumes full knowledge of the distribution), it will suffice for our toy synthetic experiments.

\paragraph{Trained Classifiers} We train two classifiers, both of which are kernel classifiers. 

The first classifier is an exponential kernel classifier with bandwidth function $h_n = \frac{1}{10\sqrt{\log n}}$ and kernel function $K(x) = e^{-x}$. 

The second classifier is a polynomial kernel classifier with bandwidth function $h_n = \frac{1}{10n^{1/3}}$ and kernel function $K(x) = \frac{1}{1 + x^2}$. 

Both of these kernels are regular kernels, and both bandwidths satisfy sufficient conditions for consistency with respect to accuracy.  In other words, both of these classifiers will converge towards the accuracy of the Bayes optimal.

However, the first classifier is selected to satisfy the criterion of Corollary \ref{cor:kern}, whereas the second is not. This distinction is reflected in our experiments.

\paragraph{Verifying Robustness} To verify the robustness of classifier $f$ at point $x$ (with respect to $V_x^\kappa$), we simply do a grid search with grid parameter 0.01. We grid the entire regions into points with distance at most $0.01$ between them, and then verify that $f$ has the desired value at all of those points. To ensure proper robustness, we also simply verify that $f$ cannot change enough within a distance of $0.01$ by constructing an upper bound on how much $f$ can possibly change. For kernel classifiers, this is simple to do as there is a relatively straightforward upper bound on the gradient of a Kernel classifier.

\end{document}